\definecolor{juliablue}{rgb}{0.251,0.388,0.847}
\definecolor{juliagreen}{rgb}{0.220,0.596,0.149}
\definecolor{juliapurple}{rgb}{0.584,0.345,0.698}
\definecolor{juliared}{rgb}{0.796,0.235,0.200}
\newcommand{\ie}{\textit{i.e.},\xspace}
\newif\ifshort
\newcommand{\one}{\mathbf{1}}
\newcommand{\oneindicator}{\mathbbm{1}}
\DeclareMathOperator{\conv}{conv}
\DeclareMathOperator*{\argmin}{argmin}
\DeclareMathOperator*{\argmax}{argmax}
\DeclareMathOperator{\dom}{dom}
\DeclareMathOperator*{\inte}{int}
\DeclareMathOperator*{\relint}{rel\,int}
\DeclareMathOperator*{\cl}{cl}
\DeclareMathOperator*{\aff}{aff}
\DeclareMathOperator*{\vect}{span}
\DeclareMathOperator*{\im}{Im}
\DeclareMathOperator{\bdry}{bdry}
\newcommand{\bfr}{\mathbf{r}}
\newcommand{\bfx}{\mathbf{x}}
\newcommand{\bfxi}{\boldsymbol{\xi}}
\newcommand{\bfy}{\mathbf{y}}
\newcommand{\bbC}{\mathbb{C}}
\newcommand{\bbE}{\mathbb{E}}
\newcommand{\bbI}{\mathbb{I}}
\newcommand{\bbN}{\mathbb{N}}
\newcommand{\bbP}{\mathbb{P}}
\newcommand{\bbR}{\mathbb{R}}
\newcommand{\bfE}{\mathbf{E}}
\newcommand{\bfZ}{\mathbf{Z}}
\newcommand{\calC}{\mathcal{C}}
\newcommand{\calD}{\mathcal{D}}
\newcommand{\calH}{\mathcal{H}}
\newcommand{\calL}{\mathcal{L}}
\newcommand{\calM}{\mathcal{M}}
\newcommand{\calN}{\mathcal{N}}
\newcommand{\calO}{\mathcal{O}}
\newcommand{\calP}{\mathcal{P}}
\newcommand{\calR}{\mathcal{R}}
\newcommand{\calS}{\mathcal{S}}
\newcommand{\calU}{\mathcal{U}}
\newcommand{\calW}{\mathcal{W}}
\newcommand{\calX}{\mathcal{X}}
\newcommand{\calY}{\mathcal{Y}}
\newtheorem{theorem}{Theorem}
\newtheorem{proposition}[theorem]{Proposition}%
\theoremstyle{plain}
\newtheorem{lem}[theorem]{Lemma}
\newtheorem{ass}{Assumption}
\newtheorem{result}{Result}
\newtheorem{example}{Example}%
\newtheorem{remark}{Remark}%
\begin{document}

\title{Primal-dual algorithm for contextual stochastic combinatorial optimization}

\author{Louis Bouvier$^1$, Thibault Prunet$^{2,3}$, Vincent Leclère$^2$, Axel Parmentier$^2$\\[0.5em]
\normalsize
$^1$ Co-innovation lab, ENPC, Institut Polytechnique de Paris, Marne-la-Vallée, France\\
$^2$ CERMICS, CNRS, ENPC, Institut Polytechnique de Paris, Marne-la-Vallée, France\\
$^3$ Université de Bordeaux, CNRS, INRIA, IMB, UMR 5251, Talence, France\\[0.5em]
\small\texttt{\{louis.bouvier,vincent.leclere,axel.parmentier\}@enpc.fr}\\
\small\texttt{thibault.prunet@u-bordeaux.fr}}
\date{\today}

\maketitle

\begin{abstract}
This paper introduces a novel approach to contextual stochastic optimization, integrating operations research and machine learning to address decision-making under uncertainty. Traditional methods often fail to leverage contextual information, which underscores the necessity for new algorithms. In this study, we utilize neural networks with combinatorial optimization layers to encode policies. Our goal is to minimize the empirical cost, which is estimated from past data on uncertain parameters and contexts. To that end, we present a surrogate learning problem and a generic primal-dual algorithm that is applicable to various combinatorial settings in stochastic optimization. Our approach extends classic Fenchel--Young loss results and introduces a new regularization method using sparse perturbations on the distribution simplex. This allows for tractable updates in the original space and can accommodate diverse objective functions.
We establish sublinear convergence for the exact linear-parametric version and provide a bound on the non-optimality of the resulting policy in terms of the empirical cost. Experiments on three contextual stochastic optimization problems show that our algorithm is efficient and scalable, achieving performance comparable to state-of-the-art baselines with significantly reduced computational requirements.
\end{abstract}

\noindent\textbf{Keywords:} contextual stochastic combinatorial optimization, empirical cost minimization, neural networks with combinatorial optimization layers, alternating minimization, Fenchel--Young loss

\section{Introduction}\label{sec:intro}

    Consider a decision maker whose choice is affected by some random noise $\bfxi \in \Xi$.
    The decision maker does not know $\bfxi$ when he takes his decision, but has access to a realization~$x$ of a \emph{context} variable~$\bfx$ correlated to $\bfxi$.
    The context space $\calX$ is the set of all possible context realizations.
    Based on a context realization $x \in \calX$, the decision maker takes a decision $y$ in $\calY(x)$.
    To that purpose, he chooses a policy $\pi$ that maps a context realization $x$ to a decision $y \in \calY(x)$. 
    We do not require the policy to be deterministic and can, therefore, see it as a conditional distribution $\pi(y|x)$ over $\calY(x)$ given $x$. 
    Assuming that the policy $\pi$ has to belong to some hypothesis class $\calH$, 
    our  \emph{contextual stochastic optimization} problem \parencite{sadanaSurveyContextualOptimization2024} aims at finding a policy $\pi$ that minimizes the \emph{expected cost} $\calR$, which is the expected cost under $\pi$.
    \begin{equation}
        \label{eq:contextualStochasticOptimization}
        \min_{\pi \in \calH} \calR(\pi) \quad \text{where} \quad \calR(\pi) = \bbE_{(\bfx, \bfxi), \bfy \sim \pi(\cdot|\bfx)}\big[c(\bfx, \bfy,\bfxi)\big].
    \end{equation}
    
    The expectation is taken with respect to the
    distribution over $(\bfx,\bfy,\bfxi)$ that derives from the 
    joint distribution over $(\bfx,\bfxi)$ and the policy $\pi$.
    Since the decision maker does not have access to~$\bfxi$, the decision $\bfy$ is independent of $\bfxi$ given the context $\bfx$.
    In many situations, the noise $\xi$ is observed once the decision has been taken, and the training set comes from historical data; we thus place ourselves in a \emph{learning} setting. 
    \begin{ass}\label{ass:learning_dataset}
        We do not know the joint distribution over $(\bfx,\bfxi)$. But 
    we have access to a training set $(x_1,\xi_1),\ldots,(x_N, \xi_N)$ of independent samples of $(\bfx,\bfxi)$.
    \end{ass}
    
    In this work, we focus on the combinatorial case where, for any context realization $x \in \calX$, the set of admissible decisions $\calY(x)$ is finite but potentially combinatorially large, as formalized in the following assumption that holds throughout the paper.

    \begin{ass}
    \label{rem:exposed_vertices}
    For every possible context $x$, the set of admissible decisions ${\calY(x)\subset \bbR^{d(x)}}$ is finite.
    Further, we assume that $\calY(x)$ is the set of extreme points of its convex envelope $\calC(x)= \conv \left(\calY(x)\right)$.
    \end{ass}
    {As $\calY(x)$ is the set of extreme points of a polytope, for any $\bar y$ in $\calY(x)$, there exists a $\theta \in \bbR^{d(x)}$ such that $\bar y$ is the unique argmax of $\max_{y\in \calY(x)}\langle\theta | y\rangle$, which allows us to build policies based on linear optimizers. 
    Assumption~\ref{rem:exposed_vertices} may seem restrictive at first glance, but the class of problems satisfying the conditions is actually very large. In particular, it includes all 0-1 optimization problems, which are ubiquitous in mathematical programming applications.
    We also emphasize that the set of feasible decisions $\calY(x)$ is context-dependent, a feature that few contextual stochastic methods can handle.}

    \paragraph{Stochastic optimization policies.}
    For a stochastic optimization approach, one would typically build a policy $\pi$ by solving the stochastic optimization problem that arises by taking the conditional expectation over $\bfxi$ given $\bfx = x$.
    \begin{equation}\label{eq:conditionalSto}
        \min_{y \in \calY(x)}\bbE_{\bfxi}\big[c(\bfx, y,\bfxi)\big| \bfx = x\big].
    \end{equation}
    Practical approaches typically solve a sample average approximation of Equation~\eqref{eq:conditionalSto}. 
    Decomposition-coordination methods such as progressive hedging  \parencite{RW1991}
    solve thousands of instances of a deterministic (single scenario) problem of the form
    \begin{equation}
        \label{eq:DeterministicProblem}
     \min_{y\in \calY(x)} c\big(x(\omega) , y,\xi(\omega)\big) + \langle \theta| y \rangle,
    \end{equation}
    where $\theta$ is a dual vector, such as a vector of Lagrange multipliers.
    Our combinatorial and large dimensional setting in~$\bbR^{d(x)}$ brings two challenges.
    First, we do not know the distribution over $(\bfx,\bfxi)$. We may learn a model, but large dimensional $\bfx$ and $\bfxi$ require a large training set, which we do not always have in industrial settings.
    Second, the computational burden required by such algorithms becomes significant and prevents them from being applied online in a contextual setting, where the computing time is limited.
    
    \paragraph{Parametrized family of policies based on a combinatorial optimization layer.}
    We therefore propose a change of paradigm.
    We instead define a hypothesis class $\calH_\calW$ of policies $\pi_w$ parameterized by $w$ in $\calW$.
    Policies in $\calH_\calW$ are chosen to be fast enough to be used online.

    Working with a combinatorial solution space~$\calY(x)$ makes the choice of $\pi$ challenging. 
    Indeed, there are few statistically relevant, while computationally tractable, models from a combinatorial set to another.
    We rely on a combinatorial optimization (CO) layer to build such policies. 
    We build upon recent contributions \parencite{blondelLearningFenchelYoungLosses2020a, berthetLearningDifferentiablePertubed2020, dalleLearningCombinatorialOptimization2022, parmentierLearningApproximateIndustrial2022a}
    that derive from the following regularized linear optimization problem 
    \begin{equation}\label{eq:COlayer}
        \max_{y \in \calC(x)} \langle \theta| y \rangle - \Omega_{\calC(x)}(y),
    \end{equation}
    a conditional distribution $ p_{\Omega_{\calC(x)}}(\cdot|\theta)$ on $\calY(x)$ (see Section~\ref{subsec:minimize_empirical_risk}), where~${\calC(x)=\conv(\calY(x))}$.
    Here, ${\Omega_{\calC(x)} : \dom(\Omega_{\calC(x)}) \rightarrow \bbR}$ is a proper convex lower-semicontinuous regularization function, with ${\calC(x) \subseteq \cl\big(\dom(\Omega_{\calC(x)})\big)}$.
    The simplest such regularization is $\Omega_{\calC(x)} = 0$, in which case we obtain a Dirac on the $\argmax$ (if unique).
    This model is parameterized by $\theta$, the direction of the linear term.
    In our policies, we use a statistical model $\varphi_w$, typically a neural network parameterized by $w \in \calW \subseteq \bbR^{n_w}$ to predict $\theta$ from the context $x$. 
     In other words, we seek policies in the hypothesis class 
    \begin{equation}\label{eq:dl_policy}
        \calH_{\calW}  = \Big\{\pi_w\colon w \in \calW\Big\} \quad \text{where} \quad \pi_w(y|x) = p_{\Omega_{\calC(x)}}\big(y|\varphi_w(x)\big),
    \end{equation}
    where $\varphi_w : x \in  \calX \mapsto \theta \in \bbR^{d(x)}$ is a statistical model.

    \paragraph{Empirical cost minimization.}
    { For such policies to work, we need  a learning algorithm that leverages the training set to find a policy $\pi_w$ in $\calH_\calW$ with a low expected cost $\calR(\pi_w)$}.
    Practically we solve the \emph{empirical cost minimization problem},
    \begin{equation}
        \label{eq:Empiricalrisk}
        \min_{w \in \calW} R_N( \pi_w) \quad \text{where} \quad R_N(\pi_w):= \frac{1}{N}\sum_{i=1}^N \bbE_{\bfy \sim \pi_w(\cdot|x_i)}\Big[c(x_i, \bfy, \xi_i)\Big].
    \end{equation}
    {which is just the empirical version on the training set of~\eqref{eq:conditionalSto} where we restrict ourselves to policies in $\calH_{\calW}$.
    For several regularization functions~$\Omega$, stochastic gradients can be computed, and the problem~\eqref{eq:Empiricalrisk} is amenable to stochastic gradient descent. However, the results obtained tend to be poor as first the gradient estimates are noisy and the objective non-convex~\parencite{dalleLearningCombinatorialOptimization2022}. To that purpose, we introduce a surrogate problem that approximates~\eqref{eq:Empiricalrisk} and for which we can derive a better behaved minimization algorithm.
    This algorithm relies on the following assumption---which is common in (contextual) stochastic optimization---to derive a tractable alternating minimization algorithm.
    \begin{ass}\label{ass:oracle}
        We suppose having a reasonably efficient algorithm to solve the deterministic single scenario problem~\eqref{eq:DeterministicProblem}.
    \end{ass}
    Before introducing our algorithm, let us first focus on the main approach in the literature to train policies~\eqref{eq:dl_policy}:  supervised learning. This will both motivate the use of empirical cost minimization and allow us to introduce a mathematical tool needed to define our surrogate.
    }

    \paragraph{Supervised learning and Fenchel--Young losses.}
    {
    Supervised learning requires a training set $(x_1,\bar y_1),\ldots,(x_N,\bar y_N)$ with a target decision $\bar y_i$ to imitate, and minimizes the expectation of a loss $\calL$ that evaluates how far the prediction $\pi_w(x_i)$ is from the target $\bar y_i$. If using a standard loss such as the squared Euclidean distance, stochastic gradient descent on the supervised learning problem typically suffers from the same non-convexity and noisy gradients.
    Several losses have been proposed to address these issues, including Fenchel--Young losses.}

    Given a regularization function~$\Omega : \bbR^d \rightarrow \bbR\cup\{+\infty\}$,
    the Fenchel--Young loss
     $\calL_\Omega(\theta;\bar y)$ generated by $\Omega$ \parencite{blondelLearningFenchelYoungLosses2020a} is defined over $\dom(\Omega^*) \times \dom(\Omega) $ as
     \begin{align}\label{eq:FYloss_def}
        \calL_\Omega(\theta;\bar y) &:= \Omega^*(\theta) + \Omega(\bar y) - \langle \theta| \bar y \rangle \nonumber\\
        &= \sup_{y \in \dom(\Omega)}\big(\langle \theta | y \rangle - \Omega(y)\big) - \big(\langle \theta | \bar y \rangle - \Omega(\bar y)\big).
    \end{align}
    It measures the difference between the solution~$y_\theta$ of Equation~\eqref{eq:COlayer} and a target $\bar y \in \calY(x) \subset \dom(\Omega)$, as the non-optimality of the target for this problem. 
    Such a loss is typically convex in~$\theta$, nonnegative, and equal to~$0$ if and only if $p_{\Omega}(\cdot|\theta)$ is a Dirac in $\bar y$.
    {Note that $\calL_\Omega(\theta;\bar y)$ is the gap in the Fenchel--Young inequality of convex analysis}.
    During the last few years, they have become the main approach for supervised training of policies of the form~\eqref{eq:dl_policy} as they lead to a tractable {(with low variance pathwise gradient estimates)} and convex learning problem. 
    Under some hypotheses, they happen to coincide with Bregman divergences and are a key element in our expected cost minimization algorithm.

    {
    In our contextual setting, we may lack good targets $\bar y_i$ to imitate. One natural approach under Assumption~\ref{ass:oracle} is to use our deterministic oracle to get an anticipative decision $\bar y_i \in \argmin_{y \in \calY(x_i)}c(x_i,y,\xi_i)$. While the approach has been successful on some problems~\parencite{batyCombinatorialOptimizationEnrichedMachine2024}, it is well known in stochastic optimization that such decisions can be arbitrarily far from the best non-anticipative decisions. Our goal is to provide a better learning approach based on empirical cost minimization for such problems.
    }
    
    \paragraph{Alternating minimization algorithm.}
    
    {We now have all the tools to introduce our \emph{surrogate problem} to Problem~\eqref{eq:Empiricalrisk}}
    \begin{equation}
        \calS\big(w,y_{\otimes}\big) = \frac1N \sum_{i=1}^N c(x_i,y_i,\xi_i) + \kappa \calL_\Omega(\theta_i,y_i), \quad \text{with} \quad \begin{cases}
            \theta_i = \varphi_w(x_i), \\
            y_\otimes= (y_i)_{i \in [N]} \in \calY_\otimes,
        \end{cases}
    \end{equation}
    where $[N]$ denotes the set~$\{1, \ldots, N\}$, $\kappa > 0$ is a positive constant, and $\calL$ is a Fenchel--Young loss
    and 
    \[ \calY_\otimes=\Big\{(y_i)_{i \in [N]}\colon y_i \in \calY(x_i) \text{ for each }i\Big\}. \]
    {When $\kappa \rightarrow \infty$, minimizing over $y_\otimes$ leads to taking $y_i = \argmax \langle \theta_i,y_i\rangle$, and we fall back on our empirical cost minimization problem~\eqref{eq:Empiricalrisk}. When $\kappa$ is finite, we get a relaxation whose error to $\calR(w)$ is in $\frac{1}{\kappa}$.}

    Our learning algorithm minimizes~$\calS\big(w,y_{\otimes}\big)$ using alternating minimization. 
    \begin{subequations}
        \begin{align}
             y_{\otimes}^{(t+1)}&= \argmin_{y_{\otimes}} \calS(w^{(t)},y_{\otimes}),  && \text{(Decomposition)}, \label{eq:intro_deco_step} \\
             w^{(t+1)} &\in \argmin_{w \in \calW} \calS(w,y_{\otimes}^{(t+1)}), && \text{(Coordination)}, \label{eq:intro_step_coord}
        \end{align}
    \end{subequations}
    where $y_{\otimes}^{(t+1)}= (y_i^{(t+1)})_{i \in [N]}$.
    In~\eqref{eq:intro_deco_step}, we do not ask that $y_{\otimes}$ belongs to $\calY_\otimes$ as we optimize in practice on a continuous space that contains $\calY_\otimes$.
    Indeed, to make this algorithm practical on combinatorial spaces, we need to work on the space of distribution over~$\calY(x_i)$, which requires some technical preliminaries.
    We therefore postpone the precise definition to Section~\ref{sec:main_results}.
    Suffice it to say at this point that Step~\eqref{eq:intro_deco_step} decomposes per scenario and requires solving deterministic single-scenario problems of the form~\eqref{eq:DeterministicProblem}, and that the coordination step~\eqref{eq:intro_step_coord} amounts to a supervised learning problem with a Fenchel--Young loss, for which efficient algorithms exist.
    
    \paragraph{Related works.}
    
{
Our approach can be related to the proximal point algorithm (PPA), a classical method for finding zeros of maximal monotone operators popularized by the seminal work of \textcite{rockafellar1976monotone}. \textcite{eckstein1993nonlinear} extended the PPA by replacing the Euclidean distance penalty with a Bregman divergence, giving rise to the \emph{Bregman PPA}, and later refined this framework to allow inexact subproblems and double regularization \parencite{Eckstein2009}. More recently, \textcite{Zaslavski2011} established convergence of the (Bregman) PPA even in the presence of computational errors at each iteration, a setting close to ours since our coordination step is itself solved only approximately by SGD, and \textcite{Jiang2022} combined the Bregman PPA with operator splitting to handle composite objectives, in the same spirit as our decomposition-coordination scheme. We make this connection precise in Section~\ref{sec:main_results}.}

{
We also rely on alternating minimization algorithms for Equations~\eqref{eq:intro_deco_step}-\eqref{eq:intro_step_coord}, where a two-variable function $\phi : (y,z) \mapsto \phi(y,z)$ is iteratively minimized along one of its coordinates while the other is fixed. 
    The convergence of such algorithms is not a new topic, but the wide range of applications in machine learning and signal processing leads to a revived interest in the recent years \parencite{wrightCoordinateDescentAlgorithms2015}. 
    \textcite{beckConvergenceBlockCoordinate2013} were the first to prove a sublinear rate of convergence in a Euclidean setting when $\phi$ is assumed convex and L-smooth. 
    In the present case, our function $\phi$ is neither convex nor L-smooth. 
    A more general setting, without any structural assumption on $Y$, $Z$ nor $\phi$ is studied by \textcite{legerGradientDescentGeneral2023}. 
    They proved the convergence of the alternating minimization algorithm when $\phi$ satisfies the five-point property, that is a non-local inequality involving $\phi$ evaluated at different points. }
    
    {Alternative proof strategies have been developed based on abstract convergence theorems \parencite{attouch_convergence_2013}. In this work, the authors only assume a function to optimize, and a sequence generated by a descent algorithm, without any assumption on the algorithm used to generate the sequence. \textcite{attouch_convergence_2013} prove the convergence of the sequence toward a critical point with finite length, provided that {\em i.} the sequence follows some descent properties, and {\em ii.} the function satisfies the so called {\em Kurdyka–Łojasiewicz (KL)} property, which ensures its variations are tame in the neighborhoods of critical points.
    We leverage this literature in Appendix~\ref{sec:convergenceProof}.}
    
    \paragraph{Contributions and outline.}
    Our main contribution is to introduce an alternating minimization algorithm for contextual stochastic combinatorial optimization, which has several nice properties.
    \begin{enumerate}
        \item 
        This algorithm relies on sampling, stochastic gradient descent, and automatic differentiation to update a model~$\varphi_w$, and is therefore deep learning-compatible. 
        \item It is \emph{generic}, and can be applied to any setting where assumptions~\ref{ass:learning_dataset}-\ref{ass:oracle} hold.
        It notably provides a generic algorithm to train policies based on neural networks with combinatorial optimization layers for contextual stochastic optimization problems. This notably allows us to deal with problems where the set of feasible solutions $\calY(x)$, and even its dimension, depend on $x$, which is a known difficulty for most contextual stochastic optimization methods in the literature.
        \item {
        We bound the difference between the empirical cost of the solution to the surrogate problem and the optimum of the empirical cost, and thus the non-optimality of the policy returned for the initial problem. 
        When $\varphi_w$ is linear in $w$, we prove the convergence of the exact alternating scheme to a stationary point of the surrogate problem, as well as a convergence speed.
        Reformulating this exact scheme as the proximal point algorithm, we identify when it converges to a global optimum and when it might end-up in a local minimum due to non-convexity.}
        \item {Our numerical experiments, using an approximate algorithm, on three different applications show that our algorithm is practically efficient and scalable in the size of the statistical model~$\varphi_w$, the size~$N$ of the training set, and the dimension of the combinatorial optimization problem. The limiting factor is the size of the deterministic instances that can be solved in Equation~\eqref{eq:DeterministicProblem}. In particular, while its strength is to scale to large-dimensional, context-dependent sets~$\calY(x)$, it reaches the performance of state-of-the-art baselines in contextual stochastic optimization on problems whose size is compatible with those baselines, while being orders of magnitude faster.}
    \end{enumerate}
    The key challenge we face in defining practical versions of these algorithms is to develop tractable regularizations on non-full-dimensional polytopes~$\calC(x)$ and on the distribution simplex over~$\calY(x)$.
    These have practical relevance for supervised learning with Fenchel--Young losses beyond our method.
    \begin{enumerate}[resume]
        \item Based on the work of~\textcite{berthetLearningDifferentiablePertubed2020}, we introduce a new sparse regularization by perturbation on the distribution simplex over~$\calY(x)$. 
        This new regularization is perhaps the key element to obtain a tractable and generic learning algorithm for large combinatorial problems. 
        \item We show that structured supervised learning with a Fenchel--Young loss using a generalized linear model can be seen as directly minimizing a single Fenchel--Young loss on the parameter space. Hence, only the projection of the imitated decisions onto the feature space matters.
        \item We highlight several results on Fenchel--Young losses~\parencite{blondelLearningFenchelYoungLosses2020a} on non-full-dimensional polytopes~$\calC(x)$, and on the distribution simplex over~$\calY(x)$.
        We analyze their links with Legendre-type functions, mirror maps and regularizers.
    \end{enumerate}

    The remainder of the paper is organized as follows. Section~\ref{sec:main_results} presents the primal-dual algorithm and its properties, including tractability, a bound on the surrogate error, and a convergence analysis; proofs of the convergence results are gathered in Appendix~\ref{sec:convergenceProof}.
    Section~\ref{sec:new_structured_prediction} introduces two new theoretical contributions: a sparse perturbation directly on the distribution space~$\Delta^{\calY}$ (Section~\ref{sec:structured_perturbation}), and a characterisation of the geometry induced by aggregating Fenchel--Young losses (Section~\ref{sec:loss_parameter_space}); their proofs are in Appendices~\ref{subsec:appendix_structured_perturbation} and~\ref{subsec:appendix_loss_parameter_space}, respectively.
    Section~\ref{sec:computational_exp_primal_dual} details numerical experiments and Section~\ref{sec:conclusion_primal_dual} concludes.
    Appendix~\ref{sec:regularization_on_distributions} gathers the background on regularization on non-full-dimensional spaces used throughout the paper.
    
    \paragraph{General notations.}
    We denote by $\bbR$ the set of real numbers, and by $\bbR_{++}$ the set of positive real numbers. 
    Let $E$ be an Euclidean space, and $\calX \subset E$ be a set. We denote by $\vect(\calX)$ the span of $\calX$, $\aff(\calX)$ its affine hull, $\inte(\calX)$ its interior, $\cl(\calX)$ its closure, $\bdry(\calX)$ its boundary, and $\relint(\calX)$ its relative interior. 
    We introduce $\bbI_\calX$ the indicator function of the set $\calX$, with value $0$ over $\calX$ and $+\infty$ elsewhere. 
    For two sets $\calX_1$ and $\calX_2$, we denote by $\calX_1 \times \calX_2$ their Cartesian product space, and $\calX_1+\calX_2$ their Minkowski sum. 
    In addition, when $\calX_1$ and $\calX_2$ are vector subspaces of $E$, with $\calX_1 \cap \calX_2 = \{0\}$, we have a direct sum written as $\calX_1 \oplus \calX_2$. 
    We extend this notation to $S_1 \oplus S_2$ to denote $\big\{s_1+s_2\colon s_1\in S_1,\,s_2 \in S_2\}$ given two subsets $S_1$ and $S_2$ of $E$ (not necessarily vector spaces) such that $\langle s_1| s_2 \rangle =0$ for any $s_1 \in S_1$ and $s_2\in S_2$.
    By default $\|\cdot\|$ is the euclidean norm. 
    
    For $E$ an Euclidean space with inner product $\langle \cdot | \cdot \rangle$ and associated norm $||\cdot ||$, we denote by $\Gamma_0(E)$ the set of proper lower-semicontinuous (l.s.c.) convex functions from $E$ to $(-\infty, + \infty]$. For a function $\Psi \in \Gamma_0(E)$, we denote by $\dom(\Psi)$ the domain of $\Psi$, by $\argmin \Psi$ and $\argmax \Psi$ the sets of global  minimizers and maximizers of $\Psi$ (possibly empty), by $\Psi^*:E \to (-\infty, +\infty]$ its Fenchel-conjugate function, $ \Psi^*:y\mapsto \sup_{x \in E} \{\langle x| y \rangle - \Psi(x)\}$, and by $\partial \Psi$ its subdifferential, $\partial \Psi: x\mapsto \{g \in E \, | \, \forall y \in E, \langle y-x| g \rangle + \Psi(x) \leq \Psi(y) \}$.
    
    Given $x \in E$, the mapping $\Psi$ is subdifferentiable at $x$ if $\partial \Psi(x) \neq \emptyset$; the elements of $\partial \Psi(x)$ are the subgradients of $\Psi$ at $x$. If $\Psi$ is differentiable at $x$, we name $\nabla \Psi(x)$ the gradient of $\Psi$ at $x$.
    
    {For a continuously differentiable strictly convex function $F:E\mapsto \bar \bbR$ with closed domain, we denote by $D_F$ the Bregman divergence associated with $F$, defined as $D_F(x,y) = F(x) - F(y) - \langle \nabla F(y)| x-y\rangle$ for $x,y \in \dom(F)$.}
    
    \paragraph{Moment polytope.}
    Let $\calY$ be a finite combinatorial set in $\bbR^d$, and ${\calC = \conv(\calY)}$ be its convex hull. We sometimes refer to~$\calC$ as the moment polytope. As stated above, we assume that no element of $\calY$ is a strict convex combination of other elements of $\calY$.
    In other words, $\calY$ is the set of vertices of the polytope~$\calC$. We denote by $H = \aff(\calY)$ the affine hull of $\calY$, and by $V$ the direction of $H$, a sub-vector space in $\bbR^d$. 
    We have the orthogonal sum $\bbR^d = V \oplus V^\perp$, and we denote by $\Pi_V$ the orthogonal projection onto $V$ in $\bbR^d$. We name $Y$ the wide matrix with vectors $y \in \calY$ as columns. 
    
    \paragraph{Distribution polytope.}
    Let $\Delta^{\calY} := \{q \in \bbR^{\calY}, q \geq 0, \sum_{y \in \calY} q_y = 1\}$ be the probability simplex whose vertices are indexed by $\calY$, and $H_{\Delta}$ its affine hull $H_{\Delta} = \aff(\Delta^{\calY})$. We denote by $V_{\Delta}$ the vector subspace (hyperplane) in $\bbR^{\calY}$ that is the direction of $H_{\Delta}$. As previously, we rely on the orthogonal sum $\bbR^{\calY} = V_\Delta \oplus V_\Delta^\perp$, where here $V_\Delta^\perp = \vect(\mathbf{1})$.
    Let $\theta \in \bbR^d$ be a cost vector and $q \in \Delta^{\calY}$ be a probability distribution, then $s_\theta = Y^\top \theta \in \bbR^{\calY}$ is the vector $(y^\top \theta)_{y \in \calY}$, and $\mu_q = Yq = \sum_y q_y y = \bbE(\bfy|q)$ is the moment vector of the random variable $\bfy$ on $\calY$ with distribution $q$. 
    
    \section{Primal-dual algorithm and its properties}\label{sec:main_results}

    {This section presents the main results of the paper on learning structured policies for contextual stochastic combinatorial optimization.
    Section~\ref{subsec:minimize_empirical_risk} defines the policy class~$\pi_w$ and the empirical cost minimization problem.
    Section~\ref{subsec:surrogate_and_guarantee} introduces a tractable surrogate problem and bounds the error incurred by optimizing it in place of the empirical cost.
    Section~\ref{subsec:alternating_min_tractability} presents the alternating minimization algorithm and establishes its tractability via a moment-space reformulation.
    Section~\ref{subsec:convergence} analyzes convergence of the algorithm when $\varphi_w$ is linear in $w$; the supporting proofs are gathered in Appendix~\ref{sec:convergenceProof}.}
    
    \subsection{Policies over combinatorial spaces and empirical cost minimization}
    \label{subsec:minimize_empirical_risk}

    {Let us now formally introduce our policies $\pi_w$, which are based on Legendre-type functions on the probability simplex $\Delta^{\calY(x)}$ over $\calY(x)$.}
    
    \paragraph{\texorpdfstring{Legendre-type functions (\textcite[Section 26]{Rockafellar+1970})}{Legendre-type functions (Rockafellar, 1970)}.}
    A function ${\Psi : \bbR^d \to \bbR\cup\{+\infty\}}$ is \emph{Legendre-type} if it is strictly convex on $\inte(\dom(\Psi))$ and \emph{essentially smooth}, i.e., 
            i) $\inte(\dom(\Psi))$ is non-empty;
            ii) $\Psi$ is differentiable through $\inte(\dom(\Psi))$;
            iii) ${\lim_{\mu \to \bdry(\dom(\Psi))} ||\nabla \Psi(\mu)|| = + \infty}$.
    
    \begin{remark}\label{rem:theorem_rockafellar_legendre}
        Let $\Psi \in \Gamma_0(\bbR^d)$ be a proper convex l.s.c. function with Fenchel conjugate $\Psi^*$. 
    Then $\Psi$ is a convex function of Legendre type if and only if $\Psi^*$ is a convex function of Legendre type. 
      When these conditions hold, the gradient mapping $\nabla \Psi$ is one-to-one from the open convex set~$\inte\big(\dom(\Psi)\big)$ onto the open convex set~$\inte\big(\dom(\Psi^*)\big)$, continuous in both directions, and $\nabla \Psi^* = (\nabla \Psi)^{-1}.$ 
    As a consequence, the gradient of Legendre-type functions can be used as one-to-one mapping from the primal to the dual space.
    \end{remark}
    
    \paragraph{Mapping scores to distributions.}
    Recall that $\Delta^\calY:=\{q\in[0,1]^\calY \;|\; \sum_{y\in\calY} q_y =1 \}$ is the distribution simplex over $\calY$.
    Let $\Omega_{\Delta^{\calY}}$ be a proper l.s.c.~convex function with domain $\Delta^\calY$ whose restriction to $H_\Delta$ (the affine hull of $\Delta^{\calY}$) is Legendre-type.
    Then 
    \[ \nabla \Omega_{\Delta^\calY}^* : s \in \bbR^{\calY} \mapsto \argmax_{q \in \Delta^\calY}\{ s^\top q - \Omega_{\Delta^{\calY}}(q) \}\]
    maps any score vector $s \in \bbR^{\calY}$ to a (unique) probability distribution over $\calY$. We refer to Proposition~\ref{prop:sub_dimensioal_domain_cvx_analysis} for Fenchel duality results that underpin this definition. 
    We call~$\bbR^\calY$ the \emph{score space}, which is nothing but the space of cost functions from $\calY$ to $\bbR$, but seen as vectors indexed by $\calY$. 
    Furthermore, the Fenchel--Young loss associated with such an $\Omega_{\Delta^\calY}$
    \[ \calL_{\Omega_{\Delta^{\calY}}}(s,q) = \Omega_{\Delta^{\calY}}(q) + \Omega_{\Delta^{\calY}}^*(s) - \langle s | q \rangle  \]
    quantifies how far distribution $q \in \Delta^{\calY}$ is from $\nabla\Omega_{\Delta^{\calY}}^*(s) \in \Delta^{\calY}$.
    
    \begin{example}[Negentropy Regularization]\label{ex:negentropy}
The most classic regularization $\Omega_{\Delta^{\calY}}$ is arguably the \emph{negentropy} $\Omega_{\Delta^{\calY}}(q) = \sum_{y \in \calY}q_y \log(q_y) + \bbI_{\Delta^\calY}(q)$. 
In that case, $\nabla \Omega^*_{\Delta^\calY}(s)$ maps the score~$s$ to its \emph{softmax}, which is the exponential family on $\calY$ parameterized by $s$:
\begin{equation}\label{eq:exponentialFamily}
    \nabla \Omega^*_{\Delta^\calY}(s) = \big(e^{s_y- A_{\Delta^{\calY}}(s)}\big)_{y \in \calY}\quad \text{where}\quad A_{\Delta^{\calY}}(s) = \log\Big(\sum_{y' \in \calY}\exp(s_{y'})\Big).
\end{equation}
With this regularization, the Fenchel--Young loss coincides with the Kullback--Leibler divergence $ \calL_{\Omega_{\Delta^{\calY}}}(s,q) = D_{\mathrm{KL}}(q\|\nabla\Omega^*_{\Delta^{\calY}}(s))$ between $q$ and $\nabla\Omega^*_{\Delta^{\calY}}(s)$ \parencite{blondelLearningFenchelYoungLosses2020a}.

When $\calY$ is combinatorially large, exact probability computations become intractable and require approximations such as variational inference or Markov Chain Monte Carlo (MCMC) methods~\parencite{wainwright2008graphical,vivierardisson2025learninglocalsearchmcmc}. Beyond graphical models, negentropy regularization has seen wide application in combinatorial optimization to smooth discrete landscapes and formulate continuous, differentiable approximations. Building upon these concepts, \textcite{mensch2018differentiable} employed entropy-regularized formulations to enable differentiable dynamic programming for structured prediction. More recently, this framework has proven instrumental in advancing learning-based combinatorial solvers: \textcite{sanokowski2023variational} leverage variational annealing on graphs to improve optimization trajectories, and \textcite{chen2025monte} rely on entropy-regularized Monte Carlo policy gradients to reliably navigate binary optimization spaces. Furthermore, these regularization schemes naturally integrate into deep unified architectures that reduce diverse combinatorial problems into matrix-encoded generalizations, ensuring robust exploration and stable convergence \parencite{pan2025unico}.
\end{example}

    \begin{example}[Sparse perturbation]\label{ex:perturbation} In Section~\ref{sec:structured_perturbation}, we extend the work of~\textcite{berthetLearningDifferentiablePertubed2020} and introduce a new regularization function $\Omega_{\Delta^\calY}$ based on the sparse perturbation function ${F_{\varepsilon,\Delta}(s) = \bbE[\max_{y\in \calY}s_y + \varepsilon \bfZ^\top y] = \bbE[\max_{q\in \Delta^{\calY}}(s + \varepsilon Y^\top \bfZ)^\top q]}$ where $\bfZ$ is a random variable, typically a standard Gaussian. 
    More precisely, we define $\Omega_{\Delta^\calY}$ as the Fenchel conjugate of $F_{\varepsilon,\Delta}$.
    This regularization enjoys convenient properties that we describe in Section~\ref{sec:structured_perturbation}. 
    In particular,   
    \[ \nabla \Omega_{\Delta^\calY}^*(s) = \nabla F_{\varepsilon,\Delta}(s) = \bbE[\argmax_{q\in \Delta^{\calY}}(s + \varepsilon Y^\top \bfZ)^\top q], \]
    which allows us to compute stochastic gradients using Monte Carlo approaches by sampling $\bfZ$, which is particularly convenient for the alternating minimization algorithms presented in Section~\ref{subsec:alternating_min_tractability}.
    \end{example}
    
    \paragraph{Policies over combinatorial sets.}
    A policy maps a context value $x \in \calX$ to a distribution over the corresponding combinatorial set $q \in \Delta^{\calY(x)}$. 
    To define such policies, we map a context $x$ to a direction vector $\theta = \varphi_w(x)$; then lift it to the score space $s_\theta = Y(x)^\top \theta = (\langle \theta | y\rangle)_{y \in \calY(x)}$, where $Y(x) = (y)_{y \in \calY(x)}$ is the wide matrix of solutions in $\calY(x)$; and finally to a distribution~$q = \nabla \Omega_{\Delta^\calY(x)}^*(s_\theta)$.
    Thus, the policy parameterized by $w$ is defined as
    \[\pi_w(\cdot|x) = \argmax_{q \in \Delta^{\calY(x)}} \{ \langle \underbrace{Y(x)^\top \overbrace{\varphi_w(x)}^{\theta \in \bbR^{d(x)}}}_{s_\theta \in \bbR^{\calY(x)}} |q \rangle - \Omega_{\Delta^{\calY(x)}}(q) \} = \nabla \Omega_{\Delta^{\calY(x)}}^*\big(Y(x)^\top \varphi_w(x)\big).\]
    This connection is further detailed in Appendix~\ref{subsec:distribution_space}.
    Such a policy depends on the weights $w$ and the choice of the regularization function $\Omega_{\Delta^{\calY(x)}}$.
    We do not explicitly show this second dependency to alleviate notation since the regularization $\Omega_{\Delta^{\calY(x)}}$ is chosen once and for all.

    \subsection{The empirical cost minimization problem and its surrogate}
    \label{subsec:surrogate_and_guarantee}
    \paragraph{The empirical cost minimization problem.} 
    We return to the setting described in Section~\ref{sec:intro} and denote by~$(x, \xi)$ a context--noise pair. Recall that we have access to a dataset~$(x_1, \xi_1), \ldots, (x_N, \xi_N)$ of such pairs.
    Our goal is to find $w$ values that lead to low empirical cost $R_N(\pi_w)$.
    \begin{equation}
    \label{eq:R_N_pi_w}
        \min_{w \in \calW} R_N(\pi_w) = 
        \min_{w \in \calW} \frac{1}{N}\sum_{i=1}^N \bbE_{\bfy \sim \pi_w(\cdot|x_i)}\Big[c(x_i, \bfy, \xi_i)\Big]
    \end{equation}
    With the $\Omega_\Delta$ previously introduced, the empirical cost is differentiable with respect to $w$, and stochastic gradients can be computed using score function estimators \parencite{williamsSimpleStatisticalGradientfollowing1992}.
    We could therefore directly minimize the empirical cost using stochastic gradient descent (SGD). This SGD performs poorly as
    the score function estimator suffers from a high variance, and $R_N$ is highly non-convex as it is a smoothed piecewise constant function~\parencite{parmentier2021learning}.
    We therefore follow a different approach based on (less noisy) pathwise estimators for gradients and a convexified problem.
    
    \paragraph{Reformulation as a linear problem on the distribution space.}
    To that purpose, we reformulate the empirical cost minimization as a linear problem on the distribution space.
    Let ${\gamma(x,\xi) = \big(c(x,y,\xi)\big)_{y\in \calY(x)} \in \bbR^{\calY(x)}}$ be the score vector corresponding to the cost function $c(x, \cdot, \xi)$ on the (finite) combinatorial space $\calY(x)$. 
    In general, $\gamma$ does not belong to the image of $Y(x)^\top$.
    Given a distribution $q \in \Delta^\calY$ on $\calY$, let us define $R_{\Delta}(q;x,\xi)$ as the expected cost under $q$. We can recast it as
    \begin{equation}
    \label{eq:R_Delta}
        R_{\Delta}(q;x,\xi) = \bbE_{\bfy \sim q}\Big[c(x,\bfy,\xi)\Big] = \langle \gamma(x, \xi) | q \rangle.
    \end{equation}
    We can rewrite the empirical cost as 
    $$ 
    \begin{aligned}
    R_N(\pi_w) &= 
        \frac{1}{N}\sum_{i=1}^N \bbE_{\bfy \sim \pi_w(\cdot|x_i)}\Big[c(x_i, \bfy, \xi_i)\Big] \\
        &= 
        \frac1N \sum_{i=1}^N R_\Delta\big( \underbrace{\nabla \Omega^*_{\Delta^{\calY(x_i)}}\big(\overbrace{Y(x_i)^\top\varphi_w(x_i)}^{s_i}\big)}_{\pi_w(\cdot|x_i)}, x_i,\xi_i\big)
    \end{aligned}
        $$
    We introduce the notation~$\calR_{\Omega_\Delta,N}$ for the expected cost as a function of the score $s_{\otimes}$, relying on the following product spaces
    \begin{subequations}
    \begin{align}
        S_{\otimes} &= \{s_\otimes = (s_i)_{i \in [N]} \,|\, \forall i \in [N], s_i \in \bbR^{\calY(x_i)} \},\\ 
        \Delta_{\otimes} &= \{q_\otimes = (q_i)_{i \in [N]} \,|\, \forall i \in [N], q_i \in \Delta^{\calY(x_i)} \}, \\
        \calR_{\Omega_\Delta,N}(s_{\otimes}) &= \frac{1}{N} \sum_{i=1}^N R_{\Delta}(\nabla\Omega_{\Delta^{\calY(x_i)}}^*(s_i);x_i,\xi_i) \quad \text{where} \quad s_{\otimes} \in S_{\otimes}.\label{eq:empirical_risk_product}
    \end{align}
    \end{subequations}
    The expression of $R_N(\pi_w)$  above allows us to rewrite the empirical cost minimization problem~\eqref{eq:Empiricalrisk} as
    \begin{equation}
    \label{eq:dist_regret_min_pb}
    \min_{w \in \calW} R_N(\pi_w) = \min_{w \in \calW} \calR_{\Omega_\Delta,N}\Big(\big(Y(x_i)^\top\varphi_w(x_i)\big)_{i \in [N]}\Big).
    \end{equation}
    
    \paragraph{Surrogate problem.}
    We recall that the Fenchel--Young loss generated by~$\Omega_{\Delta^{\calY(x)}}$ is defined over ${\bbR^{\calY(x)} \times \Delta^{\calY(x)}}$ by~$\calL_{\Omega_{\Delta^{\calY(x)}}}(s;q) = \Omega_{\Delta^{\calY(x)}}(q) + \Omega_{\Delta^{\calY(x)}}^*(s) - \langle s | q \rangle$.
    Let $\kappa >0$ be a positive constant, we introduce the following surrogate functions for a single observation and for the full dataset:
    \begin{subequations}\label{eq:surrogate_risk_delta}
    \begin{align}
            S_{\Omega_\Delta}(s,q;x,\xi) &= \langle\gamma(x,\xi)| q \rangle + \kappa \calL_{\Omega_{\Delta^{\calY(x)}}}\big(s, q\big), \\
            \calS_{\Omega_\Delta, N}\big(s_{\otimes}, q_{\otimes}\big) &= \frac{1}{N} \sum_{i=1}^N S_{\Omega_\Delta}\big(s_i,q_i;x_i,\xi_i\big),
            \label{eq:surrogate_risk_product}
    \end{align}
    \end{subequations}
    where $s_\otimes = (s_i)_{i\in [N]}$, $q_\otimes = (q_i)_{i\in [N]}$, and $\gamma(x,\xi)$ corresponds to the cost vector $(c(x,y,\xi))_{y \in \calY(x)}$.
    Notice that, by the Fenchel--Young inequality, $S_{\Omega_\Delta} \geq R_\Delta$, with equality holding if and only if $s$ and $q$ are a dual pair matching the policy, \ie $q = \nabla\Omega_{\Delta^{\calY(x)}}^*(s)$.
    Instead of directly minimizing the non-convex empirical cost, we aim to solve the following \emph{surrogate learning problem}:
    \begin{equation}\label{eq:surrogate_learning_pb}
        \min_{\substack{w \in \calW,\\ q_{\otimes} \in \Delta_\otimes}} \calS_{\Omega_\Delta, N}\Big(\big(Y(x_i)^\top \varphi_w(x_i)\big)_{i \in [N]}, q_{\otimes}\Big).
    \end{equation}
    
    \paragraph{Bounding the error between the surrogate and the initial problem.}
    We introduce~\eqref{eq:surrogate_learning_pb} for algorithmic reasons. But before diving into algorithms, a natural concern is the quality of this surrogate object for solving the original empirical cost minimization~\eqref{eq:dist_regret_min_pb}. Let us introduce the partial minimizer of the surrogate cost for a given fixed scenario:
    \begin{subequations}
       \begin{align}\label{eq:partial_surrogate}
        \underline{\calS_{\Omega_\Delta}}(\theta;x,\xi) &:= \min_{q \in \Delta^\calY(x)} \calS_{\Omega_\Delta}\big(Y(x)^\top \theta, q;x,\xi\big),\\
        \underline{\calS_{\Omega_\Delta,N}}(\varphi_w) &:= \frac{1}{N} \sum_{i=1}^N \underline{\calS_{\Omega_\Delta}}(\varphi_w(x_i);x_i,\xi_i) \nonumber\\
        &= \min_{q_{\otimes} \in \Delta_\otimes} \calS_{\Omega_\Delta, N}\big((Y(x_i)^\top \varphi_w(x_i))_{i \in [N]}, q_{\otimes}\big),
    \end{align} 
    \end{subequations}

The following theorem shows that the partial surrogate is a lower
approximation of the empirical cost and gives an exact expression of the
approximation error. 
We use the standard notation \(D_F(\cdot \mid \cdot)\)
for the Bregman divergence associated with a differentiable convex function \(F\).

\begin{restatable}{theorem}{thmboundrisk}
\label{thm:bound_risk}
Let \(x\in\calX\), \(\xi\in\Xi\), and \(\theta\in\bbR^{d(x)}\), and set
\(s:=Y(x)^\top\theta\) and \(\gamma:=\gamma(x,\xi)\).
Assume that \(\nabla\Omega_{\Delta^{\calY(x)}}^*\) is
\(1/L_x\)-Lipschitz-continuous. Then
\begin{equation}
\label{eq:bound_risk_single}
0
\leq
R_\Delta\big(
\nabla\Omega_{\Delta^{\calY(x)}}^*(s);x,\xi
\big)
-
\underline{\calS_{\Omega_\Delta}}(\theta;x,\xi)
=
\kappa
D_{\Omega_{\Delta^{\calY(x)}}^*}
\left(
s-\frac{\gamma}{\kappa}\mid s
\right)
\leq
\frac{\|\gamma\|^2}{2L_x\kappa}.
\end{equation}

For the full dataset, let
\(s_i(w):=Y(x_i)^\top\varphi_w(x_i)\),
\(\gamma_i:=\gamma(x_i,\xi_i)\), and
\(\Omega_i:=\Omega_{\Delta^{\calY(x_i)}}\).
If \(\nabla\Omega_i^*\) is \(1/L_i\)-Lipschitz-continuous for every
\(i\in[N]\), then, for every \(w\in\calW\),
\begin{equation}
\label{eq:bound_risk_sum}
0
\leq
\calR_{\Omega_\Delta,N}(\varphi_w)
-
\underline{\calS_{\Omega_\Delta,N}}(\varphi_w)
=
\frac{\kappa}{N}
\sum_{i=1}^N
D_{\Omega_i^*}
\left(
s_i(w)-\frac{\gamma_i}{\kappa}\mid s_i(w)
\right)
\leq
\frac{1}{2\kappa N}
\sum_{i=1}^N
\frac{\|\gamma_i\|^2}{L_i}.
\end{equation}

Finally, suppose that the minima are attained, and let
\(w_{\calS}\in\argmin_{w\in\calW}
\underline{\calS_{\Omega_\Delta,N}}(\varphi_w)\) and
\(w_{\calR}\in\argmin_{w\in\calW}
\calR_{\Omega_\Delta,N}(\varphi_w)\). Then
\begin{equation}
\label{eq:bound_empirical_risk}
\calR_{\Omega_\Delta,N}(\varphi_{w_{\calS}})
-
\calR_{\Omega_\Delta,N}(\varphi_{w_{\calR}})
\leq
\frac{1}{2\kappa N}
\sum_{i=1}^N
\frac{\|\gamma_i\|^2}{L_i}.
\end{equation}
In particular, the right-hand sides of
\eqref{eq:bound_risk_sum} and \eqref{eq:bound_empirical_risk} are bounded
above by
\(\frac{1}{2L\kappa N}\sum_{i=1}^N\|\gamma_i\|^2\), where
\(L:=\min_{i\in[N]}L_i\).
\end{restatable}

The proof is provided in Appendix~\ref{sec:bound_empirical_risk}.

\begin{remark}
Both the Euclidean regularization and the negentropy have
\(1\)-Lipschitz conjugate gradients on the simplex. We show in
Proposition~\ref{prop:strongConvexitySparsePerturbation} that the sparse
perturbation regularization also has a Lipschitz-continuous conjugate
gradient.
\end{remark}

    \subsection{Alternating minimization algorithm and tractability}
    \label{subsec:alternating_min_tractability}

    We propose the following \emph{primal-dual alternating minimization scheme} for Problem~\eqref{eq:surrogate_learning_pb}.
    \begin{subequations}\label{eq:alternating_product}
        \begin{align}
            q_i^{(t+1)} &= \argmin_{q_i \in \Delta^{\calY(x_i)}} S_{\Omega_\Delta}\big(Y(x_i)^\top \varphi_{\bar w^{(t)}}(x_i), q_i; x_i, \xi_i\big),
            \forall i \in [N], && \text{(decomposition)} \label{eq:decomposition_product_w}\\
            \bar w^{(t+1)} &\in \argmin_{w \in \calW} \calS_{\Omega_\Delta, N}\Big(\big(Y(x_i)^\top \varphi_w(x_i)\big)_{i \in [N]},
            q_{\otimes}^{(t+1)}\Big). && \text{(coordination)} \label{eq:coordination_product_w}
        \end{align}
    \end{subequations}
    
        By construction of alternating minimization, the sequence of evaluated surrogate values $\calS_{\Omega_\Delta, N}$ monotonically decreases. 
    To get better guarantees, one needs to assume a generalized linear structure mapping, such as $\varphi_w(x) = \phi(x)^\top w$. But before delving into convergence, let us start with the tractability of the iterates~\eqref{eq:alternating_product}.

    \paragraph{Tractable updates via the moment space.}

    Algorithm~\eqref{eq:alternating_product} looks intractable at first glance, as working directly with full distributions $q_i \in \Delta^{\calY(x_i)}$ is computationally prohibitive for combinatorial $\calY(x_i)$. 
    However, the following results show that we can work with moments instead of full distributions. The proof is given in Appendix~\ref{subsec:tractable_proofs}.
    \begin{restatable}{proposition}{propcomputationsprimaldualdist}\label{prop:computations_primal_dual_dist}
        Let $\mu_i^{(t+1)} = \bfE_{\bfy\sim q_i^{(t+1)}}[\bfy] = Y(x_i)q_i^{(t+1)}$ be the moment of $\bfy_i$ according to $q_i^{(t+1)}$. Given $\bar w^{(t)}$, the next iterate of~\eqref{eq:alternating_product} can be computed through moments:
        \begin{subequations}
            \begin{align}
                \mu_i^{(t+1)} &= \bbE_{\bfy \sim q_i^{(t+1)}}[\bfy],
                 \quad \text{where} \quad
                q_i^{(t+1)}=\nabla \Omega_{\Delta^{\calY(x_i)}}^*\Big(Y(x_i)^\top\varphi_{\bar w^{(t)}}(x_i) - \frac{1}{\kappa}\gamma_i\Big), \label{eq:decomposition_w_dist} \\
                \bar w^{(t+1)} &\in \argmin_{w \in \calW} \frac{1}{N} \sum_{i=1}^N \calL_{\Omega_{\calC(x_i)}}\big(\varphi_w(x_i); \mu_i^{(t+1)}\big), \label{eq:coordination_w_dist}
            \end{align}
            where $\calC(x_i) = \conv(\calY(x_i))$ is the moment polytope and $\Omega_{\calC(x_i)}(\mu) = \min_{q \in \Delta^{\calY(x_i)}}\{\Omega_{\Delta^{\calY(x_i)}}(q) \,|\, Y(x_i)q = \mu\}$.
        \end{subequations}
    \end{restatable}
    Dual coordination~\eqref{eq:coordination_w_dist} reduces to supervised learning over the low-dimensional moment space $\calC$ with $\mu_i^{(t)}$ as targets. It can be solved easily using stochastic gradient descent with well-chosen regularizations~\parencite{blondelLearningFenchelYoungLosses2020a,berthetLearningDifferentiablePertubed2020}.
    Second, $q_i^{(t+1)}$ admits a characterization that makes its moment $\mu_i^{(t+1)} $ tractable for well-chosen regularization $\Omega_{\Delta^{\calY(x)}}$.
    Let us now show that we can compute moments of~$\nabla \Omega_{\Delta^{\calY(x_i)}}^*\Big(Y(x_i)^\top\varphi_{\bar w^{(t)}}(x_i) - \frac{1}{\kappa}\gamma_i\Big)$ for our two main regularization functions.

    \paragraph{Sparse perturbation.}
    Under a sparse perturbation formulation, Monte Carlo estimates of the primal moment can be computed using the deterministic combinatorial oracle.
    \begin{restatable}{proposition}{propprimaldualperturbation}\label{prop:primal_dual_perturbation}
        Let~$\varepsilon>0$ be a positive constant.
        When the regularization functions~$\Omega_{\calC(x)}$ and $\Omega_{\Delta^{\calY(x)}}$ are defined as the Fenchel conjugates of the perturbed maxima~$\Omega_{\calC(x)} := F_{\varepsilon, \calC(x)}^*$ and $\Omega_{\Delta^{\calY(x)}} := F_{\varepsilon, \Delta(x)}^*$ introduced in Equations~\eqref{eq:perturbation_moment}-\eqref{eq:perturbation_distribution} for some $\varepsilon>0$,
        \begin{equation}\label{eq:primalUpdatePerturbation}
            \mu_i^{(t+1)} = \bbE_\bfZ\Big[\argmin_{y \in \calY(x_i)} c(x_i,y,\xi_i) - \kappa\big(\varphi_{\bar w^{(t)}}(x_i) + \varepsilon \bfZ\big)^\top y \Big],
        \end{equation}
        where $\bfZ$ is standard multivariate Gaussian noise.
    \end{restatable}
    The proof is given in Appendix~\ref{subsec:tractable_proofs}. 
    Remark that $\kappa$ and $\varepsilon$ appear only through their product $\kappa\varepsilon$ in~\eqref{eq:primalUpdatePerturbation}; in particular, fixing one and tuning the other explores exactly the same family of algorithm trajectories, in the non-contextual case.

    \paragraph{Negentropic regularization.}
    Under a negentropy formulation, computing $\mu_i^{(t+1)}$ amounts to inference in an exponential family over $\calY$, and we therefore have the following well-known result~\parencite{wainwright2008graphical}.
    \begin{proposition}\label{prop:primal_dual_negentropy}
        When the regularization functions~$\Omega_{\calC(x)}$ and $\Omega_{\Delta^{\calY(x)}}$ are defined using the negentropy $\Omega_{\Delta^{\calY(x)}}(q) = \sum_{y \in \calY(x)} q_y\log(q_y) + \bbI_{\Delta^{\calY(x)}}(q)$, the primal moment update becomes:
        \begin{multline}
            \mu_i^{(t+1)} = \sum_{y \in \calY(x_i)} y \exp\Big( \varphi_{\bar w^{(t)}}(x_i)^\top y - \frac{1}{\kappa} c(x_i,y,\xi_i) \\
            - A_{\Delta^{\calY(x_i)}}\Big(Y(x_i)^\top\varphi_{\bar w^{(t)}}(x_i) - \frac{1}{\kappa}\gamma_i\Big)\Big),
        \end{multline}
        where $A_{\Delta^{\calY(x_i)}}$ is the log-partition function of the corresponding exponential family.
    \end{proposition}
    If the exact inference problem is generally intractable for large $\calY(x_i)$, we can perform approximate inference using Metropolis-Hastings Markov Chain Monte Carlo (MCMC) methods \parencite{wainwright2008graphical}. In practice, sampling this exponential family yields an algorithm closely mirroring classic simulated annealing for the non-perturbed version of~\eqref{eq:primalUpdatePerturbation} \parencite{vivierardisson2025learninglocalsearchmcmc}.
    \subsection{\texorpdfstring{Convergence when $\varphi_w$ is linear in $w$}{Convergence when phi\_w is linear in w}}
    \label{subsec:convergence}
    Our convergence proof does not focus on the non-linearity in the neural network. Let us assume for this subsection that the statistical model is linear, \ie $\varphi_w(x_i) = \phi_i^\top w$ for some matrix $\phi_i$, and $\calW = \bbR^{n_{\calW}}$. 

    \paragraph{\texorpdfstring{Identifiable parameter $\bar w$}{Identifiable parameter w-bar}.}
    Let us start with the orthogonal decomposition of $\calW$ into identifiable parameters and their orthogonal. 
    Let 
    $$ \calM = \left\{ \frac{1}{N} \sum_{i=1}^N \phi_i Y(x_i) q_i \mid q_i \in \Delta^{\calY(x_i)} \right\} = \left\{ \frac{1}{N} \sum_{i=1}^N \phi_i \mu_i \mid \mu_i \in \calC(x_i) \right\}, $$ 
    $H_\calM := \aff(\calM)$ be the affine hull of $\calM$, $\bar \calW$ the direction of $H_\calM$ in $\calW$, and $\bar \calW^{\perp}$ be its orthogonal so that~$\calW = \bar \calW \oplus \bar \calW ^{\perp}$. We define $\bar\calM = \Pi_{\bar \calW}(\calM)$.
    
    \begin{proposition}
    \label{prop:identifiableParameter}
        Let $w\in \calW$, $q_\otimes \in \Delta_{\otimes}$, and $Y_i := Y(x_i)$.
        \begin{enumerate}
            \item For any $i$, the result of our policy depends only on $\Pi_{\bar \calW}(w)$:\\   $\nabla \Omega_{\Delta^{\calY(x_i)}}^*(Y_i^\top \phi_i^\top w) = \nabla \Omega_{\Delta^{\calY(x_i)}}^*(Y_i^\top \phi_i^\top \Pi_{\bar \calW}(w)) $ 
            \item The value of the surrogate depends only on $\Pi_{\bar \calW}(w)$: \\ $S_{\Omega_{\Delta}, N}\big(w,q_{\otimes}\big) = S_{\Omega_{\Delta}, N}\big(\Pi_{\bar\calW}(w),q_{\otimes}\big) $.
            \item When learning a $w$, only the component in $\bar\calW$ is \emph{identifiable}:\\
            Let $w^*\in \argmin\frac{1}{N}\sum_{i=1}^N\calL_{\Omega_{\calC(x_i)}}(\phi_i^\top w, Y_iq_i)$, then $\argmin\frac{1}{N}\sum_{i=1}^N\calL_{\Omega_{\calC(x_i)}}(\phi_i^\top w, Y_iq_i) = \Pi_{\bar \calW}(w^*) + \bar\calW^\perp$. 
        \end{enumerate}
    \end{proposition}
    The proof of Proposition~\ref{prop:identifiableParameter} is a direct consequence of Propositions~\ref{prop:sub_dimensioal_domain_cvx_analysis} and \ref{prop:structuredPredictionWithGeneralizedNegentropy}, with the orthogonal of a sum of subspaces being the intersection of the orthogonals of each subspace.
    As a consequence, we can focus on the identifiable part of the surrogate problem
    \begin{align}
    \min_{\bar w \in \bar \calW} \underline{S_{\Delta,N}}(\bar w) \quad \text{where} \quad
    \underline{S_{\Delta,N}}(\bar w) &:= \min_{q_\otimes \in \Delta_\otimes} S_{\Omega_{\Delta},N}\Big((Y(x_i)^\top \phi_i^\top \bar w)_{i \in [N]},q_i\Big),
    \end{align}
    where we have omitted the canonical inclusion from $\bar \calW$ to $\calW$ in $Y(x_i)^\top \phi_i^\top w$ for clarity.

    \paragraph{\texorpdfstring{Conjugate $\bar \nu$ of parameter $\bar w$}{Conjugate nu-bar of parameter w-bar}.}
    Our convergence results requires some form of convexity.
    To that purpose, we need to move from objective $\underline{S_{\Delta,N}}(\bar w) $ in variable $\bar w$, whose geometry is the one of a regularized piecewise constant function on the normal fan of $\bar \calM$, to an objective expressed on the ``Fenchel conjugate'' of $\bar w$, which follows the geometry of $\bar \calM$.
    We define $\Omega_{\bar{\calM}}(\bar{\nu})$ as the Fenchel conjugate of the average of the dual regularization functions evaluated on the identifiable score space:
    \begin{equation}
        \Omega_{\bar{\calM}} := \bar{F}^*, \quad \text{where} \quad \bar{F}(\bar{w}) = \frac{1}{N} \sum_{i=1}^N \Omega_{\Delta^{\calY(x_i)}}^*\big(Y(x_i)^\top \phi_i^\top J_{\Pi_{\bar \calW}} \bar{w}\big).
    \end{equation}

    where $J_{\Pi_{\bar \calW}}$ is the canonical injection that maps identifiable parameter $\bar{w}$ to the full parameter space~$\calW$, \ie $w = J_{\Pi_{\bar \calW}} \bar{w}$. Figure~\ref{fig:mirror_descent_structure} illustrates the properties of $\Omega_{\bar\calM}$ described in the following proposition.
    
    \begin{restatable}{proposition}{propcontextualregularization}\label{prop:contextualRegularization}
        The function $\bar{F}$ is of Legendre-type, hence $\bar{F} = \Omega_{\bar{\calM}}^*$. Furthermore, $\Omega_{\bar{\calM}}$ is given by the infimal convolution:
        \begin{equation}\label{eq:contextualRegularizationProp}
            \Omega_{\bar{\calM}}(\bar{\nu}) = \inf_{(q_i)_{i \in [N]}} \left\{ \frac{1}{N} \sum_{i=1}^N \Omega_{\Delta^{\calY(x_i)}}(q_i) \;\middle|\;
            q_i \in \Delta^{\calY(x_i)}, \, \frac{1}{N} \sum_{i=1}^N \Pi_{\bar \calW}\big( \phi_i Y(x_i) q_i\big) = \bar{\nu} \right\}.
        \end{equation}
        The domain of $\Omega_{\bar{\calM}}$ is the full-dimensional identifiable aggregated moment space $\bar{\calM}$.
        For any $\bar{\nu} \in \relint(\bar{\calM})$, the minimum in~\eqref{eq:contextualRegularizationProp} is attained at $$q_i = \nabla \Omega_{\Delta^{\calY(x_i)}}^*\big(Y(x_i)^\top \phi_i^\top J_{\Pi_{\bar \calW}} \nabla \Omega_{\bar{\calM}}(\bar{\nu})\big).$$ 
        Given $q_\otimes \in \Delta_{\otimes}$, let $\begin{cases}
               \bar w = \Pi_{\bar \calW}(w) &\text{ for } w \in \argmin \frac{1}{N} \sum_{i=1}^N \calL_{\Omega_{\calC(x_i)}}(\phi_i^\top w,Y_iq_i), \\
               \bar \nu = \Pi_{\bar \calW}(\nu) &\text{ for } \nu = \frac{1}{N} \sum_{i=1}^N \phi_i Y_i q_i,
            \end{cases} $ \\
            the regularization function~$\Omega_{\bar \calM}$ has the following properties, which are useful below
        \begin{align}
            \label{eq:FYlandCalM}
            \Pi_{\bar \calW}(w) &= \nabla \Omega_{\bar \calM}\big(\Pi_{\bar \calW}(\nu)\big) \\
            \frac{1}{N}\sum_{i=1}^N\big( \Omega_{\Delta^{\calY(x_i)}}(q_i) - \Omega_{\bar \calM}(\bar \nu)\big) 
            &= \frac{1}{N} \sum_{i=1}^N \calL_{\Omega_{\Delta^{\calY(x_i)}}}\big(Y(x_i)^\top \phi_i^\top J_{\Pi_{\bar \calW}}\bar w, q_i\big).
            \label{eq:crossJensenGapInCalM}
        \end{align}
    \end{restatable}
    The detailed statement and proof are given in Section~\ref{sec:loss_parameter_space} and Appendix~\ref{subsec:appendix_loss_parameter_space}, respectively.
    We call $q_{\otimes} \mapsto \frac{1}{N}\sum_{i=1}^N\big( \Omega_{\Delta^{\calY(x_i)}}(q_i) - \Omega_{\bar \calM}(\bar \nu)\big)$ the Cross Jensen gap in $q_{\otimes}$. We use this terminology when the problem is non-structured ($Y = I$) and non-contextual ($\Phi = I$), we fall back on the usual \emph{Jensen gap} $\frac{1}{N}\sum_{i=1}^N \Omega_{\Delta}(q_i) - \Omega_{\Delta}(\frac1N\sum_{i=1}^N q_i)$.
    
    \begin{figure}[ht]
        \centering
        \resizebox{\linewidth}{!}{
        \begin{tikzpicture}[
            prop/.style={->, >=latex, thick},
            map/.style={dashed, ->, >=latex, bend left=15},
            leg/.style={->, >=latex, bend left=45},
            scale=0.85,
            every node/.style={transform shape}
        ]
        
        \begin{scope}[xshift=-4cm, canvas is xy plane at z=0]
            \node[align=center] (ident_param_label) at (1, 5) {\textbf{Identifiable}\\ \textbf{Parameter} ($\bar{w}$)};
            \draw[->,draw=purple] (0, 0.5) -- (0, 4) node[above] {$\bbR^{\bar{d}}$};
            \draw[->,draw=purple] (0, 0.5) -- (3, 0.5);
            \node[circle, fill, inner sep=1.5pt, label={right:$\bar{w}$}] (wbar) at (1.5, 1.5) {};
            
            \node[align=center] (ident_agg_label) at (1, -4.5) {\textbf{Identifiable}\\ \textbf{Moment Space}};
            \node[regular polygon, regular polygon sides=6, minimum size=2.5cm, draw=purple, thick, rotate=15] (agg_poly_bar) at (1.5, -2) {};
            \node[circle, fill=purple, inner sep=1.5pt, label={below:$\bar{\nu} = \Pi_{\bar \calW} (\nu)$}] (nubar) at (1.5, -2) {};
    
            \draw[leg] (wbar) to node[right, font=\small, pos=0.4] {$\nabla \Omega_{\bar{\calM}}^*$} (nubar);
            \draw[leg, bend left=45] (nubar) to node[left, font=\small, pos=0.6] {$\nabla \Omega_{\bar{\calM}}$} (wbar);
        \end{scope}

        \begin{scope}[xshift=1.5cm]
            \node[align=center] (param_label) at (0, 5, 0) {\textbf{Parameter}\\ \textbf{Space} ($w$)};
            \draw[->,draw=blue] (0, 0.5, 0) -- (0, 4, 0) node[above] {$\bbR^d$};
            \draw[->,draw=blue] (0, 0.5, 0) -- (3, 0.5, 0);
            \node[circle, fill, inner sep=1.5pt, label={right:$w$}] (w) at (1.5, 1.5, 0) {};
            
            \node[align=center] (agg_label) at (0, -4.5, 0) {\textbf{Aggregated}\\ \textbf{Moment Space}};
            \node[regular polygon, regular polygon sides=6, minimum size=2.5cm, draw=blue, thick, rotate=15] (agg_poly) at (1.5, -2, 0) {};
            \node[circle, fill=blue, inner sep=1.5pt, label={below:$\nu = \frac{1}{N}\sum \phi_i \mu_i$}] (nu) at (1.5, -2, 0) {};
        
            \draw[leg] (w) to node[right, font=\small, pos=0.4] {$\nabla \Omega_{\calM}^*$} (nu);
            \draw[leg, bend left=45] (nu) to node[left, font=\small, pos=0.6] {$\partial \Omega_{\calM}$} (w);
        \end{scope}
    
        \begin{scope}[xshift=7cm, canvas is xy plane at z=0]
            \node[align=center] (lin_label) at (1, 5) {\textbf{Linear Objective}\\ \textbf{Space}};
            \draw[green!60!black, ->] (0, 0.5) -- (0, 4) node[above] {$\bbR^{d_i}$};
            \draw[green!60!black, ->] (0, 0.5) -- (3, 0.5);
            \node[circle, fill, inner sep=1.5pt, label={right:$\theta_i = \phi_i^\top w$}] (theta) at (1.5, 1.5) {};
            
            \node[align=center] (mom_label) at (1, -4.5) {\textbf{Moment}\\ \textbf{Space}};
            \node[regular polygon, regular polygon sides=6, minimum size=2cm, draw=green!60!black, thick] (mom_poly) at (1.5, -2) {};
            \node[circle, fill=green!60!black, inner sep=1.5pt, label={below:$\mu_i = Y(x_i) q_i$}] (mu) at (1.5, -2) {};
    
            \draw[leg] (theta) to node[right, font=\small, pos=0.4] {$\nabla \Omega_{\calC_i}^*$} (mu);
            \draw[leg, bend left=45] (mu) to node[left, font=\small, pos=0.6] {$\partial \Omega_{\calC_i}$} (theta);
        \end{scope}
    
        \begin{scope}[xshift=11.5cm, canvas is xy plane at z=0]
            \node[align=center] (cost_label) at (1, 5) {\textbf{Cost Function}\\ \textbf{Space}};
            \draw[red!80!black, ->] (0, 0.5) -- (0, 4) node[above] {$\bbR^{\calY(x_i)}$};
            \draw[red!80!black, ->] (0, 0.5) -- (3, 0.5);
            \node[circle, fill, inner sep=1.5pt, label={right:$s_i = Y(x_i)^\top \theta_i$}] (c) at (1.5, 1.5) {};
            
            \node[align=center] (dist_label) at (1, -4.5) {\textbf{Distribution}\\ \textbf{Space}};
            \draw[red, thick] (0.5, -3) -- (2.5, -3) -- (1.5, -1) -- cycle; 
            \node at (2.8, -3.2) {$\Delta^{\calY(x_i)}$};
            \node[circle, fill=red, inner sep=1.5pt, label={below right:$q_i$}] (q) at (1.5, -2.2) {};
    
            \draw[leg] (c) to node[right, font=\small, pos=0.4] {$\nabla \Omega_{\Delta^{\calY(x_i)}}^*$} (q);
            \draw[leg, bend left=45] (q) to node[left, font=\small, pos=0.6] {$\partial \Omega_{\Delta^{\calY(x_i)}}$} (c);
        \end{scope}
    
        \begin{scope}[xshift=7.3cm, yshift=0.3cm, opacity=0.3, canvas is xy plane at z=-2]
             \draw[green!60!black] (0, 0.5) -- (0, 4); \draw[green!60!black] (0, 0.5) -- (3, 0.5);
             \node[regular polygon, regular polygon sides=6, minimum size=2cm, draw=green!60!black] at (1.5, -2) {};
        \end{scope}
        \begin{scope}[xshift=11.8cm, yshift=0.3cm, opacity=0.3, canvas is xy plane at z=-2]
             \draw[red!80!black] (0, 0.5) -- (0, 4); \draw[red!80!black] (0, 0.5) -- (3, 0.5);
             \draw[red, thick] (0.5, -3) -- (2.5, -3) -- (1.5, -1) -- cycle;
        \end{scope}
    
        \draw[map] (wbar) to node[above] {$J_{\Pi_{\bar \calW}}$} (w);
        \draw[map] (nu) to node[above] {$\Pi_{\bar \calW}$} (nubar);
        \draw[map] (w) to node[above] {$\phi_i^\top$} (theta);
        \draw[map] (theta) to node[above] {$Y(x_i)^\top$} (c);
        \draw[map] (q) to node[above] {$Y(x_i)$} (mu);
        \draw[map] (mu) to node[above] {$\phi_i$} (nu);
    
        \node[font=\Large\bfseries] at (16, 2.5, 0) {Dual};
        \node[font=\Large\bfseries] at (16, -2, 0) {Primal};
    
        \end{tikzpicture}
        }
        \caption{Illustration of the structural relationships between parameter, moment, and distribution spaces. The primal maps $\phi_i$ and $Y(x_i)$ aggregate moments, while dual maps $\phi_i^\top$ and $Y(x_i)^\top$ transport costs. The canonical injection $J_{\Pi_{\bar \calW}}$ and projection $\Pi_{\bar \calW}$ map to the identifiable parameter and moment space. Mirror maps link dual and primal spaces. Due to dimensions, non-full sets use subdifferentials $\partial \Omega$.}
        \label{fig:mirror_descent_structure}
    \end{figure}
    
      \paragraph{Alternating minimization as the Bregman proximal point algorithm.}

    Consider the iterates $q_\otimes^{(t)}$ and $w^{(t)}$ of algorithm~\eqref{eq:alternating_product}, we can define~${\bar \nu^{(t)}= \Pi_{\bar \calW}\big(\frac1N\sum_{i=1}^N\phi_iY_iq_i^{(t)}}\big)$ and $\bar w^{(t)} = \Pi_{\bar \calW}(w^{(t)})$. Equation~\eqref{eq:FYlandCalM} shows that we do not need the full details of the moments $\mu_i^{(t+1)}$ to compute $\bar w^{(t+1)}$, but only the aggregate moment $\bar \nu^{(t+1)}$ as $\bar w^{(t+1)} = \nabla\Omega_{\bar \calM}(\bar \nu^{t+1})$.

    Let us now reformulate algorithm~\eqref{eq:alternating_product} in $\bar \nu$.
    Consider the following relaxed objective function $f_\kappa(\bar \nu)$ over the aggregated moment space $\bar \calM$:
    \begin{equation}\label{eq:definitionOfFkappa}
    \begin{aligned}
        f_\kappa(\bar \nu) = \min_{q_{\otimes} \in \Delta_{\otimes}} \Bigg\{ \frac{1}{N} \sum_{i=1}^N \Big[ \langle \gamma_i | q_i \rangle + \kappa \big(\Omega_{\Delta^{\calY(x_i)}}(q_i) - \Omega_{\bar \calM}(\bar \nu)\big) \Big] \;\Bigg|\; \\
        \frac{1}{N} \sum_{i=1}^N \Pi_{\bar \calW}\big( \phi_i Y(x_i) q_i \big) = \bar \nu \Bigg\}.
    \end{aligned}
    \end{equation}

    Equation~\eqref{eq:crossJensenGapInCalM} highlights the difference between $f_{\kappa}$ and our algorithm iterates, we would need to split the minimization in~\eqref{eq:definitionOfFkappa} into two successive steps: first minimize the objective without constraint but fixing $\bar \nu = \bar \nu^{(t)}$, then compute $\bar \nu^{(t+1)}$ using the constraint.
    This decoupling is actually obtained in the proximal operator.
    
    \begin{restatable}{theorem}{theoproximalpointoperator}\label{theo:proximalPointOperator}
        Let $\bar \nu^{(0)} = \nabla \Omega_{\bar \calM}^*(\bar w^{(0)})$. The sequence $\bar \nu^{(t)}$ defined by the iterations of the Bregman proximal point algorithm on $f_\kappa$:
        \begin{equation}\label{eq:proximal_point_f_kappa}
            \bar \nu^{(t+1)} = \argmin_{\bar \nu \in \bar \calM} \left\{ f_\kappa(\bar \nu) + \kappa D_{\Omega_{\bar \calM}}\big(\bar \nu \mid \bar \nu^{(t)}\big) \right\}
        \end{equation}
        matches the iterates of the alternating minimization algorithm~\eqref{eq:alternating_product}, such that for all $t$, we have the correspondence $\bar \nu^{(t)} = \nabla \Omega_{\bar \calM}^*\big(\Pi_{\bar \calW}(w^{(t)})\big)$.
    \end{restatable}
    Given Theorem~\ref{theo:proximalPointOperator}, we can use \textcite{eckstein1993nonlinear} to get the convergence of the PPA towards the global minimum if $f_{\kappa}$ is convex, and derive a proof of convergence to a stationary point from~\parencite{attouch_convergence_2013} when it is not. The convexity of $f_\kappa$ is implied by the convexity of the \emph{cross Jensen gap}.
    \begin{restatable}{proposition}{propconvexityfkappa}\label{prop:convexityfkappa}
        The mapping $\bar \nu \mapsto f_{\kappa}(\bar \nu)$ is convex if the cross Jensen gap $q_\otimes \mapsto \frac{1}{N}\sum_{i=1}^N \Omega_{\Delta^{\calY(x_i)}}(q_i) - \Omega_{\bar \calM}(
            \bar \nu(q_\otimes) )$, where $\bar \nu(q_\otimes):= \frac{1}{N} \Pi_{\bar{\calW}} \sum_{i=1}^N \phi_i Y(x_i) q_i $, is convex.
    \end{restatable}
    Theorem~\ref{theo:proximalPointOperator} and Proposition~\ref{prop:convexityfkappa} are proved in Appendix~\ref{subsec:appendix_loss_parameter_space}.
    The rest of the section first highlights when the cross Jensen gap is convex and when it isn't, and then states the convergence result to a stationary point in the general case.

    \begin{remark}
        We can rewrite $f_\kappa$ as the difference of convex function $G_\kappa-\kappa\Omega_{\bar{\calM}}$ where
        \begin{equation}
           \label{eq:Gkappa} 
    G_\kappa(\bar\nu)
    :=
    \min_{q_\otimes\in\Delta_\otimes}
    \left\{
        \frac1N\sum_{i=1}^N\big(\langle\gamma_i,q_i\rangle+\kappa\Omega_i(q_i)\big)
        \;\middle|\;
        \frac1N\sum_{i=1}^N\Pi_{\bar{\calW}}\phi_iY(x_i)q_i=\bar\nu
    \right\}.
        \end{equation}
        Applying the difference of convex algorithm~\parencite{phamdinh1997convex} with this decomposition again leads to our alternating minimization algorithm.
    \end{remark}

    \paragraph{Convexity of the Jensen gap in the non-structured, non-contextual case.}
    
    In the non-structured case, the moment polytope $\calC$ is the simplex and is identical to the distribution polytope. In this case, both terms of the cross Jensen gap live in the distribution space. Using the interpretation of the Fenchel--Young loss as a primal--dual Bregman divergence \parencite{blondelLearningFenchelYoungLosses2020a}, we can express the Jensen gap as
    
    $$q_\otimes \mapsto \frac{1}{N}\sum_{i=1}^N \big(\Psi_\Delta(q_i) - \Psi_\Delta(\frac{1}{N}\sum_{i=1}^N q_i)\big) =\frac{1}{N}\sum_{i=1}^N D_{\Psi_{\Delta}}\big(q_i \mid \frac{1}{N} \sum_{i=1}^N q_i\big),$$

    where we used Proposition~\ref{prop:sum_sub_dim} to write $\Omega_{\Delta} = \Psi_{\Delta} + \bbI_{\Delta}$ with $\Psi_{\Delta}$ a Legendre-type function.
    This new expression reduces the convexity of the Jensen gap to the joint convexity of $D_{\Psi_{\Delta}}$. The joint convexity of a Bregman divergence is a long-standing and delicate question that has received substantial attention from the literature (see \textcite{bauschke2001joint} for a thorough analysis on the topic). Theorem 6.1 of \textcite{bauschke2001joint} provides the necessary and sufficient condition for $D_{\Psi_{\Delta}}$ to be convex, that is $(\nabla^2\Psi_\Delta)^{-1}$ is matrix-concave. This condition is known to hold for the negentropy and square-norm regularizers \parencite{bauschke2001joint}, but remains an open question for the sparse perturbation.

    \paragraph{Example of non-convexity in the structured case.}

Figure~\ref{fig:counter_example_overview} presents a counter-example showing that the use of a linear optimization layer on \(\calC\), together with a nonlinear cost over the original decisions, can destroy global convexity in the structured case. The instance has four feasible decisions \(\calY=\{y_1,y_2,y_3,y_4\}\) and three scenarios. It is constructed so that two adverse effects appear simultaneously. First, the scenario-wise anticipative minimizers are \(y_2,y_3,y_4\), whereas the unique optimal non-anticipative deterministic decision is \(y_1\), which is never anticipatively optimal. Consequently, in the anticipative limit \(\kappa\to0\), the relaxed objective \(f_\kappa\) is minimized at the average moment of \(y_2,y_3,y_4\), which lies on the side of the polytope associated with \(y_3\), rather than on the side associated with the true optimum \(y_1\). Second, when \(\beta>1\), the horizontal regions associated with \(y_1\) and \(y_3\) are separated by the vertical regions associated with \(y_2\) and \(y_4\), whose expected cost \((2\beta-1)/3\) creates a barrier controlled by \(\beta\). This makes the empirical risk \(w\mapsto R_N(\pi_w)\) non-convex (in this single-context instance, the parameter \(w\) is directly the score vector \(\theta\)), and an analogous strict convexity violation holds for \(f_\kappa\) for all sufficiently large \(\kappa\).  The corresponding computations (limiting forms of \(f_\kappa\), non-convexity of the structural Jensen gap, and the trajectory of the exact alternating minimization algorithm on the invariant horizontal subspace) are not included in the paper for brevity. 
    
    \begin{figure}[htbp]
        \centering
        \begin{minipage}{0.55\textwidth}
            \centering
            \resizebox{\textwidth}{!}{
                        \begin{tikzpicture}[scale=1.3]
            \draw[->, gray!70, dashed] (-1.5, 0) -- (1.5, 0);
            \draw[->, gray!70, dashed] (0, -1.5) -- (0, 1.5);

    \draw[thick, black!80] (-1, 0) -- (0, -1) -- (1, 0) -- (0, 1) -- cycle;

    \draw[->, very thick, blue] (0,0) -- (0.6, 0.2) node[anchor=north west] {$\theta$};

    \coordinate (Y1) at (-1, 0);
    \fill[green!60!black] (Y1) circle (1.5pt);
    \node[green!60!black, above left] at (Y1) {$y_1$};
    \node[green!60!black, align=right, below left] at (Y1) {
        $c(y_1, \xi) = (0, 0, 0)$ \\ 
        $\bar{c}(y_1) = 0$
    };

    \coordinate (Y2) at (0, -1);
    \fill[red] (Y2) circle (1.5pt);
    \node[red, below right] at (Y2) {$y_2$};
    \node[red, align=left, below right, yshift=-0.2cm] at (Y2) {
        $c(y_2, \xi) = (-1, \beta, \beta)$ \\ 
        $\bar{c}(y_2) = \frac{2\beta - 1}{3}$
    };

    \coordinate (Y3) at (1, 0);
    \fill[orange!90!black] (Y3) circle (1.5pt);
    \node[orange!90!black, above right] at (Y3) {$y_3$};
    \node[orange!90!black, align=left, below right] at (Y3) {
        $c(y_3, \xi) = (1, -1, 1)$ \\ 
        $\bar{c}(y_3) = 1/3$
    };
    
    \coordinate (Y4) at (0, 1);
    \fill[red] (Y4) circle (1.5pt);
    \node[red, above right] at (Y4) {$y_4$};
    \node[red, align=left, above right, yshift=0.2cm] at (Y4) {
        $c(y_4, \xi) = (\beta, \beta, -1)$ \\ 
        $\bar{c}(y_4) = \frac{2\beta - 1}{3}$
    };

    \fill[black] (0,0) circle (1pt) node[below right] {$0$};
\end{tikzpicture}
            }
        \end{minipage}
        \hfill
        \begin{minipage}{0.35\textwidth}
            \centering
            \includegraphics[width=\textwidth]{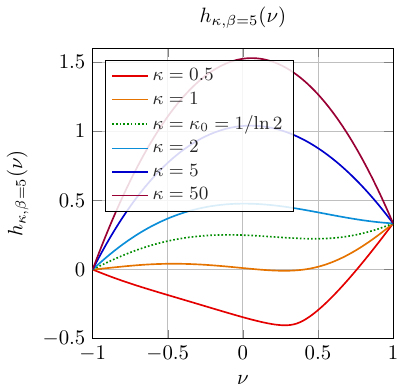}
        \end{minipage}
        \caption{Overview of the constructed counter-example. \emph{Left:} description of the problem with solutions, scenarios and costs \emph{Right:} value of $f_\kappa$ on the $x$-axis evaluated for $\beta=5$. 
        For small $\kappa$, the landscape is convex but biases towards the sub-optimal solution; for large $\kappa$, an explicit convexity violation appears and the dynamics can be directed toward different boundary regimes.}
        \label{fig:counter_example_overview}
    \end{figure}

    \paragraph{Convergence to a stationary point.}
In Appendix~\ref{sec:convergenceProof}, we adapt the proof of~\textcite{attouch_convergence_2013} to show that, under some conditions on the Bregman function and on the function minimized, the Bregman proximal point algorithm converges to a stationary point (Theorem~\ref{thm:generic_primal_convergence}).
One of the key conditions is that the function minimized satisfies the Kurdyka–\L ojasiewicz (KL) inequality.
In this paper, we make assumptions that our functions are real analytic, which is stronger and implies the (KL) inequality~\parencite{lojasiewicz1963propriete}.
This allows us to obtain the following convergence result.

\begin{restatable}{theorem}{thmconvergence}\label{thm:convergence:speed}
    Let $(w^{(t)})_{t \geq 0}$ be the sequence generated by Algorithm~\eqref{eq:alternating_product}. Additionally, assume that:
    \begin{itemize}
        \item ($A_1$) For every $i\in[N]$ and every
$\gamma_i\in\bbR^{\calY(x_i)}$, the Hessian of the mapping
\[\theta\mapsto\Omega_{\Delta_i}^*\big(Y(x_i)^\top\theta-\gamma_i\big)\]
exists and is positive definite on $V_i=\vect(\calY(x_i)-\calY(x_i))$.
        \item ($A_2$) $\nabla\Omega^*_{\Delta_i}$ is $L_i$-lipschitz continuous for all $i \in [N]$.
        \item ($A_3$) for all $i \in [N]$ and all $\gamma_i \in \bbR^{\calY(x_i)}$, the map $\theta \mapsto \Omega^*_{\Delta_i}\big(Y(x_i)^\top\theta - \gamma_i\big)$ is real analytic.
    \end{itemize}
    Then the identifiable trajectory $\bar w^{(t)}$ does exactly one of the two following.
    \begin{itemize}
        \item \textbf{(C) Confined regime.} The identifiable sequence converges toward a single stationary point $\bar w^*$ of $\underline{S_{\Delta,N}}$ with finite length, i.e., $\sum_{t > 0} ||\bar w^{(t+1)} - \bar w^{(t)}|| < +\infty$, and $\nabla \underline{S_{\Delta,N}}(\bar w^*) = 0$.
        Additionally, the function value converges at rate $\underline{S_{\Delta,N}}(\bar w^{(t)}) - \underline{S_{\Delta,N}}(\bar w^*) = \calO(1/t)$. On the primal side, $\bar \nu^{(t)}$ converges with finite length toward a stationary point $\bar{\nu}^*$ of $f_\kappa$, and $f_\kappa(\bar{\nu}^{(t)}) - f_\kappa(\bar{\nu}^*) = \calO (1/t)$
        \item \textbf{(E) Escape regime.} The identifiable sequence diverges, i.e., $||\bar w^{(t)}|| \rightarrow + \infty$. On the primal side, $\operatorname{dist}\big(\bar\nu^{(t)},\operatorname{rbd}(\bar{\calM})\big)\to0$.
    \end{itemize}
\end{restatable}

    Assumptions ($A_1$), ($A_2$), and ($A_3$) are mild and satisfied by most reasonable choices of regularization, as evidenced by Proposition~\ref{prop:regsatisfyconv}, whose proof is postponed to Appendix~\ref{sec:convergenceProof}.

    \begin{restatable}{proposition}{regularizationconv}\label{prop:regsatisfyconv}
    The negentropy regularization and the sparse perturbation with Gaussian noise satisfy assumptions ($A_1$), ($A_2$), and ($A_3$).
    \end{restatable}

    \section{New results in structured prediction}\label{sec:new_structured_prediction}

    {
    Introducing our algorithm required many new results on structured prediction
    with Fenchel--Young losses in the combinatorial setting of this paper. Some of them are technical: the theory of Fenchel--Young losses has been developed for full-dimensional polytopes. This assumption is satisfied neither by the polytopes we use in operations research applications nor by the probability simplex. We deal with this issue in Appendix~\ref{sec:regularization_on_distributions}.
    In this section we focus on the new results that are more original and that we believe may have an impact beyond our alternating minimization algorithm.
    Section~\ref{sec:structured_perturbation} introduces a sparse perturbation
    directly on the distribution space~$\Delta^\calY$, extending the framework
    of~\textcite{berthetLearningDifferentiablePertubed2020} to the case where the
    combinatorial space~$\calY$ is not a continuous polytope.
    Section~\ref{sec:loss_parameter_space} studies the geometry induced by aggregating
    Fenchel--Young losses across training scenarios, characterizing the identifiable
    aggregate moment space~$\bar{\calM}$, and the associated aggregate regularizer
    $\bar{\Omega}_{\bar{\calM}}$.
    Proofs for both subsections are deferred to Appendices~\ref{subsec:appendix_structured_perturbation} and~\ref{subsec:appendix_loss_parameter_space}, respectively.}

    \subsection{Sparse perturbation on the distribution space}\label{sec:structured_perturbation}
    We use the notations defined in Section~\ref{sec:intro} for both the variable and distribution spaces. Explicitly defining a proper l.s.c. convex regularization function $\Omega \in \Gamma_0(\bbR^d)$ with domain $\dom(\Omega) = \calC$, and computing the regularized predictions $\hat{y}_\Omega(\theta)$ defined in Equation~\eqref{eq:COlayer} can be challenging. It may rely on Frank-Wolfe~\parencite{frankAlgorithmQuadraticProgramming1956} algorithm in practice. 
    We follow another approach pioneered by \textcite{berthetLearningDifferentiablePertubed2020}, defining instead $\Omega_\calC^*$ and $\Omega_\Delta^*$ directly.  
    More precisely, let $\varepsilon \in \bbR_{++}$, we introduce:
    \begin{equation}\label{eq:perturbation_moment}
        F_{\varepsilon, \calC}(\theta) = \bbE[\max_{y \in \calY}(\theta + \varepsilon \bfZ)^\top y] = \bbE[\max_{y \in \calC}(\theta + \varepsilon \bfZ)^\top y],
    \end{equation}
    \begin{equation}\label{eq:perturbation_distribution}
        F_{\varepsilon,\Delta}(s) = \bbE[\max_{y\in \calY}s(y) + \varepsilon \bfZ^\top y] = \bbE[\max_{q\in \Delta^{\calY}}(s + \varepsilon Y^\top \bfZ)^\top q], 
    \end{equation}
    where $\bfZ$ is a centred random variable on $\bbR^d$ from an exponential family with positive density, typically a standard multivariate normal distribution. 
    The perturbed linear program in Equation~\eqref{eq:perturbation_moment} is introduced by \textcite{berthetLearningDifferentiablePertubed2020}, while Equation~\eqref{eq:perturbation_distribution} is new to the best of our knowledge. 
    We denote by $\Omega_{\varepsilon,  \calC}$ and $\Omega_{\varepsilon, \Delta}$ their respective Fenchel conjugates.
    We extend from the work of~\textcite[Proposition 2.2]{berthetLearningDifferentiablePertubed2020} the following properties for $F_{\varepsilon, \calC}$ to the case when $\calC$ is not full-dimensional.
    
    \begin{restatable}{proposition}{propperturbation}\label{prop:Perturbation}
        Let $\varepsilon \in \bbR_{++}$, the function $F_{\varepsilon, \calC}$ defined above has the following properties:
        \begin{enumerate}
            \item $F_{\varepsilon, \calC}$ is a convex finite valued function of $\bbR^d$, and in particular belongs to $\Gamma_0(\bbR^d)$.
            \item $F_{\varepsilon, \calC}$ is strictly convex over $V$, and affine over $V^\perp$. Let $\theta \in \bbR^d$, such that ${\theta = \theta_V + \theta_{V^\perp}}$, where $\theta_V = \Pi_V(\theta)$ and $\theta_{V^\perp} = \theta - \theta_V$, and $y_0 \in \calC$,
            \[F_{\varepsilon, \calC}(\theta) = \langle y_0 |\theta_{V^\perp} \rangle + F_{\varepsilon, \calC}(\theta_V).\]
            \item $F_{\varepsilon, \calC}$ is twice differentiable, with gradient given by:
            \begin{equation}\label{eq:grad_F_calC}
                \nabla_\theta F_{\varepsilon, \calC}(\theta) = \bbE[\argmax_{y \in \calC}(\theta + \varepsilon \bfZ)^\top y].
            \end{equation}
            \item The Fenchel conjugate $\Omega_{\varepsilon, \calC} := F_{\varepsilon, \calC}^*$ has domain $\calC$, and its restriction to $H$ is a Legendre-type function.
        \end{enumerate}
    \end{restatable}
    Contrary to the full dimension case considered by \textcite{berthetLearningDifferentiablePertubed2020}, $F_{\varepsilon, \calC}$ is not strictly convex over the whole space~$\bbR^d$. Therefore, it is not a Legendre-type function, but its restriction to~$V$ is. Besides, its conjugate is not Legendre-type, but the restriction of its conjugate to the affine subspace~$H$ is.
    The proof is provided in Appendix~\ref{subsec:appendix_structured_perturbation}.

    The perturbation in the definition of $F_{\varepsilon, \Delta}$ (see Equation~\eqref{eq:perturbation_distribution}) spans $\im(Y^\top)$, which is a subspace of dimension $d' \ll |\calY|$. 
    The proofs of \textcite{berthetLearningDifferentiablePertubed2020}, or the change of variable in the paper by \textcite{abernethyPerturbationTechniquesOnline2016}, no longer hold.
    However, perhaps surprisingly, many properties remain valid.
    
    \begin{restatable}{theorem}{thmsparseperturbation}
        \label{prop:SparsePerturbation}
        Let $\varepsilon \in \bbR_{++}$, the function $F_{\varepsilon, \Delta}$ defined above has the following properties:
        \begin{enumerate}
            \item The function $F_{\varepsilon, \Delta}$ is convex,  Lipschitz continuous (and in particular in~$\Gamma_0(\bbR^{\calY})$). 
            \item $F_{\varepsilon, \Delta}$ is strictly convex over $V_\Delta$, and affine over $V_\Delta^\perp = \vect(\mathbf{1})$. More precisely, let $s \in \bbR^{\calY}$, decomposed as $s = s_{V_\Delta} + s_{V_\Delta^{\perp}}$, where $s_{V_\Delta} = \Pi_{V_{\Delta}}(s)$ and $s_{V_\Delta^{\perp}} = s - s_{V_\Delta}$, and $q_0 \in \Delta^{\calY}$ be any point in $\Delta^{\calY}$,
            \[F_{\varepsilon, \Delta}(s) = \langle s_{V_\Delta^\perp} | q_0 \rangle + F_{\varepsilon, \Delta}(s_{V_\Delta}).\]
            \item $F_{\varepsilon, \Delta}$ is differentiable over~$\bbR^{\calY}$, with gradient given by:
            \begin{equation}\label{eq:grad_F_Delta}
                \nabla_s F_{\varepsilon, \Delta}(s) = \bbE[\argmax_{q \in \Delta^{\calY}} (s +\varepsilon Y^\top \bfZ)^\top q].
            \end{equation}
            \item The Fenchel conjugate $\Omega_{\varepsilon, \Delta} := F_{\varepsilon, \Delta}^*$ has domain $\Delta^{\calY}$, and its restriction to $H_\Delta$ is Legendre-type.
        \end{enumerate}
    \end{restatable}
    \begin{proof}
        See Appendix~\ref{subsec:appendix_structured_perturbation}.
    \end{proof}

    Now, we show another property of~$F_{\varepsilon, \Delta}$, useful to get the performance bound in Theorem~\ref{thm:bound_risk}.

    \begin{restatable}{proposition}{propstrongconvexitysparseperturbation}\label{prop:strongConvexitySparsePerturbation}
    Let $\bfZ \sim \calN(0, I_d)$ be a standard Gaussian. Then $\nabla_s F_{\varepsilon, \Delta}(s)$ is Lipschitz continuous with constant
    $ L = \tfrac{2|\calY|^2}{\varepsilon m_\calY \sqrt{\pi}} $
    where $m_\calY := \min_{(y, y') \in \calY^2, y \neq y'}||y-y'||_2$ is the minimum distance between two distinct points in $\calY$. Consequently, the Fenchel conjugate $\Omega_{\varepsilon, \Delta}$ is strongly convex on $H_\Delta$ with parameter
    $ \mu = \tfrac{\varepsilon m_\calY \sqrt{\pi}}{2|\calY|^{2}} .$
\end{restatable}

    \begin{proof}
        See Appendix~\ref{subsec:appendix_structured_perturbation}.
    \end{proof}

    Finally, we show that $\Omega_{\varepsilon, \calC}$ is the structured regularization corresponding to $\Omega_{\varepsilon, \Delta}$.
    \begin{restatable}{proposition}{propstructuredpredictionsparseperturbation}
        \label{prop:structuredPredictionSparsePerturbation}
        $\Omega_{\varepsilon, \calC}(\mu) = \min_{q \colon Yq=\mu}\Omega_{\varepsilon, \Delta}(q)$, and hence all the properties of Proposition~\ref{prop:structuredPredictionWithGeneralizedNegentropy} are true in the sparse perturbation case.
    \end{restatable}
    \begin{proof}
        See Appendix~\ref{subsec:appendix_structured_perturbation}.
    \end{proof}

    \subsection{Aggregating Fenchel--Young losses: the geometry of linear composition}\label{sec:loss_parameter_space}

Let us suppose that we have a collection of polytopes $\calP_1, \ldots, \calP_N$, where each $\calP_i \subset \bbR^{d_i}$ is not necessarily full-dimensional. 
We denote by $H_i = \aff(\calP_i)$ the affine hull of $\calP_i$, and by $V_i$ the corresponding parallel linear subspace. 
We consider a collection of regularization functions $\Omega_{\calP_i}$ such that $\dom(\Omega_{\calP_i}) = \calP_i$. 
Since $\calP_i$ is not full-dimensional, $\Omega_{\calP_i}$ cannot be of Legendre-type over $\bbR^{d_i}$. 
Therefore, we assume that the restriction of $\Omega_{\calP_i}$ to $H_i=\aff(\calP_i)$ is of Legendre type.

We introduce a parameter vector $w \in \bbR^d$ and linear maps denoted by matrices $A_i \colon \bbR^{d_i} \to \bbR^d$. The parameter affects the $i$-th polytope through $A_i^\top w$. The aggregate moment space is defined as $\calM = \left\{ \frac{1}{N} \sum_{i=1}^N A_i \pi_i \mid \pi_i \in \calP_i \right\} \subset \bbR^d$. Just like the individual sets $\calP_i$, the space $\calM$ is not necessarily full-dimensional. Let us define $H_\calM = \aff(\calM)$, and by $\bar \calW$ the parallel linear subspace, and by $\bar \calM$ the full-dimensional restriction of $\calM$ to $H_\calM$.
Since the affine hull of a Minkowski sum is the Minkowski sum of the affine hulls, we have $\bar \calW = \sum_{i=1}^N A_i V_i$.
We denote by $J_{\Pi_{\bar\calW}}$ the canonical injection mapping an identifiable parameter $\bar w \in \bar \calW$ to the full parameter space $\bbR^d$, meaning $w = J_{\Pi_{\bar\calW}} \bar w$. The adjoint operator $\Pi_{\bar\calW}$ projects the aggregated moments into the identifiable structure, creating the full-dimensional restricted space $\bar \calM = \Pi_{\bar\calW} \calM$. 

Let us define the sum of dual regularizers over the identifiable parameter $\bar w \in \bar \calW$, and its corresponding conjugate:
\begin{equation}\label{eq:OmegaCalMstarDefinition}
    \bar F(\bar w) = \frac{1}{N} \sum_{i=1}^N \Omega_{\calP_i}^*(A_i^\top J_{\Pi_{\bar\calW}} \bar w), \quad \text{and} \quad \Omega_{\bar \calM}(\bar \nu) = \bar F^*(\bar \nu).
\end{equation}

\begin{restatable}{proposition}{propbaromegaproperties}\label{prop:barOmegaProperties}

    $\Omega_{\bar \calM}$ has the following properties:
\begin{enumerate}
    \item $\bar F$ is of Legendre type, hence $\bar F = \Omega_{\bar \calM}^*$. 
    \item $\Omega_{\bar \calM}$ admits the following infimal-projection representation.
    \begin{equation}
          \label{eq:expressionOfBarOmega}
          \Omega_{\bar \calM}(\bar \nu) = \inf_{(\pi_i)_{i=1}^N} \Big\{\frac{1}{N}\sum_{i=1}^N\Omega_{\calP_i}(\pi_i)\colon \pi_i \in \calP_i \text{ and } \frac{1}{N}\sum_{i=1}^N \Pi_{\bar\calW} A_i\pi_i=\bar \nu\Big\}.
      \end{equation}
    \item The domain of $\Omega_{\bar \calM}$ is $\bar \calM = \left\{ \bar \nu = \frac{1}{N} \sum_{i=1}^N \Pi_{\bar\calW} A_i \pi_i \mid \pi_i \in \dom \Omega_{\calP_i} \right\}$.
    \item For any $\bar\nu\in\relint(\bar\calM)$, the minimum in~\eqref{eq:expressionOfBarOmega} is attained in $(\pi_i)_{i=1}^N$ defined by $\pi_i = \nabla \Omega_{\calP_i}^*\big(A_i^\top J_{\Pi_{\bar\calW}} \nabla \Omega_{\bar \calM}(\bar \nu)\big)$.
\end{enumerate}
\end{restatable}

    \begin{proof}
        See Appendix~\ref{subsec:appendix_loss_parameter_space}.
    \end{proof}

\begin{restatable}{proposition}{propfylandcalm}\label{prop:FYLandCalM}
    Given $\pi_1,\ldots,\pi_N$ with $\pi_i$ in $\calP_i$ and $w$ in $\calW$, let $$\begin{cases}
               \bar w = \Pi_{\bar \calW}(w) &\text{ for } w \in \argmin \frac{1}{N} \sum_{i=1}^N \calL_{\Omega_{\calP_i}}(A_i^\top w,\pi_i), \\
               \bar \nu = \Pi_{\bar \calW}(\nu) &\text{ for } \nu = \frac{1}{N} \sum_{i=1}^N A_i \pi_i,
            \end{cases} $$ \\
            we have
        \begin{align}
            \label{eq:FYlandCalM_regsec}
            \Pi_{\bar \calW}(w) &= \nabla \Omega_{\bar \calM}\big(\Pi_{\bar \calW}(\nu)\big) \\
            \frac{1}{N}\sum_{i=1}^N\big( \Omega_{\calP_i}(\pi_i) - \Omega_{\bar \calM}(\bar \nu)\big) 
            &= \frac{1}{N} \sum_{i=1}^N \calL_{\Omega_{\calP_i}}\big(A_i^\top J_{\Pi_{\bar \calW}}\bar w, \pi_i\big).
            \label{eq:crossJensenGapInCalM_regsec}
        \end{align}
\end{restatable}

    \begin{proof}
        See Appendix~\ref{subsec:appendix_loss_parameter_space}.
    \end{proof}
\section{Computational experiments}\label{sec:computational_exp_primal_dual}

The goal of our numerical experiments is to evaluate when the proposed primal-dual algorithm improves the learning of policies encoded as neural networks with combinatorial optimization layers. We benchmark against state-of-the-art baselines from the literature, including imitation-based CO-layer methods~\parencite{dalleLearningCombinatorialOptimization2022, batyCombinatorialOptimizationEnrichedMachine2024} and contextual stochastic optimization approaches~\parencite{bertsimas2020predictive}.
We first consider a contextual two-stage minimum-weight spanning-tree problem in Section~\ref{subsec:two_stage_max_weight_spanning_tree_md}, where the primal-dual method recovers the quality of a computationally heavy fully-coordinated scheme with a lightweight coordination procedure. 
We then evaluate the method on a stochastic vehicle-scheduling benchmark in Section~\ref{subsec:stovsp}, showing that it scales to a real-world application and remains competitive in a regime where uncoordinated imitation is already strong. 
Finally, we turn to a contextual assortment problem in Section~\ref{subsec:contextual_assortment}, a genuinely contextual stochastic optimization setting where decisions must adapt to the observed customer context; there, the learned policy is competitive with contextual SAA methods while requiring only a small fraction of their inference time.

\begin{algorithm}
\caption{Implemented primal-dual algorithm (sparse perturbation version)}
\begin{algorithmic}[1]
\Require $\calD_{\texttt{train}}=((x_i,\xi_i))_{i\in[n_{\texttt{train}}]}$, $\texttt{nb\_iterations}$, $\kappa$, $\varepsilon$, $\texttt{nb\_samples}$, $\texttt{nb\_epochs}$, $\texttt{lr\_init}$, $\varphi_{\cdot}$;
\State $w = 0_{\calW}$;
\State $\texttt{loss} = \texttt{Fenchel\_Young\_Loss}(\texttt{oracle}, \varepsilon, \texttt{nb\_samples})$; \Comment{Generated by \eqref{eq:perturbation_moment}}

\For{$t = 1$ to $\texttt{nb\_iterations}$}
    \For{$(x_i,\xi_i)$ in $\calD_{\texttt{train}}$}
        \If{$t=1$}
            \State $\hat\mu_i = \texttt{oracle}(x_i,\xi_i)$; \Comment{Uncoordinated anticipative target}
        \Else
            \State $\hat\mu_i =
            \texttt{perturbed}(\texttt{oracle}, \varphi_w, x_i, \xi_i, \kappa, \varepsilon, \texttt{nb\_samples})$;
            \Comment{MC for \eqref{eq:primalUpdatePerturbation}}
        \EndIf
    \EndFor
    \State $\texttt{Optimizer} = \texttt{Adam}(\texttt{lr\_init})$;
    \For{$\texttt{epoch} = 1$ to $\texttt{nb\_epochs}$}
        \Comment{Coordination \eqref{eq:coordination_w_dist}}
        \For{$(x_i,\_)$ in $\calD_{\texttt{train}}$}
            \State $\texttt{grads} =
            \texttt{compute\_gradient}(\texttt{loss}(\varphi_w(x_i), \hat\mu_i))$;
            \State $w = \texttt{update\_model}(\varphi_w, \texttt{grads}, \texttt{Optimizer})$;
        \EndFor
    \EndFor
\EndFor

\end{algorithmic}
\label{algo:primal_dual}
\end{algorithm}

{In the practical implementation of the primal-dual described in Algorithm~\ref{algo:primal_dual}, \texttt{oracle} denotes the deterministic single-scenario solver of Assumption~\ref{ass:oracle}, called with a linear perturbation of the objective.
The routine \texttt{perturbed} implements the sparse-perturbation decomposition step~\eqref{eq:primalUpdatePerturbation}: it draws $\texttt{nb\_samples}$ independent perturbations, solves the resulting deterministic problems, and returns a Monte Carlo estimate of the moment target $\mu_i$. 
The object \texttt{Fenchel\_Young\_Loss} is the sparse-perturbation Fenchel--Young loss generated by~\eqref{eq:perturbation_moment}.
In the coordination step~\eqref{eq:coordination_w_dist}, \texttt{compute\_gradient} differentiates this loss with respect to the model output $\varphi_w(x_i)$, and Adam updates the parameters~$w$.
Across the experiments, the training data set $\calD_{\texttt{train}}$ contains context--noise observations used to train the policy, while validation and test data sets are used only to tune hyperparameters and report out-of-sample performance.

    The first step of Algorithm~\ref{algo:primal_dual} is a classic uncoordinated imitation-learning framework with Fenchel--Young losses~\parencite{batyCombinatorialOptimizationEnrichedMachine2024}, which we use as a benchmark below.
The main differences with the exact primal-dual algorithm of Eq~\eqref{eq:alternating_product} are that: i) we approximate the expectation in~\eqref{eq:primalUpdatePerturbation} with a Monte Carlo estimate, and ii) we approximate the coordination step~\eqref{eq:coordination_w_dist} with a stochastic gradient descent.
}

    \subsection{Two-stage minimum weight spanning tree}\label{subsec:two_stage_max_weight_spanning_tree_md}
    
    \paragraph{Problem and data.}
    {We consider the contextual two-stage minimum weight spanning tree problem of \textcite{dalleLearningCombinatorialOptimization2022}: given an undirected graph $G=(V,E)$, the goal is to build a spanning tree over two stages at minimum cost, where second-stage edge costs depend on an exogenous noise $\bfxi$ unknown at the first stage, but a context $\bfx$ correlated to $\bfxi$ is observed. We refer to \textcite{dalleLearningCombinatorialOptimization2022} for the full problem formulation and to the open-source Julia package\footnote{\url{https://batyleo.github.io/TwoStageSpanningTree.jl/stable/}} for the implementation. Instances are defined on $20\times 20$ grid graphs; we use $\texttt{train\_size}=50$, $\texttt{val\_size}=50$, and $\texttt{test\_size}=50$ instances, each with $20$ scenarios ($n_{\texttt{train}}=1000$ context--noise observations). The full mathematical formulation, the policy model, and the algorithm hyperparameters are detailed in Appendix~\ref{sec:appendix_spanning_tree_details}.}

    \paragraph{Policies and benchmarks.}
    {We derive three benchmark policies for this problem.
    The median policy~$\pi_\texttt{median}$ is not learned: it solves the deterministic single-scenario problem~\eqref{eq:DeterministicProblem} after replacing the unknown cost vector by an estimator of its median. 
    The uncoordinated imitation policy~$\pi_{w^{(1)}}$ solves one deterministic problem~\eqref{eq:DeterministicProblem} per scenario and then imitates the resulting solutions with a Fenchel--Young loss~\parencite{batyCombinatorialOptimizationEnrichedMachine2024}.
    The fully-coordinated imitation policy~$\pi_{w^L}$ imitates first-stage solutions~$y_i^L$ obtained by solving, for each context~$x_i$, a sample average approximation problem with several noise realizations. We compute these targets with a Lagrangian heuristic as in~\textcite[Section~6.5]{dalleLearningCombinatorialOptimization2022}. Both imitation benchmarks use the same neural-network architecture as our primal-dual policy, but generating the fully-coordinated training set is much more computationally intensive.}
    
    \paragraph{Results.}
    We plot validation and test estimated average gaps over iterations in Figure~\ref{fig:mirror_descent_2stage_st_md}. 
    {The median policy~$\pi_{\texttt{median}}$ has roughly $12\%$ average validation and test gaps, while the fully-coordinated policy~$\pi_{w^L}$ reaches average gaps close to~$2\%$. The uncoordinated policy~$\pi_{w^{(1)}}$ reaches $4.3\%$ average validation and test gaps. Along the outer iterations, the current-weight policy~$\pi_{w^{(t)}}$ shows small oscillations, while the averaged policy~$\pi_{\bar w}$, based on $\bar w^{(t)} = \frac{1}{t} \sum_{t' \leq t} w^{(t')}$, converges more smoothly and reaches the performance of the fully-coordinated benchmark.}

    \begin{figure}[ht]
        \centering
        \includegraphics[width=0.99\textwidth]{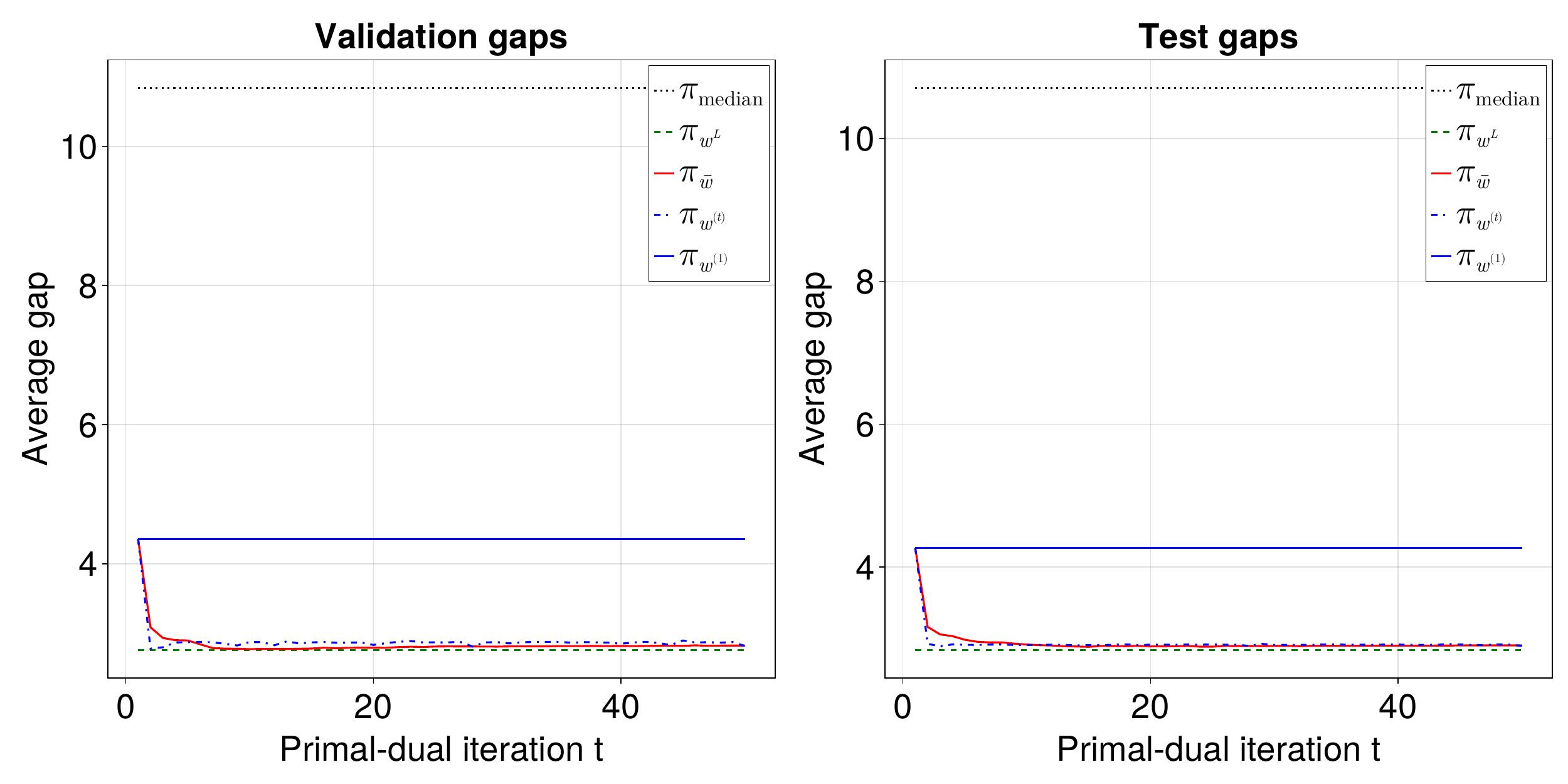}
        \caption{Validation and test average gaps of policies over the iterations of our primal-dual algorithm, for the two-stage minimum weight spanning tree.}
        \label{fig:mirror_descent_2stage_st_md}
    \end{figure}
    
    \begin{result}
        The averaged primal-dual policy~$\pi_{\bar w}$ improves over the uncoordinated imitation benchmark~$\pi_{w^{(1)}}$ and reaches the performance of the computationally demanding fully-coordinated benchmark~$\pi_{w^L}$, using the same input dataset and assumptions as the uncoordinated policy.
    \end{result}

    \subsection{Stochastic vehicle scheduling}\label{subsec:stovsp}

We now turn to a more complex problem, the stochastic vehicle scheduling problem (StoVSP) of \textcite{parmentierLearningApproximateIndustrial2022a}. We study the role of the perturbation scale~$\varepsilon$ across the three policies, and show that the averaged primal-dual policy improves over both imitation baselines at the optimal scale while remaining robust to large values of~$\varepsilon$ where the baselines degrade.

    \paragraph{Problem and data.}
    A first-stage decision assigns a fleet of vehicles to a set of tasks, represented as disjoint paths covering every task exactly once in a task graph; a vehicle performing task $v$ immediately after task $u$ corresponds to a binary variable $y_{u,v}$, so that $\calY(x)$ is again a $\{0,1\}$-polytope satisfying Assumption~\ref{rem:exposed_vertices}. In the second stage, random intrinsic delays~$\gamma_v(\xi)$ at each task propagate along the chosen vehicle routes through the recursion $d_v(\xi) = \gamma_v(\xi) + \max\big(d_u(\xi) - \delta_{u,v}(\xi), 0\big)$, where $\delta_{u,v}(\xi)$ is the slack time available between consecutive tasks $u$ and $v$. The cost~$c(x,y,\xi)$ sums the fixed routing cost of~$y$ and the resulting delay cost. We rely on the open-source implementation of this benchmark\footnote{\url{https://juliadecisionfocusedlearning.github.io/DecisionFocusedLearningBenchmarks.jl/stable/benchmarks/stochastic/02_vsp/}} to generate $200$ instances, with $\texttt{nb\_scenarios}=25$ delay scenarios drawn per instance and  $\texttt{nb\_tasks}=20$ tasks each. We split into $\texttt{train\_samples}=120 \times 25$, $\texttt{val\_samples}=40 \times 25$, and $\texttt{test\_samples}=40 \times 25$ samples.

    \paragraph{Policies and benchmarks.}
    For each instance~$x$, a GLM~$\varphi_w$ maps task features to a score vector~$\theta=\varphi_w(x)$ indexed by the arcs of the task graph. The corresponding combinatorial layer solves the deterministic vehicle-scheduling problem with~$\theta$ as the \texttt{oracle} in Algorithm~\ref{algo:primal_dual}. We use the same primal-dual implementation as in the spanning-tree experiment, fix the regularization weight~$\kappa=1$, and tune the perturbation scale~$\varepsilon$.

    Because the StoVSP instances are small enough, we solve the sample average approximation (SAA) problem of each instance to optimality with a compact mixed-integer program. 
    This gives two reference quantities: an exact fully-coordinated target solution for training~$\pi_{w^L}$, and the exact SAA cost evaluated on the otherwise unobserved test scenarios, which we use as an oracle bound in Table~\ref{tab:stovsp_epsilon_tuning}.
    We compare three learned policies: the uncoordinated imitation policy~$\pi_{w^{(1)}}$, the fully-coordinated imitation policy~$\pi_{w^L}$, and the averaged primal-dual policy~$\pi_{\bar w}$.

    \paragraph{Results.}
    For each value of~$\varepsilon$ on a log-spaced grid from~$10^{-2}$ to~$10^{2}$, we train three policies sharing the same perturbation scale: $\pi_{w^L}$, $\pi_{w^{(1)}}$, and~$\pi_{\bar w}$ with its iteration selected by validation cost (out of $T=20$ outer iterations). Table~\ref{tab:stovsp_epsilon_tuning} reports the resulting average test costs alongside the SAA oracle ($5646.4$).

    \begin{table}[ht]
    \centering
    \caption{Test cost as a function of the perturbation scale~$\varepsilon$ for the stochastic vehicle scheduling problem (SAA oracle: $5646.4$). For~$\pi_{\bar{w}}$, the iteration is selected by validation cost; brackets show the selected iteration out of $T=20$.}
    \label{tab:stovsp_epsilon_tuning}
    \begin{tabular}{l ccc}
    \toprule
    $\varepsilon$ & $\pi_{w^L}$ & $\pi_{w^{(1)}}$ & $\pi_{\bar{w}}$ (best iter.) \\
    \midrule
    $0.01$  & $5672.7\ (+0.47\%)$ & $5691.5\ (+0.80\%)$ & $5672.4\ (+0.46\%)\ [4]$  \\
    $0.1$   & $5661.5\ (+0.27\%)$ & $5663.5\ (+0.30\%)$ & $5663.5\ (+0.30\%)\ [1]$  \\
    $1$     & $5660.7\ (+0.25\%)$ & $5661.5\ (+0.27\%)$ & $5664.0\ (+0.31\%)\ [2]$  \\
    $5$     & $5663.6\ (+0.31\%)$ & $5663.4\ (+0.30\%)$ & $\mathbf{5659.0}\ (+0.22\%)\ [20]$ \\
    $10$    & $5667.5\ (+0.37\%)$ & $5661.7\ (+0.27\%)$ & $\mathbf{5658.5}\ (+0.22\%)\ [19]$ \\
    $50$    & $5687.5\ (+0.73\%)$ & $5690.2\ (+0.78\%)$ & $5664.6\ (+0.32\%)\ [20]$ \\
    $100$   & $5713.5\ (+1.19\%)$ & $5719.5\ (+1.30\%)$ & $5672.7\ (+0.47\%)\ [20]$ \\
    \bottomrule
    \end{tabular}
    \end{table}

At small scales ($\varepsilon \leq 1$), the algorithm converges in one or two iterations, so~$\pi_{\bar w}$ matches~$\pi_{w^{(1)}}$ and all policies are within $0.3\%$ of the oracle.
At the sweet spot ($\varepsilon \in \{5, 10\}$), $\pi_{\bar w}$ runs the full budget and improves over both~$\pi_{w^{(1)}}$ and the computationally expensive~$\pi_{w^L}$ ($+0.22\%$ vs.\ $+0.27$--$0.37\%$).
At large scales ($\varepsilon \geq 50$), both baselines degrade significantly while~$\pi_{\bar w}$ recovers via model averaging, staying within $0.5\%$ of the oracle.

\begin{result}
    On the stochastic vehicle scheduling benchmark, uncoordinated imitation is already strong at small perturbation scales. 
    At the optimal scale ($\varepsilon \in \{5, 10\}$), the averaged primal-dual policy~$\pi_{\bar w}$ outperforms both imitation baselines, and remains robust at large scales where baselines degrade.
\end{result}

\subsection{Contextual assortment}\label{subsec:contextual_assortment}

We evaluate Algorithm~\ref{algo:primal_dual} in a contextual stochastic assortment setting where the customer context changes the optimal assortment~\parencite{chen2020dynamic}.
We compare with contextual SAA baselines, assess scalability to larger catalogs, and show that the learned policy transfers to new catalogs unseen at training time.
Instance parameters, model architecture, and hyperparameters are detailed in Appendix~\ref{sec:appendix_assortment_details}.

\paragraph{Problem and data.}
Each product $i$ has a feature vector $x_i^p$ (first coordinate: price), and the seller observes a customer context $x^c$ before selecting an assortment $y\in\{0,1\}^N$ with at most $3$ offered products.
Customer utility for an offered product is $u_i=(x_i^p)^\top B x^c+\varepsilon_i$ with i.i.d.\ Gumbel noise~$\varepsilon_i$; the seller receives the price of the chosen product, or zero if the customer takes the outside option.

\paragraph{Policies and benchmarks.}
We use a learned bilinear score model over customer and product features; implementation details are given in Appendix~\ref{sec:appendix_assortment_details}. 
We compare against non-contextual SAA (using historical scenarios in the training set), nearest-neighbor (kNN-SAA), Gaussian-kernel (Gaussian-SAA), and random-forest (RF-SAA) contextual baselines~\parencite{bertsimas2020predictive}, and the uncoordinated imitation policy~$\pi_{w^{(1)}}$.
Hyperparameters are selected by validation performance.

\paragraph{Results.}
\textbf{Small catalog.}
For the small catalog setting $N=10$, Table~\ref{tab:contextual_assortment_small_benchmark} reports mean values by instance seed; all entries except SAA are relative improvements over SAA.

The primal-dual iterations turn a poor uncoordinated imitation policy into a strong contextual policy. The validation-selected policy $\bar{\pi}^{50}$ always improves over SAA, is most of the time better than the contextual-SAA prescriptions, and gives the best average performance.

\begin{table}[!ht]
  \centering
  \resizebox{\textwidth}{!}{%
  \begin{tabular}{lcccccccc}
    \toprule
    \textbf{Seed} & \textbf{SAA} & \textbf{kNN} & \textbf{Gaussian} & \textbf{RF} & $\boldsymbol{\pi_{w^{(1)}}}$ & $\boldsymbol{\pi^{50}}$ & $\boldsymbol{\bar{\pi}^{50}}$ & $\boldsymbol{\bar{\pi}^{100}}$ \\
    \midrule
    1 & 9.07 & -0.3\% & -0.1\% & -0.1\% & -34.3\% & +0.2\% & \textbf{+0.2\%} & -0.0\% \\
    2 & 6.39 & +3.8\% & \textbf{+4.5\%} & +4.5\% & -47.0\% & -1.1\% & +1.5\% & +1.8\% \\
    3 & 5.65 & +11.0\% & +11.6\% & +11.6\% & -30.9\% & +13.9\% & \textbf{+14.4\%} & \textbf{+14.4\%} \\
    4 & 7.25 & +9.8\% & +8.7\% & +9.3\% & -40.4\% & +8.7\% & \textbf{+10.7\%} & +10.6\% \\
    5 & 7.34 & +0.4\% & +0.9\% & +0.8\% & -30.0\% & -0.6\% & +1.2\% & \textbf{+1.3\%} \\
    \midrule
    Average & 7.14 & +4.4\% & +4.6\% & +4.7\% & -36.4\% & +3.7\% & \textbf{+5.6\%} & \textbf{+5.6\%} \\
    \bottomrule
  \end{tabular}%
  }
  \caption{Small fixed-catalog benchmark. The SAA column reports mean test value; all other entries report relative improvement over SAA. Entries are averaged over data seeds; $\pi^{50}$ is the last iterate after 50 outer iterations, while $\bar{\pi}^{50}$ and $\bar{\pi}^{100}$ are validation-selected iterates. Primal-dual entries are additionally averaged over ten algorithmic replications per instance--data pair.}
  \label{tab:contextual_assortment_small_benchmark}
\end{table}

\textbf{Scaling.}
The larger-scale runs show the same overall pattern as the small benchmark, and in this pooled comparison the primal-dual policies improve over the contextual-SAA prescription. At $N=25$ (resp. $N=50$), the best primal-dual policy improves over SAA by $2.09\%$ (resp. $1.62\%$), compared with $0.80\%$ (resp. $0.45\%$) for kNN-10.

As table~\ref{tab:contextual_assortment_scaling} shows, contextual-SAA methods have little or no training cost, but solve a new weighted SAA problem at each test context. The learned primal-dual policies have a heavier offline training phase, but their online evaluation is essentially a forward pass, leading to orders of magnitudes faster inference.

\begin{table}[!ht]
  \centering
  \small
  \setlength{\tabcolsep}{4pt}
  \begin{tabular}{lcccccc}
    \toprule
    & \multicolumn{2}{c}{$N=25$}
    & \multicolumn{2}{c}{$N=50$}
    & \multicolumn{2}{c}{$N=100$} \\
    \cmidrule(lr){2-3}\cmidrule(lr){4-5}\cmidrule(lr){6-7}
    \textbf{Method}
    & \textbf{Train (s)} & \textbf{Test (ms)}
    & \textbf{Train (s)} & \textbf{Test (ms)}
    & \textbf{Train (s)} & \textbf{Test (ms)} \\
    \midrule
    SAA
      & 3.0 & --
      & 11 & --
      & 81 & -- \\
    kNN-10
      & 0.0 & 179
      & 0.0 & 372
      & 0.0 & 1587 \\
    Gaussian-SAA
      & 0.0 & 2198
      & 0.0 & 10696
      & 0.0 & 61361 \\
    RF-SAA
      & 0.4 & 1225
      & 0.5 & 5724
      & 0.8 & 36558 \\
    $\bar{\pi}^{50}$
      & 274 & 0.013
      & 453 & 0.024
      & 1405 & 0.077 \\
    $\bar{\pi}^{100}$
      & 552 & 0.016
      & 898 & 0.028
      & 2772 & 0.074 \\
    \bottomrule
  \end{tabular}
  \caption{Training time and average inference time, pooled over $25$ instances.}
  \label{tab:contextual_assortment_scaling}
\end{table}

\begin{result}
The primal-dual algorithm outperforms the contextual-SAA baseline in value while reducing inference time by several orders of magnitude, at the price of a heavier training phase.
\end{result}

\begin{remark}
Because the primal-dual algorithm learns a feature-based scoring rule over products, the resulting policy can be evaluated directly on catalogs containing products unseen during training (in which case $\calY(x)$ depends on $x$).
This is not the case for empirical prescriptions such as SAA or kNN-SAA, which are tied to the products observed in the source catalog.
In our transfer experiments (Appendix~\ref{sec:appendix_assortment_details}), this allows the primal-dual policy to remain competitive under catalog shift, and to become particularly advantageous when a large share of target products is new.
\end{remark}

\section{Conclusion and perspectives}\label{sec:conclusion_primal_dual}

    Our numerical experiments show that using policies based on combinatorial optimization layers is orders of magnitude faster at inference time and yields competitive results compared to other contextual stochastic optimization methods on contextual assortment. Furthermore, our approach is scalable to large-scale problems with context-dependent feasible sets $\calY(x)$. The experiments also demonstrate that our deep learning-compatible empirical cost minimization improves upon the literature benchmark for training combinatorial optimization layers based on imitation learning. This advantage is particularly evident on assortment problems where anticipative decisions perform poorly. 
    
    On the theoretical side, even though we have shown that the exact linear-parametric version of the alternating scheme may converge to a local minimum of the surrogate problem, the practical algorithm achieves strong empirical performance in practice. This is backed by two theoretical elements: first, the exact scheme is equivalent to a proximal point algorithm on a function that becomes convex in the small $\kappa$ regime; second, our extension of the regularization by perturbation to the distribution case enables us to obtain low-variance gradient estimates. We believe these tools might be relevant beyond our specific application. The convergence theorem does not cover the neural-network training and Monte Carlo/SGD approximations used in the experiments. 

    Looking ahead, it seems possible to extend our convergence analysis in three ways. 
    First, it relies on exact evaluations of the updates of the algorithm. 
    When we use a regularization by perturbation, we have a Monte Carlo estimate of the decomposition step and we solve the coordination step via stochastic gradient descent. It would be interesting to extend the convergence analysis to that setting.
    Second, a promising research direction would be to leverage the literature on generalization bounds to derive guarantees for the true expected cost minimization problem \parencite{aubin-frankowski_generalization_2024}. Third, some recent works study the case when Assumption~\ref{ass:oracle} is relaxed when learning with Fenchel-Young losses \parencite{vivierardisson2025learninglocalsearchmcmc,vivierardisson2026regularizedlargeneighborhoodsearch}, meaning when we only have a heuristic solver (local search or large neighborhood search) at our disposal. The precise implications for our primal-dual algorithm, and convergence analyses are left as future works.
    Finally, an interesting direction would be to leverage the results on Section~\ref{sec:new_structured_prediction} to propose a new empirical cost minimization algorithm that does not suffer from the non-convexity of the Jensen gap.
\paragraph{Acknowledgements.}

We deeply thank Pierre-Cyril Aubin Frankowski, Jérôme Malick and Yohann De Castro for their feedback on this manuscript, and advice on the mirror descent and alternating minimization literature. Besides, we are grateful to Solène Delannoy Pavy
for sharing her implementation of the contextual stochastic assortment problem.

\section*{Declarations}

\paragraph{Funding and conflict of interest/competing interests.}
This work has been partially supported by Renault Group through the Ph.D. studentship of Louis Bouvier. This support is gratefully acknowledged. The authors have no other competing interests to disclose.

\paragraph{LLM usage.} The authors used large language models (LLMs) to assist in auditing proofs, exploring alternative proof strategies, and identifying relevant literature. All mathematical content was manually edited and thoroughly checked by the authors.

\printbibliography

\newpage

\begin{appendices}

The appendices are organized as follows. Appendix~\ref{sec:regularization_on_distributions} gathers background on regularization over non-full-dimensional convex domains, used throughout the paper. Appendix~\ref{sec:appendix_B} contains the proofs of the tractability results of Section~\ref{subsec:alternating_min_tractability}, of the two new structured-prediction contributions of Section~\ref{sec:new_structured_prediction}, of the surrogate error bound (Theorem~\ref{thm:bound_risk}), and of the equivalence between tuning~$\kappa$ and tuning~$\varepsilon$ (Appendix~\ref{sec:kappa_eps_equivalence_appendix}). Appendix~\ref{sec:convergenceProof} proves the convergence result of Section~\ref{subsec:convergence} (Theorem~\ref{thm:convergence:speed}), building on a generic convergence result for the Bregman proximal point algorithm. Finally, Appendix~\ref{sec:appendix_numerical_exp} details the implementation and hyperparameters of the numerical experiments of Section~\ref{sec:computational_exp_primal_dual}.

\section{Regularization on non-full-dimensional spaces}\label{sec:appendix_A}\label{sec:regularization_on_distributions}

This appendix provides the background on regularization with non-full-dimensional
convex domains that is used throughout the paper. We begin with the definitions of
mirror maps and regularizers, then study the key properties of convex analysis on
non-full-dimensional sets, and close with the link between regularization on the
distribution space~$\Delta^\calY$ and on the moment space~$\calC$.

\subsection{Regularizers, Legendre-type functions, and mirror maps}
\label{sec:regularizer_legendre_mirror}
    We detail here the definitions of mirror maps and regularizers, and study their connections with Legendre-type functions.

    \paragraph{Mirror maps \textcite[Definition 2.1]{juditskyUnifyingMirrorDescent2023}.}
    Let $\Psi: \bbR^d \rightarrow \bbR \cup \{+ \infty\}$ be a function and $\calC \subset \bbR^d$ be a closed convex set. We say that $\Psi$ is a $\calC$\emph{-compatible mirror map} if
    \begin{enumerate}
        \item $\Psi$ is lower-semicontinuous and strictly convex,
        \item $\Psi$ is differentiable on $\inte(\dom(\Psi))$,
        \item the gradient of $\Psi$ takes all possible values, \ie{} $\nabla \Psi\big(\inte(\dom(\Psi))\big) = \bbR^d$.
        \item $\calC \subset \cl\big(\inte(\dom(\Psi))\big)$,
        \item $\inte(\dom(\Psi)) \cap \calC \neq \emptyset$.
    \end{enumerate}
    \begin{remark}
        Let $\calC \subset \bbR^d$ be closed convex set, and~$\Psi$ be a Legendre-type function such that $\calC \subset \cl\big(\inte(\dom(\Psi))\big)$, $\inte(\dom(\Psi)) \cap \calC \neq \emptyset$, and $\dom(\Psi^*) = \bbR^d$. Then $\Psi$ is a $\calC$-compatible mirror map.
    \end{remark}

    \begin{remark}
        Sometimes $\Psi$ is called a Bregman potential, and the term of mirror map is used for its gradient~$\nabla \Psi$.
    \end{remark}

    \paragraph{Regularizers \textcite[Definition 2.8]{juditskyUnifyingMirrorDescent2023}.}
    Let $\calC \subset \bbR^d$ be a closed convex set.
    A function $\Omega: \bbR^d \rightarrow \bbR \cup \{+ \infty\}$ is a $\calC$-\emph{pre-regularizer} if it is strictly convex, lower-semicontinuous, and if $\cl(\dom(\Omega)) = \calC$.
    If in addition $\dom(\Omega^*) = \bbR^d$, then $\Omega$ is said to be a $\calC$-\emph{regularizer}.
    \begin{remark}
        The previous definition is less restrictive than the one of mirror maps. In particular, regularizers are not necessarily differentiable, and their domains may be sub-dimensional.
    \end{remark}

    Let $\Omega \in \Gamma_0(\bbR^d)$ be a proper l.s.c. convex function mapping $\bbR^d$ to $(-\infty,+\infty]$.
    In this section, we consider the \emph{regularized prediction} problem defined as
    \begin{equation}\label{eq:regularized_pred}
        \hat{y}_\Omega(\theta) \in \argmax_{\mu \in \dom(\Omega)} \langle \theta| \mu \rangle - \Omega(\mu),
    \end{equation}
    and introduce some new geometric results related to it, which are useful for the study of our algorithm. 
    
    \subsection{Regularized prediction on non-full-dimensional spaces}\label{subsec:reg_subdim}
    To the best of our knowledge, most of the theory of Fenchel--Young losses has been developed either by introducing a regularization function~$\Omega$ with full-dimensional domain~$\calC$, where~$\Omega$ is Legendre-type, or by using a decomposition~$\Omega := \Psi + \bbI_{\calC}$, where~$\Psi$ is Legendre-type.
    In many applications in operations research, we consider polytopes $\calC$ that are not full-dimensional (the simplex is a case in point).
    When defining~$\Omega$ directly on the polytope (using a perturbation as in \textcite{berthetLearningDifferentiablePertubed2020} for instance), the decomposition~$\Omega := \Psi + \bbI_{\calC}$ is not given. 
    Therefore, when the polytope is not full-dimensional, we do not have access to a Legendre-type function.
    Instead, we have at our disposal a $\calC$-regularizer. 
    
    The following proposition extends classic results of Legendre-type functions to functions with non-full dimensional domains $\calC$, whose restrictions to the affine hull
    of $\calC$ are Legendre-type, while the next constructs a $ \Psi + \bbI_{\calC}$ decomposition.
    \begin{proposition}\label{prop:sub_dimensioal_domain_cvx_analysis}
        Let $\calC \subset \bbR^d$ be a non-empty convex compact set. 
        We consider a proper l.s.c. convex regularization function~$\Omega \in \Gamma_0(\bbR^d)$ with domain $\dom(\Omega)= \calC$. We assume that the restriction of $\Omega$ to $H = \aff(\calC)$, denoted as $\Omega_{|H}$, is Legendre-type (with respect to the metric of $H$, and not the one of $\bbR^d$).
        
        We denote by $V$ the direction of $H$ in $\bbR^d$, and we have the orthogonal sum ${\bbR^d = V \oplus V^\perp}$. We introduce $\Pi_V$, the orthogonal projection onto $V$ in $\bbR^d$. We have the following results:
        \begin{enumerate}
            \item \label{lem:Fenchel_conjugate_subdimension_diff} The Fenchel conjugate of $\Omega$, has full domain, \ie{} $\dom(\Omega^*)= \bbR^d$. Therefore, $\Omega$ is a $\calC$-regularizer (see Appendix~\ref{sec:regularizer_legendre_mirror}).
            Further, the function~$\Omega^*$ is differentiable over $\bbR^d$, and we have the property:
            \begin{equation}\label{eq:subdiff_diff}
                \nabla \Omega^*(\partial \Omega(y)) = y, \quad \forall y \in \relint(\calC).
            \end{equation}
            \item \label{lem:conjugate_constant_perp} Let $\theta \in \bbR^d$, decomposed as $\theta = \theta_V + \theta_{V^\perp}$, where $\theta_V = \Pi_V(\theta)$ and $\theta_{V^\perp} = \theta - \theta_V$, and $y_0 \in \calC$ be any point in $\calC$. The Fenchel conjugate of $\Omega$, denoted as $\Omega^*$, has an affine component over $V^\perp$:
            \begin{equation}\label{eq:affine_orthogonal}
                \Omega^*(\theta) = \Omega^*(\theta_V) + \langle \theta_{V^\perp}| y_0 \rangle.
            \end{equation}
            \item \label{lem:affine_subspace_subgrad} Let $y \in H$, the subdifferential of $\Omega$ at $y$ is given by:
            \begin{equation}\label{eq:subdiff_affine_subspace}
                \partial \Omega(y) = \partial(\Omega_{|H})(y) + V^\perp,
            \end{equation}
            where we have omitted the canonical injection from $H$ to $\bbR^d$ for notational simplicity. In particular, for $y \in \relint(\dom(\Omega))$, we have: \begin{equation}\label{eq:subdiff_relint_domain}
                \partial \Omega(y) = \{\nabla \Omega_{|H}(y)\} + V^\perp.
            \end{equation}
        \end{enumerate}
    \end{proposition}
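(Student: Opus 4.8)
The plan is to establish the three claims in the order (2), (1), (3), since the affine structure of $\Omega^*$ along $V^\perp$ from claim (2) is exactly what makes the differentiability argument in claim (1) transparent, and claim (3) then follows from a direct inspection of the subgradient inequality.

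First I would prove claim (2) by a direct computation. Fix $y_0 \in \calC$ and write any $y \in \calC$ as $y = y_0 + v$ with $v \in V$, which is possible since $\calC \subset H = y_0 + V$. For $\theta = \theta_V + \theta_{V^\perp}$ with $\theta_V = \Pi_V(\theta)$, orthogonality gives $\langle \theta_{V^\perp} | y\rangle = \langle \theta_{V^\perp}|y_0\rangle$ for every $y \in \calC$, because $\langle \theta_{V^\perp}|v\rangle = 0$. Hence in the supremum defining $\Omega^*(\theta) = \sup_{y\in\calC}\langle\theta|y\rangle - \Omega(y)$ the $V^\perp$-part of $\theta$ contributes the constant $\langle\theta_{V^\perp}|y_0\rangle$, yielding $\Omega^*(\theta) = \langle\theta_{V^\perp}|y_0\rangle + \Omega^*(\theta_V)$. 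This proves (2) and shows $\Omega^*$ is affine along $V^\perp$, so that differentiability of $\Omega^*$ on $\bbR^d$ reduces to differentiability of $\theta_V\mapsto\Omega^*(\theta_V)$ on $V$.

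For claim (1), full domain is immediate: since $\calC$ is compact and $\Omega$ is proper l.s.c.\ it attains a finite minimum on $\calC$, so $\langle\theta|y\rangle - \Omega(y)$ is bounded above on $\calC$ for every $\theta$, giving $\dom(\Omega^*) = \bbR^d$; this is precisely the defining property making $\Omega$ a $\calC$-regularizer. For differentiability I would transport the restriction to a genuine linear space: set $g(v) := \Omega(y_0 + v)$ for $v \in V$, which is Legendre-type on the Euclidean space $V$ by hypothesis, with compact domain $\calC - y_0$. The same computation as above gives $\Omega^*(\theta_V) = \langle\theta_V|y_0\rangle + g^*(\theta_V)$ for $\theta_V \in V$. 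By Remark~\ref{rem:theorem_rockafellar_legendre}, $g^*$ is again Legendre-type on $V$; moreover compactness of $\dom(g)$ forces $\dom(g^*) = V$, so $\inte(\dom(g^*)) = V$ has empty boundary and the essential smoothness of $g^*$ means it is differentiable on all of $V$. Combined with claim (2), $\Omega^*$ is differentiable on $\bbR^d$.

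The inversion identity in (1) then follows from Fenchel duality: for $y\in\relint(\calC)$ the subdifferential $\partial\Omega(y)$ is nonempty, and any $\theta\in\partial\Omega(y)$ satisfies Fenchel-Young equality, equivalently $y\in\partial\Omega^*(\theta)$; since $\Omega^*$ is differentiable, $\partial\Omega^*(\theta)=\{\nabla\Omega^*(\theta)\}$, whence $\nabla\Omega^*(\theta)=y$. For claim (3) I would observe that for $y \in \calC$ and any $z \in \calC \subset H$ one has $z - y \in V$, so the subgradient inequality $\Omega(z) \ge \Omega(y) + \langle\theta|z-y\rangle$ depends on $\theta$ only through $\theta_V$; thus $\theta\in\partial\Omega(y)$ iff $\theta_V\in\partial(\Omega_{|H})(y)$ with $\theta_{V^\perp}\in V^\perp$ arbitrary, giving $\partial\Omega(y)=\partial(\Omega_{|H})(y)+V^\perp$, and the relative-interior case is the specialization where $\Omega_{|H}$, being Legendre-type, is differentiable so $\partial(\Omega_{|H})(y)=\{\nabla\Omega_{|H}(y)\}$. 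The main obstacle I anticipate is the differentiability argument: one must carefully transport the intrinsic Legendre structure on the affine hull $H$ (stated with respect to the metric of $H$) to the linear space $V$, and then exploit compactness of $\calC$ to conclude that $g^*$ has full domain $V$ --- this is exactly what upgrades \emph{essentially smooth} to \emph{differentiable everywhere}, and it is where the hypotheses on $\calC$ and $\Omega_{|H}$ are genuinely used.
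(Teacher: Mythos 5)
Your proposal is correct, and for claims (2), (3), and the full-domain part of (1) it follows the paper's own proof essentially verbatim: the same orthogonality computation $\langle \theta_{V^\perp}|y\rangle = \langle\theta_{V^\perp}|y_0\rangle$ for (2), and the same observation that the subgradient inequality only sees $\Pi_V(\theta)$ because $\dom(\Omega)\subset H$ for (3). Where you genuinely diverge is the differentiability of $\Omega^*$. The paper identifies $\partial\Omega^*(\theta)$ with $\argmax_{y}\,\langle\theta|y\rangle-\Omega(y)$ (Rockafellar, Theorem 23.5) and concludes the subdifferential is a singleton from strict convexity of $\Omega$; this is quick, but it is slightly terse, since $\Omega$ is strictly convex only on $\relint(\calC)$, and one implicitly needs essential smoothness of $\Omega_{|H}$ to rule out maximizers on the relative boundary before strict convexity can be invoked. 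Your route instead translates $\Omega_{|H}$ to the function $g$ on the linear space $V$, uses compactness of $\calC$ to get $\dom(g^*)=V$, and then applies the Legendre duality theorem (Remark~\ref{rem:theorem_rockafellar_legendre}) so that essential smoothness of $g^*$ upgrades, for free, to differentiability on all of $V$; the extension to $\bbR^d$ then comes from the affine decomposition of claim (2), which is precisely why your ordering (2) before (1) is needed while the paper's is not. The trade-off: the paper's argument is shorter and works pointwise with the argmax, whereas yours packages the boundary subtlety into a cited theorem and makes the passage from the intrinsic geometry of $H$ to $\bbR^d$ fully explicit. Both are valid; your version is arguably the more airtight write-up of the differentiability step.
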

    We illustrate Proposition~\ref{prop:sub_dimensioal_domain_cvx_analysis} in Figure~\ref{fig:sub_dimensional_space_analysis}, in the case $d=3$, $H$ is an affine hyperplane, and $V^\perp$ a straight line. Arrows represent the links between primal and dual variables, involving the subdifferential of $\Omega$ and the gradient of $\Omega^*$. The proof relies on classic convex duality results.
    \begin{figure}[ht]
        \centering
    \includegraphics[width=0.8\textwidth]{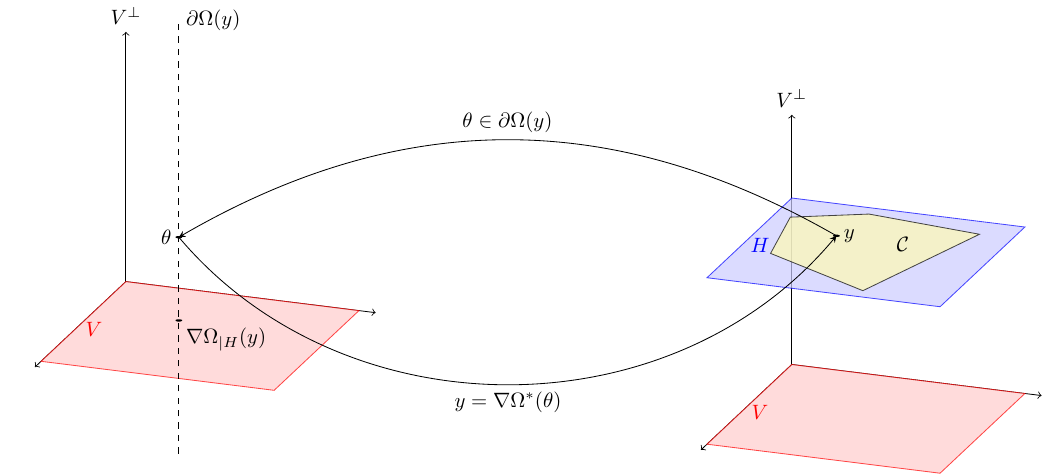}
        \caption{Primal-dual maps for non-full-dimensional $\dom(\Omega)$.}
        \label{fig:sub_dimensional_space_analysis}
    \end{figure}

    \begin{proof}[Proof of Proposition~\ref{prop:sub_dimensioal_domain_cvx_analysis}] Given the assumptions in the preamble of the proposition,
        \begin{enumerate}
            \item The function $\Omega$ belongs to the set of proper l.s.c. convex functions $\Gamma_0(\bbR^d)$, thus for any $\theta \in \bbR^d$, the supremum over the compact $\calC$ of $\langle \theta| \cdot \rangle - \Omega(\cdot)$ is finite and attained, thus $\dom(\Omega^*) = \bbR^d$.
        Recall that, as $\Omega$ is 
        in~$\Gamma_0(\bbR^d)$, using the computations of \textcite[Theorem 23.5]{Rockafellar+1970}, $\partial \Omega^*(\theta)=\argmax_{y} \langle \theta| y \rangle - \Omega(y)$. 
        As $\Omega$ is strictly convex, the argmax is reduced to a single point, and  $\Omega^*$ is differentiable over $\bbR^d$. 
        Therefore, we have for $(y ,\theta) \in (\bbR^d)^2$:
            \[\theta \in \partial \Omega(y) \iff y \in \partial \Omega^*(\theta) \iff y = \nabla \Omega^*(\theta).\]
        Therefore, for $y \in \relint(\calC)$, we have $\nabla \Omega^*(\partial \Omega(y)) = y$.
            \item Let $y_0 \in \calC$ and $\theta \in \bbR^d$ be decomposed as $\theta = \theta_V + \theta_{V^\perp}$.
        Note that for all $y \in \calC$ we have $\langle \theta_{V^\perp}| y \rangle = \langle \theta_{V^\perp}| y_0 \rangle$, since $\theta_{V^\perp}$ is orthogonal to the direction of the affine hull of $\calC$. 
        Thus, 
        \begin{align*}
            \Omega^*(\theta) & = \sup_{y \in \calC} \langle \theta_V + \theta_{V^\perp}| y \rangle - \Omega(y) = \langle \theta_{V^\perp}| y_0 \rangle + \sup_{y \in \calC} \langle \theta_V| y \rangle - \Omega(y),
        \end{align*}
        which yields the result.
        \item Let $(y, y') \in H^2$, $\theta \in \partial \Omega(y)$, by definition of the subgradients, we have
        \[\Omega(y') - \Omega(y) \geq \langle \theta| y'-y \rangle = \langle \Pi_V(\theta)| y'-y \rangle,\]
        since $y'-y$ belongs to $V$. Therefore we have shown that ${\Pi_V(\theta) \in \partial(\Omega_{|H})(y)}$.
        
        Conversely, let $y \in H$ and $\theta \in V$ be an element of $\partial(\Omega_{|H})(y)$, for $y' \in \bbR^d$ and $\tilde \theta \in V^\perp$,
        \[\Omega(y') \geq \Omega(y) + \langle \theta + \tilde \theta| y'-y \rangle,\]
        since either $y' \notin H$ and $\Omega(y') = + \infty$, or $y' \in H$ and $y'-y \in V$ therefore $\langle \tilde \theta| y'-y \rangle =0$. We have shown the first equality in Equation~\eqref{eq:subdiff_affine_subspace}.
        
        Equation~\eqref{eq:subdiff_relint_domain} comes from the fact that $\Omega_{|H}$ is Legendre-type, thus differentiable, and for $y \in \relint(\dom(\Omega))$, $\partial(\Omega_{|H})(y) = \{\nabla (\Omega_{|H})(y)\}$.
    \end{enumerate}    
    \end{proof}
    
    We now show that any $\Omega$ satisfying the assumption of Proposition~\ref{prop:sub_dimensioal_domain_cvx_analysis}, can be written as $\Omega= \Psi + \bbI_\calC$ for some Legendre-type function $\Psi$. 
    
    \begin{proposition}\label{prop:sum_sub_dim}
        Let $\calC \subset \bbR^d$ be a convex compact set, and  $\Omega$ be a proper l.s.c. convex regularization function in~$\Gamma_0(\bbR^d)$, with domain $\dom(\Omega)= \calC$. 
        We assume that the restriction of $\Omega$ to $H = \aff(\calC)$ is Legendre-type (with respect to the metric of $H$).
        Then, there exists a Legendre-type function $\Psi$, with $\calC \subset \cl\big(\inte(\dom(\Psi))\big)$, $\inte(\dom(\Psi)) \cap \calC \neq \emptyset$, and such that
        \[\Omega = \Psi + \bbI_{\calC}, \quad \text{and} \quad \dom(\Psi^*) = \bbR^d.\]
        Besides, let~$V$ be the direction of~$H$ in~$\bbR^d$, we have the direct sum~$\bbR^d = V \oplus V^\perp$. Given a vector $\theta \in \bbR^d$, there exists a vector $z \in V^\perp$ such that
        \[\nabla \Psi\big(\nabla \Omega^*(\theta) \big) = \theta + z.\]
    \end{proposition}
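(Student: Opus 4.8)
The plan is to build $\Psi$ by extending the Legendre-type function $\Omega_{|H}$ off the affine hull $H = \aff(\calC)$ in the orthogonal directions $V^\perp$, adding a quadratic that is harmless on $\calC$ but renders the domain full-dimensional. Fix $y_0 \in \relint(\calC)$, and let $\Pi_H(\mu) = y_0 + \Pi_V(\mu - y_0)$ be the orthogonal projection onto $H$, so that $\mu = \Pi_H(\mu) + \mu_\perp$ with $\mu_\perp = \Pi_{V^\perp}(\mu - y_0) \in V^\perp$. I would set
\[ \Psi(\mu) := \Omega_{|H}\big(\Pi_H(\mu)\big) + \tfrac12 \|\mu_\perp\|^2. \]
On $\calC \subset H$ one has $\mu_\perp = 0$ and $\Pi_H(\mu) = \mu$, so $\Psi = \Omega$ there; since $\dom(\Omega) = \calC$ and both $\Psi + \bbI_\calC$ and $\Omega$ are $+\infty$ off $\calC$, the identity $\Omega = \Psi + \bbI_\calC$ is immediate. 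Its domain is the cylinder $\dom(\Psi) = \calC + V^\perp$ (the quadratic has full domain $V^\perp$), whose interior is $\relint(\calC) + V^\perp$; this set is non-empty, meets $\calC$ exactly in $\relint(\calC)$, and has closure containing $\calC$, which gives the three stated domain conditions.

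Next I would check that $\Psi$ is Legendre-type. Because $\mu \mapsto (\Pi_H(\mu), \mu_\perp)$ is a bijective affine change of coordinates into the orthogonal blocks $V$ and $V^\perp$, and $\Psi$ is the sum of a strictly convex function of the first block ($\Omega_{|H}$ on $\relint\calC$) and a strictly convex function of the second ($\tfrac12\|\cdot\|^2$ on $V^\perp$), $\Psi$ is strictly convex and differentiable on $\inte(\dom\Psi)$, with $\nabla \Psi(\mu) = \nabla\Omega_{|H}(\Pi_H(\mu)) + \mu_\perp$, the first term lying in $V$ and the second in $V^\perp$. The relative boundary of $\dom(\Psi)$ comes only from the relative boundary of $\calC$ inside $H$ (the $V^\perp$ factor is boundaryless), and there $\|\nabla\Omega_{|H}\| \to +\infty$ by essential smoothness of $\Omega_{|H}$; hence $\|\nabla\Psi\| \to +\infty$ and condition (iii) holds. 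Finally, $\dom(\Psi^*) = \bbR^d$ follows by conjugating across the orthogonal decomposition: $\Psi^*$ splits into a term in $\theta_V$ governed by $(\Omega_{|H})^*$ and a term $\tfrac12\|\theta_{V^\perp}\|^2$ plus an affine constant, and since $\calC$ is compact, $\Omega_{|H}$ is bounded and $(\Omega_{|H})^*$ is finite on all of $V$ (the argument of item~\ref{lem:Fenchel_conjugate_subdimension_diff} of Proposition~\ref{prop:sub_dimensioal_domain_cvx_analysis} applied inside $H$), so $\Psi^*$ is finite everywhere.

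For the gradient identity, let $\theta \in \bbR^d$ and $y := \nabla\Omega^*(\theta)$, well defined since $\Omega^*$ is differentiable by item~\ref{lem:Fenchel_conjugate_subdimension_diff} of Proposition~\ref{prop:sub_dimensioal_domain_cvx_analysis}. By Fenchel reciprocity $\theta \in \partial\Omega(y)$, and since $\partial\Omega$ is empty on the relative boundary of $\calC$ (by \eqref{eq:subdiff_affine_subspace} and essential smoothness of $\Omega_{|H}$), we get $y \in \relint(\calC) \subset \inte(\dom\Psi)$, so $\nabla\Psi(y)$ exists. Equation~\eqref{eq:subdiff_relint_domain} gives $\partial\Omega(y) = \{\nabla\Omega_{|H}(y)\} + V^\perp$, hence $\theta = \nabla\Omega_{|H}(y) + w$ for some $w \in V^\perp$. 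As $y \in H$ yields $y_\perp = 0$, the gradient formula above gives $\nabla\Psi(y) = \nabla\Omega_{|H}(y) = \theta - w$, i.e. $\nabla\Psi\big(\nabla\Omega^*(\theta)\big) = \theta + z$ with $z = -w \in V^\perp$, as claimed.

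The delicate points, and where I would spend the care, are bookkeeping rather than conceptual: identifying the intrinsic gradient $\nabla\Omega_{|H}$ (an element of the tangent space $V$) with its image in $\bbR^d$ under the chain rule through $\Pi_H$, and verifying the conjugate splitting and the boundary blow-up with the correct relative topology of $H$. The one genuine external input is that $\Omega_{|H}$ is Legendre-type with compact domain, which simultaneously supplies the gradient blow-up needed for essential smoothness of $\Psi$ and the finiteness of $(\Omega_{|H})^*$ on all of $V$ needed for $\dom(\Psi^*) = \bbR^d$.
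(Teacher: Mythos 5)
Your proposal is correct and follows essentially the same route as the paper's proof: the same construction $\Psi(\mu) = \Omega_{|H}(\Pi_H(\mu)) + \tfrac12\|\mu_\perp\|^2$, the same cylinder-domain and boundary blow-up verifications, and the same conjugate splitting $\Psi^*(\theta) = (\Omega_{|H})^*(\theta_V) + \tfrac12\|\theta_{V^\perp}\|^2$ plus an affine term. The only cosmetic deviations are that you keep the affine hull $H$ with a base point $y_0$ where the paper reduces w.l.o.g.\ to $H = V$ by translation, and that you obtain the final identity $\nabla\Psi(\nabla\Omega^*(\theta)) = \theta + z$ via Fenchel reciprocity and the subdifferential decomposition of Proposition~\ref{prop:sub_dimensioal_domain_cvx_analysis}, where the paper composes $\nabla\Omega_{|V}$ with $\nabla\Omega^*_{|V}$ using the Legendre inverse-gradient property.
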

    This result can be seen as the converse, in a restricted setting (with more assumptions on $\Omega$), of a proposition by \textcite[Proposition 2.11]{juditskyUnifyingMirrorDescent2023}, where given a $\calC$-compatible mirror map $\Psi$, a $\calC$-regularizer~$\Omega$ is defined as $\Omega := \Psi + \bbI_{\calC}$. Indeed, we know with Proposition~\ref{prop:sub_dimensioal_domain_cvx_analysis} that $\Omega$ defined in the preamble of Proposition~\ref{prop:sum_sub_dim} is (in particular) a $\calC$-regularizer.    

    \begin{proof}[Proof of Proposition \ref{prop:sum_sub_dim}]
        Let $\Pi_V$ be the linear orthogonal projection onto $V$.
        W.l.o.g., we consider the case $H = V$, which means the affine hull of the domain of $\Omega$ is actually a vector subspace in $\bbR^d$. Extending to the affine case involves a translation. We define the following map:
        \begin{align*}
            \Psi: & \quad \bbR^d \rightarrow \bbR \\
            & \quad y \mapsto \Omega(\Pi_V(y)) + \frac{1}{2} ||y - \Pi_V(y)||^2_2, 
        \end{align*}
        where $\Pi_V(y)$ is seen as an element of $\bbR^d$ here. When it is the input of the restriction of~$\Omega$ to $V$, we see it as an element of $V$.
        Notice that by definition, $\Omega = \Psi + \bbI_\calC$. We are going to prove that $\Psi$ is a Legendre-type function. 
        We first show that $\Psi$ defined as such is essentially smooth by checking the three properties of the definition.
        \begin{enumerate}
            \item Since $\dom(\Omega) = \calC \subset V$ and $\|\cdot\|^2_2$ is defined over $\bbR^d$, the domain of $\Psi$ is ${\dom(\Psi) = \calC \oplus V^\perp}$, and $\inte(\dom(\Psi)) = \relint(\calC) \oplus V^\perp$, which is not empty. We therefore have
             \[\calC \subset \cl\big(\inte(\dom(\Psi))\big), \quad \text{and} \quad \inte(\dom(\Psi)) \cap \calC \neq \emptyset.\]
            
            \item By composition with linear projections and sum, $\Psi$ is differentiable over $\inte(\dom(\Psi))$. We denote by $J_{\Pi_V}$ the Jacobian of~$\Pi_V$, that can be seen as the canonical injection of~$V$ into~$\bbR^d$.
            Let now $(y, h) \in \inte(\dom(\Psi)) \times \bbR^d$ be two vectors such that $y+h \in \dom(\Psi)$,
            \begin{align*}
                \Psi(y+h) &= \Omega(\Pi_V(y+h)) + \frac{1}{2}||y+h - \Pi_V(y+h)||^2_2,\\
                & = \Omega(\Pi_V(y) + \Pi_V(h)) + \frac{1}{2}||y-\Pi_V(y) + h - \Pi_V(h)||^2_2,\\
                & = \Omega_{|V}(\Pi_V(y)) + \langle J_{\Pi_V} \nabla(\Omega_{|V})(\Pi_V(y))| \Pi_V(h) \rangle + o(||\Pi_V(h)||),\\
                & + \frac{1}{2}||y-\Pi_V(y)||^2_2 + \langle y-\Pi_V(y)| h-\Pi_V(h)\rangle + o(||h-\Pi_V(h)||),\\
                & = \Omega(\Pi_V(y)) + \langle J_{\Pi_V} \nabla(\Omega_{|V})(\Pi_V(y)) | \Pi_V(h) + h -  \Pi_V(h)\rangle + o(||h||),\\
                & + \frac{1}{2}||y-\Pi_V(y)||^2_2 + \langle y-\Pi_V(y)| h-\Pi_V(h) + \Pi_V(h)\rangle + o(||h||),\\
                & = \Psi(y) + \langle J_{\Pi_V} \nabla(\Omega_{|V})(\Pi_V(y)) +  y-\Pi_V(y) | h \rangle + o(||h||).
            \end{align*}
            In the computations above we use the linearity of $\Pi_V$, the fact that $\Omega$ and $\Omega_{|V}$ coincide over $V$, that $\Omega_{|V}$ is Legendre-type thus differentiable, and the orthogonal sum $\bbR^d = V \oplus V^\perp$.
            Therefore, we have shown that the gradient of $\Psi$ is given by:
            \[\nabla \Psi(y) = \underbrace{J_{\Pi_V}}_{\substack{\text{Canonical}\\\text{injection} \\ V\rightarrow \bbR^d}} \nabla \Omega_{|V}(\Pi_V(y)) + y - \Pi_V(y).\]
        \item The boundary of $\inte(\dom(\Psi))$ is \[\bdry\big(\inte(\dom(\Psi))\big) = \cl\big(\relint(\calC)\big) \backslash \relint(\calC) \oplus V^\perp.\]
        Indeed, $\inte(\dom(\Psi)) = \relint(\calC) \oplus V^\perp$ is isomorphic to $\relint(\calC) \times V^\perp$, $\bdry(V^\perp) = \emptyset$, $\cl(V^\perp) = V^\perp$, and for two sets $S_1$ and $S_2$
        \[\bdry(S_1\times S_2) = \big(\bdry(S_1)\times \cl(S_2) \big) \cup \big(\cl(S_1)\times \bdry(S_2)\big),\] 
        where the boundaries in the right-hand side above are computed with respect to the topology corresponding to each set. 
    
        Let now $\mu$ be in $ \bdry\big(\inte(\dom(\Psi))\big)$, and let $(\mu_i)_{i \in \bbN}$ be a sequence in~$\big(\inte(\dom(\Psi))\big)^{\bbN}$, such that 
        \[\lim _{i\to +\infty} \mu_i = \mu = \underbrace{\Pi_V(\mu)}_{\in \cl\big(\relint(\calC)\big) \backslash \relint(\calC)} + \underbrace{\mu - \Pi_V(\mu)}_{\in V^\perp}.\] 
        
        Since $\Pi_V$ is continuous, \[\lim_{i\to +\infty} \underbrace{\Pi_V(\mu_i)}_{\in \relint(\calC)} = \Pi_V(\mu), \quad \text{and} \quad \lim_{i\to +\infty} \underbrace{\mu_i - \Pi_V(\mu_i)}_{\in V^\perp} = \mu - \Pi_V(\mu).\]
        Now, using the fact that $\Omega_{|V}$ is Legendre-type, the expression of $\nabla \Psi$ above, and the reverse triangular inequality,
        \begin{align*}
            ||\nabla \Psi(\mu_i)|| &= || J_{\Pi_V} \nabla(\Omega_{|V})(\Pi_V(\mu_i)) + \mu_i - \Pi_V(\mu_i)||, \\
            & \geq \big| \underbrace{||J_{\Pi_V} \nabla(\Omega_{|V})(\Pi_V(\mu_i))||}_{\to + \infty} - \underbrace{||\mu_i - \Pi_V(\mu_i)||}_{\to ||\mu - \Pi_V(\mu)||} \big|.
        \end{align*}
        Therefore, we have shown that $\lim_{i \to + \infty} ||\nabla \Psi(\mu_i)|| = +\infty$.
    \end{enumerate}
        
    Let us finally show that $\Psi$ is strictly convex. We first remark that $\dom(\Psi)$ is convex since both $\calC$ and $V^\perp$ are. Let $(y_1, y_2)$ be in $ (\dom(\Psi))^2$, with $y_1 \neq y_2$, and let $t$ be in $(0,1)$. To ease notations, we denote $y_i^V=\Pi_V(y_i)$, and $y_i^\perp = y_i-\Pi_V(y_i)$,
    \begin{align*}
            \Psi(ty_1 + (1-t)y_2) &= \Omega(ty_1^V + (1-t)y_2^V) + \frac{1}{2} ||t y_1^\perp + (1-t)y_2^\perp||^2_2,\\
             & < t \Omega(y_1^V) + (1-t)\Omega(y_2^V) 
             + t \frac{1}{2} \|y_1^\perp\|^2 + (1-t)\frac{1}{2} \|y_2^\perp\|^2 ,\\
             & = t \Psi(y_1) + (1-t)\Psi(y_2).
        \end{align*}
      The first line is by linearity of the orthogonal projection onto $V$. 
      Further, as $y_1 \neq y_2$ we have 
    $y_1^V \neq y_2^V$ or $y_1^\perp \neq y_2^\perp$, thus the strict convexity of $\Omega$ over $\calC$ and of $\|\cdot\|^2_2$ yields the second line.
        We have therefore shown that $\Psi$ is a  Legendre-type function.
    
        \medskip
    
        We now consider its Fenchel conjugate~$\Psi^*$, and study its domain. Let $\theta \in \bbR^d$, decomposed as $\theta = \theta_V + \theta_{V^\perp}$, where $\theta_V = \Pi_V(\theta)$ and $\theta_{V^\perp} = \theta - \theta_V$,
        \begin{subequations}
            \begin{align}
                \Psi^*(\theta) &= \sup_{y \in \bbR^d} \{\langle \theta | y \rangle - \Psi(y)\},\\
                & =\sup_{y \in \bbR^d} \Big\{ \langle \theta_V | \Pi_V(y) \rangle - \Omega(\Pi_V(y)) + \langle \theta_{V^\perp} | y - \Pi_V(y) \rangle - \frac{1}{2} ||y - \Pi_V(y)||^2_2 \Big\},\\ 
                & =\sup_{\substack{y_V \in V, \\ y_{V^\perp} \in V^\perp}} \Big\{ \langle \theta_V | y_V \rangle - \Omega(y_V) + \langle \theta_{V^\perp} | y_{V^\perp} \rangle - \frac{1}{2} ||y_{V^\perp}||^2_2 \Big\},\\ 
                & = \Omega^*(\theta_V) + \frac{1}{2}||\theta_{V^\perp}||^2_2.
            \end{align}
        \end{subequations}
        Now, since $\Omega^*$ has full domain using Proposition~\ref{prop:sub_dimensioal_domain_cvx_analysis}, we have $\dom(\Psi^*) = \bbR^d$.
    
        \medskip
    
        We eventually show the property on the composition of the gradients of $\Psi$ and $\Omega^*$. First, we highlight that by Proposition~\ref{prop:sub_dimensioal_domain_cvx_analysis}, $\Omega^*$ is indeed differentiable over~$\bbR^d$. Its gradient corresponds to the regularized prediction defined by Equation~\eqref{eq:regularized_pred}, and belongs to the relative interior of the convex compact set~$\calC$.
        Let $\theta \in \bbR^d$ be a vector decomposed as $\theta = \theta_V + \theta_{V^\perp}$, where $\theta_V = \Pi_V(\theta)$ and $\theta_{V^\perp} = \theta - \theta_V$. Then we have 
        \[\nabla \Omega^*(\theta) = \nabla \Omega^*(\theta_V)  = \underbrace{J_{\Pi_V}}_{\substack{\text{Canonical}\\\text{injection} \\ V\rightarrow \bbR^d}} \nabla \Omega^*_{|V}(\theta_V).\]
        Therefore, applying the gradient of $\Psi$ leads to
        \begin{align*}
            \nabla \Psi \big(\nabla \Omega^*(\theta)\big) &= \nabla \Psi \big(\underbrace{J_{\Pi_V} \nabla \Omega^*_{|V}(\theta_V)}_{\in \relint(\calC)} + \underbrace{0}_{\in V^\perp}\big), \\
            & = J_{\Pi_V} \nabla \Omega_{|V} \big(\nabla \Omega^*_{|V}(\theta_V) \big) + 0,\\
            & = \theta_V = \theta \underbrace{- \theta_{V^\perp}}_{z \in V^\perp}.
        \end{align*}
        In the computations above, we use the expression of the gradient of~$\Psi$, and the fact that the restriction~$\Omega_{|V}$ of~$\Omega$ to~$V$ is a Legendre-type function with Fenchel conjugate~$\Omega^*_{|V}$.
        Therefore, we have shown that there exists a vector $z \in V^\perp$ such that $\nabla \Psi \big(\nabla \Omega^*(\theta)\big) = \theta + z$.
    \end{proof}
    
    \subsection{Regularizing in the distribution space}\label{subsec:distribution_space}
    
    Until now in this section, the Fenchel--Young loss has only been introduced in the context of the regularization (see Equation~\eqref{eq:regularized_pred}) of a linear optimization problem $\max_{\mu \in \calC} \langle \theta| \mu \rangle$. However, to deal with arbitrary minimization problems $\min_{y\in \calY} \; c(y)$ on a finite but combinatorial set $\calY$, it is convenient to consider regularization on distributions.
    The proposition below explores regularization on the distribution polytope. We recall that notations are introduced at the end of Section~\ref{sec:intro}.
    \begin{proposition}\label{prop:structuredPredictionWithGeneralizedNegentropy}
        Let $\Omega_{\Delta^{\calY}} \in \Gamma_0(\bbR^{\calY})$ be a proper l.s.c. convex function with domain $\Delta^{\calY}$. We drop the $\calY$ in the notation $\Omega_\Delta$ when $\calY$ is clear from context. We assume that the restriction of $\Omega_{\Delta}$ to $H_\Delta$, denoted as $\Omega_{\Delta|H_\Delta}$ is Legendre-type (with respect to the metric of $H_\Delta$, and not the one of $\bbR^{\calY}$). For $\mu \in \calC$, we define
        \begin{equation}\label{eq:omega_moment_from_distribution}
            \Omega_\calC(\mu) := \min\{\Omega_{\Delta}(q)\colon Yq = \mu \}.
        \end{equation}
        
        Let $\theta \in \bbR^d$ and $q \in \Delta^{\calY}$, we have the following properties:
        \begin{enumerate}
            \item $\langle s_\theta | q \rangle = \langle Y^\top \theta | q \rangle = \theta^\top Y q = \langle \theta | Y q\rangle = \theta^\top \mu_q$. \label{prop:objectivesEquality}
            \item $\Omega_{\calC}^*(\theta) = \Omega_{\Delta}^*(Y^\top \theta)$, therefore $\Omega_\calC^*$ has domain $\dom(\Omega_\calC^*) = \bbR^d$, it is differentiable over its domain and affine over $V^\perp$.\label{prop:FenchelDualsEquality}
            \item $\Omega_\Delta (q)\geq \Omega_\calC(\mu_q)$ and $\calL_{\Omega_\Delta}(s_\theta;q) \geq \calL_{\Omega_\calC}(\theta;\mu_q)$, both with equality if and only if 
            \[{q = \argmin\limits_{q' \in \Delta^{\calY} \colon Yq' = \mu_q}\Omega_\Delta(q')}.\]
            \item $\min\limits_{\theta'} \calL_{\Omega_\Delta}(s_{\theta'};q) \geq \min\limits_{\theta'} \calL_{\Omega_\calC}(\theta';\mu_q)$, with equality if and only if ${q = \argmin\limits_{q' \in \Delta^{\calY} \colon Yq' = \mu_q}\Omega_\Delta(q')}$.
            \item $\nabla_\theta \calL_{\Omega_\Delta}(s_\theta;q) = \nabla_\theta \calL_{\Omega_\calC}(\theta;\mu_q) = Y (\nabla\Omega_\Delta^*(s_\theta) - q) = \nabla\Omega_\calC^*(\theta) - \mu_q$.
        \end{enumerate}
    \end{proposition}
     This notion of regularization on distributions, and the way it induces a regularization on the moment space have already been introduced by \textcite[Section 7.1]{blondelLearningFenchelYoungLosses2020a}, but in the case of $\Omega_{\Delta}$ being a generalized negentropy \parencite{grunwaldGameTheoryMaximum2004}.
    
    \begin{proof}
        \begin{enumerate}
            \item Immediate.
            \item By definition, 
            \begin{align*}
            \Omega_\calC^*(\theta) = \max_{\mu}\Big( \theta^\top \mu + \max_{q \colon Yq = \mu}- \Omega(q)\Big) &= \max_{\mu, q\colon \mu = Yq}\underbrace{\theta^\top \mu}_{s_\theta^\top q} - \Omega(q), \\
            & = \max_{q} s_\theta^\top q - \Omega(q) = \Omega^*_\Delta(s_\theta).
            \end{align*} 
            Applying Proposition~\ref{prop:sub_dimensioal_domain_cvx_analysis}.\ref{lem:Fenchel_conjugate_subdimension_diff} to $\Omega_\Delta$, we get that ${\dom(\Omega_\Delta^*) = \bbR^{\calY}}$, and it is differentiable over $\bbR^{\calY}$. Composing with a linear map gives the domain and differentiability results. Applying Proposition~\ref{prop:sub_dimensioal_domain_cvx_analysis}.\ref{lem:conjugate_constant_perp} to $\Omega_\Delta^*$ with $V_\Delta^\perp = \vect(\mathbf{1})$, we have $\Omega^*_\Delta$ affine over $\vect(\mathbf{1})$. Besides, \[s_\theta \in \vect(\mathbf{1}) \iff \exists \alpha \in \bbR, \forall y \in \calY,  \langle \theta| y \rangle = \alpha \iff \theta \in V^\perp.\]
            \item An immediate consequence of the definition of $\Omega_\calC$ and the previous points.
            \item It follows from properties~\ref{prop:objectivesEquality} and~\ref{prop:FenchelDualsEquality} that the two minimization problems are equivalent up to a constant.
            \item Consequence of the equality of the losses up to a constant that does not depend on~$\theta$.
        \end{enumerate}
    \end{proof}
    
\section{Technical proofs for tractability and structured prediction}\label{sec:appendix_B}

\subsection{Proofs for Section~\ref{sec:structured_perturbation}}\label{subsec:appendix_structured_perturbation}

\propperturbation*
\begin{proof}[Proof of Proposition~\ref{prop:Perturbation}]
        Let $\varepsilon \in \bbR_{++}$,
        \begin{enumerate}
            \item Let $(\theta, Z) \in (\bbR^d)^2$, since $\calC$ is compact and non-empty, the maximum in the definition~\eqref{eq:perturbation_moment} of $F_{\varepsilon, \calC}$ is well-defined. 
            The expectation with respect to the normal distribution remains finite, and thus $\dom(F_{\varepsilon, \calC}) = \bbR^d$. 
            Further, the function $\theta \mapsto \max_{y \in \calC}(\theta + \varepsilon Z)^\top y$ is convex as a maximum of affine functions. 
            Finally, recall that finite convex functions are continuous. 
            \item The strict convexity of $F_{\varepsilon, \calC}$ over $V$ stems directly from the proof of \textcite[Proposition 2.2]{berthetLearningDifferentiablePertubed2020}. Let now $\theta = \theta_V + \theta_{V^\perp}$ be any vector in $\bbR^d$, and $y_0$ be any vector in $\calC$,
            \begin{align*}
                F_{\varepsilon, \calC}(\theta) = \bbE[\max_{y \in \calC}(\theta_V + \theta_{V^\perp} + \varepsilon \bfZ)^\top y] &= \bbE[\theta_{V^\perp}^\top y_0 + \max_{y \in \calC}(\theta_V + \varepsilon \bfZ)^\top y],\\
                & = \theta_{V^\perp}^\top y_0 + F_{\varepsilon, \calC}(\theta_V).
            \end{align*}
            \item As highlighted by \textcite[Proposition 3.1]{berthetLearningDifferentiablePertubed2020}, we can apply the technique of~\textcite[Lemma 1.5]{abernethyPerturbationTechniquesOnline2016} using the smoothness of the distribution of the noise variable $\bfZ$ to show the smoothness of $F_{\varepsilon, \calC}$. It involves a simple change of variable $u = \theta + \varepsilon Z$. The expression of the gradient comes from Danskin's lemma and swap of integration and differentiation.
            \item We first show that the domain of $F_{\varepsilon, \calC}^*$ is $\calC$. Let $y \in \bbR^d$, by definition \[F_{\varepsilon, \calC}^*(y) = \sup_{\theta \in \bbR^d}\{\theta^\top y - \bbE[\max_{y' \in \calC}(\theta + \varepsilon \bfZ)^\top y']\}.\] 
            \begin{itemize}
                \item If $y \in \bbR^d \backslash \calC$, we can separate it from $\calC$:
                there exists $\bar \theta \in \bbR^d$, and $\eta >0$ such that for all $y' \in \calC$, $\langle \bar \theta| y - y'\rangle \geq \eta$.
                Thus, for any $Z \in \bbR^d$, and $\lambda >0$, we have
                \begin{align*}
                    \langle \lambda \bar \theta + \varepsilon Z| y-y'\rangle &\geq \lambda \eta + \langle \varepsilon Z| y-y' \rangle
                     \geq \lambda \eta - ||\varepsilon Z||_2 ||y-y'||_2, \quad \forall y' \in \calC.
                \end{align*}
                Since $\calC$ is compact in $\bbR^d$, we can consider $D_{\calC, y} :=\sup_{y' \in \calC} ||y-y'||_2 < + \infty.$ 
                Minimizing with respect to $y' \in \calC$  yields
                \begin{align*}
                    \langle \lambda \bar \theta + \varepsilon Z| y \rangle - \max_{y' \in \calC} \langle \lambda \bar \theta + \varepsilon Z | y'\rangle & \geq \lambda \eta - ||\varepsilon Z|| D_{\calC, y}.
                \end{align*}
    
               Now, we recall that $\bfZ$ is centered with distribution $\nu$ (typically a multivariate standard normal distribution) with finite variance. 
               We thus denote $N_\nu:= \bbE_{\bfZ\sim \nu}[||\bfZ||_2]< + \infty$ and $\bbE[||\varepsilon \bfZ||_2] = |\varepsilon| N_\nu < + \infty$.
               Taking the expectation with respect to $\nu$ of the inequality above we get
                \begin{align*}
                    \langle \lambda \bar \theta| y \rangle - \bbE[\max_{y' \in \calC} \langle \lambda \bar \theta + \varepsilon \bfZ|y'\rangle] & \geq \lambda \eta - |\varepsilon|N_\nu D_{\calC, y}.
                \end{align*}
                Therefore, since $F_{\varepsilon, \calC}^*(y)\geq \langle \lambda \bar \theta| y \rangle - \bbE[\max_{y' \in \calC} \langle \lambda \bar \theta + \varepsilon \bfZ|y'\rangle]$ 
                and considering the limit $\lambda \to +\infty$ gives us $F_{\varepsilon, \calC}^*(y) = + \infty$.
                \item If $y \in \calC$, since $\bfZ$ is centered,
                \begin{align*}
                    F_{\varepsilon, \calC}^*(y) &= \sup_{\theta \in \bbR^d}\{\theta^\top y - \bbE[\max_{y' \in \calC}(\theta + \varepsilon \bfZ)^\top y']\},\\
                    &= \sup_{\theta \in \bbR^d}\{\bbE[(\theta+\varepsilon \bfZ)^\top y] - \bbE[\max_{y' \in \calC}(\theta + \varepsilon \bfZ)^\top y']\},\\
                    & = \sup_{\theta \in \bbR^d}\{\bbE[\underbrace{(\theta+\varepsilon \bfZ)^\top y - \max_{y' \in \calC}(\theta + \varepsilon \bfZ)^\top y'}_{\leq 0 \text{ since } y \in \calC}]\} < +\infty.\\
                \end{align*}
        \end{itemize}
            Therefore, we have shown that $\dom(F_{\varepsilon, \calC}^*) = \calC$. 
            Point $2.$ shows that~$(F_{\varepsilon, \calC})_{|V}$ is strictly convex over $V$. 
            Using the computations of point $3.$, we show that $(F_{\varepsilon, \calC})_{|V}$ is differentiable over~$V$. 
            It is thus a Legendre-type function with $\dom\big((F_{\varepsilon, \calC})_{|V}\big) = V$. 
            Indeed, point 3 of the essentially smooth definition holds vacuously. 
            To show that~$(F_{\varepsilon, \calC}^*)_{|H}$ is Legendre-type, we use the fact that it is the conjugate (up to a translation) of~$(F_{\varepsilon, \calC})_{|V}$ in $V$. 
            Then, Remark~\ref{rem:theorem_rockafellar_legendre} shows that~$(F_{\varepsilon, \calC}^*)_{|H}$ with domain $\calC$ is a Legendre-type function (with respect to the metric of $H$).
        \end{enumerate}
        This concludes the proof.
    \end{proof}

\thmsparseperturbation*
\begin{proof}[Proof of Theorem~\ref{prop:SparsePerturbation}]
    \begin{enumerate}
        \item 
        Let $s_1, s_2 \in \bbR^{\calY}$.
        As $ \Delta^{\calY}$ is a polytope, there exists $q^\sharp_1$ such that
        \begin{align*}
                    \max_{q_1 \in \Delta^{\calY}} \langle s_1 | q_1 \rangle - \max_{q_2 \in \Delta^{\calY}} \langle s_2 | q_2 \rangle
                    &= \langle s_1 | q_1^\sharp \rangle - \max_{q_2 \in \Delta^{\calY}} \langle s_2 | q_2 \rangle \leq \langle s_1 - s_2 | q_1^\sharp \rangle \\
                    &\leq \max_{q \in \Delta^{\calY}}\langle s_1 - s_2 | q \rangle \\
                    &\leq \max_{q \in \Delta^{\calY}}\| s_1 - s_2\|_2 \|q\|_2 = \| s_1 - s_2\|_2
        \end{align*}
        where the last equality comes from the definition of $\Delta^{\calY}$.
        Thus,
        \begin{align*}
            F_\Delta(s_1) - F_\Delta(s_2)  &=  \bbE_{\bfZ} \big[\max_{q \in \Delta^{\calY}} \langle s_1 + \varepsilon Y^\top \bfZ | q \rangle   - \max_{q \in \Delta^{\calY}} \langle s_2 + \varepsilon Y^\top \bfZ| q \rangle \big]
             \leq  ||s_1 - s_2||.
        \end{align*}
        By symmetry, we have shown that $F_\Delta$ is 1-Lipschitz continuous.
        \item The proof relies on the following technical lemma provided just below.

        \begin{lem}\label{lem:DistinctArgmax_nonzeromeasure}
    Let $\calY \subset \bbR^d$ be a finite set satisfying Assumption~\ref{rem:exposed_vertices}.
    For any vector $s \in \bbR^{\calY}$, we denote by $s(y) \in \bbR$ the component of $s$ indexed by $y \in \calY$. 
    We also consider $\varepsilon >0$ a positive real number, and $\bfZ$, a random variable with standard multivariate normal distribution over $\bbR^d$. 
    
    {
            For \(i\in\{1,2\}\), let \(f_i(y;Z):=s_i(y)+\varepsilon Z^\top y\). Then, for any distinct \(s_1,s_2\in V_\Delta\), there is not almost surely a point \(y^\star\in\calY\) that maximizes both \(f_i(\cdot;\bfZ)\) over \(\calY\). Then, for any distinct \(s_1,s_2\in V_\Delta\), the event \(\argmax_{y\in\calY}f_1(y;\bfZ)\cap\argmax_{y\in\calY}f_2(y;\bfZ)=\emptyset\) has positive probability.}
    
    \end{lem}
    
    Let $t \in (0, 1)$, the lemma above leads to
    \begin{align*}
        \bbP_{\bfZ} \Bigg[\max_{y \in \calY} \bigg(t f_1(y; \bfZ) + (1-t) f_2(y;\bfZ) \bigg)  < & \max_{y \in \calY} t f_1(y; \bfZ), \\ & + \max_{y \in \calY} (1-t) f_2(y;\bfZ) \Bigg] > 0.
    \end{align*}
    Since $F_\Delta$ is convex, this strict inequality with positive probability leads to the strict convexity of $F_\Delta$ over $V_\Delta$. 
    Last, using the decomposition $\bbR^{\calY} = V_\Delta + \vect(\mathbf{1})$, we get the affine property over $V_\Delta^\perp = \vect(\mathbf{1})$ with the same arguments as for Proposition~\ref{prop:Perturbation} Point 2.
        \item We first show that  the $\argmax$ in the definition of~$F_{\varepsilon, \Delta}$ is reduced to a singleton almost surely.
        Indeed, assume that for a pair of vectors~$(y,y')\in\calY^2$,
        \[P_{\bfZ}\big[ s(y) + \varepsilon \bfZ^\top y = s(y') + \varepsilon \bfZ^\top y'\big] > 0,\]
        then, as $\varepsilon\bfZ$ is a non-degenerate Gaussian, it implies that~$\langle \cdot| y-y'\rangle$ is constant (equal to $s(y')-s(y)$) on a ball of positive radius, and thus that $y=y'$.
        
       Note that the uniqueness of the $\argmax$ in $\calY$ implies the uniqueness of the corresponding $\argmax$ in the distribution space~$\Delta^{\calY}$ (see Equation~\eqref{eq:perturbation_distribution}). 
       Now, as the argmax is almost surely unique, using Danskin's theorem 
       \parencite[Theorem~10.31]{rockafellar1998variational}, and swapping integration with respect to the density of~$\bfZ$ and differentiation with respect to~$s$, we get:
            \[\nabla_s\bbE[\max_{q \in \Delta^{\calY}} (s +\varepsilon Y^\top \bfZ)^\top q] = \bbE[\argmax_{q \in \Delta^{\calY}} (s +\varepsilon Y^\top \bfZ)^\top q].\]
        \item The domain property is proved in the same way as for Proposition~\ref{prop:Perturbation} point 4, the rest also yields similarly.
    \end{enumerate}
    \end{proof}

    \begin{proof}[Proof of Lemma \ref{lem:DistinctArgmax_nonzeromeasure}] 
    Let $(s_1, s_2) \in (V_\Delta)^2, \, s_1 \neq s_2$ be two vectors. We are going to show that 
    \[\argmax_{y \in \calY} \{s_1(y) + \varepsilon Z^\top y \} \,\cap \, \argmax_{y \in \calY} \{s_2(y) + \varepsilon Z^\top y \} = \emptyset,\]
    
    \noindent for $Z$ almost everywhere with respect to the Lebesgue measure in an open ball in $\bbR^d$. Since $\bfZ$ follows a non-degenerate normal distribution, {this open ball has positive Gaussian measure, and} the result yields.
    
    Suppose first that 
    \(\argmax_{y \in \calY} s_1(y) \cap \argmax_{y \in \calY} s_2(y) = \emptyset.\) 
    Then, a ball centered on $0_{\bbR^d}$ with sufficiently small radius gives the result.
    
    Let now $y^*$ be a common maximizer of $s_1$ and $s_2$. Since $s_1$ and $s_2$ are elements of ${V_\Delta = \vect(\mathbf{1})^\perp}$, and they are distinct, they are not equal up to a constant. We can thus fix a $\bar{y} \in \calY$ such that \(s_1(\bar{y}) - s_1(y^*) \neq s_2(\bar{y}) - s_2(y^*).\)
    
    In particular, it implies that $\bar y$ does not belong to the intersection of the $\argmax$, 
    \(\bar y \notin \argmax_{y \in \calY} s_1(y) \cap \argmax_{y \in \calY} s_2(y).\)
    Let $\bar Z$ be in the relative interior of the normal cone of $\conv(\calY)$ at $\bar y$, such that the function g defined as 
    \(g: y \mapsto \varepsilon \bar Z^\top y,\) 
    is injective over $\calY$. It is possible since the normal cone is full dimensional, and the union of the hyperplanes where two dot products are equal is not full dimensional. Then, for $\lambda \in \bbR_+$ sufficiently large,
    \(\argmax_{y \in \calY} s_1(y) + \varepsilon \lambda \bar Z^\top y = \argmax_{y \in \calY} s_2(y) + \varepsilon \lambda \bar Z^\top y = \{\bar y \}.\)
    
    For $i \in \{1,2\}$, let us define $F_i$ as
    \(F_i : \lambda \in \bbR \mapsto \max_{y \in \calY}\big(s_i(y) + \varepsilon \lambda \bar Z^\top y \big) - s_i(y^*).\)
    Note that if there exists $\lambda^* \in \bbR$ such that $\argmax_{y \in \calY} \big(s_i(y) + \lambda^* \varepsilon \bar Z^\top y \big)$ for $i \in \{1,2\}$ are disjoint singletons, then considering a small enough open ball around the vector $\lambda^* \bar Z$ gives the result.  We now show that such a $\lambda^*$ exists.
    
    Since $\calY$ is finite, for $i\in\{1,2\}$, $F_i$ is a maximum of a finite number of affine functions in $\lambda$, it is therefore a piecewise affine function.
    Since $g$ is injective, for $i \in \{1,2\}$, there exists a collection of $k_i +1$ real numbers, $k_i \in \bbN, \, k_i >1$, denoted as  $0=\lambda_0^i< \ldots < \lambda_{k_i}^i$ and a unique collection $y^* = y_1^i, \ldots, y_{k_i}^i = \bar{y}$ of two-by-two distinct vectors such that
    \[F_i(\lambda) = s_i(y_j^i) + \varepsilon \lambda \bar Z^\top y_j^i - s_i(y^*), \quad \forall \lambda \in [\lambda_{j-1}^i, \lambda_j^i].\]
    
    Furthermore, the fact that $g$ is injective implies that $y_j^i$ is the unique maximizer over $\calY$ of $y \mapsto s_i(y) + \varepsilon \lambda \bar Z^\top y$ for $\lambda \in ]\lambda_{j-1}^i,\lambda_j^i[$.
    If the collections for $i \in \{1,2\}$ are not identical, the proof is finished. By contradiction, we assume that the two collections are identical and denote by ${0=\lambda_0< \ldots < \lambda_k}$ and  ${y^* = y_1, \ldots, y_k = \bar{y}}$ the common collections.
    
     Let $\tilde j$ be the smallest $j' \in [k]$ such that $s_1(y_{j'}) - s_1(y^*) \neq s_2(y_{j'}) - s_2(y^*)$. It exists since the inequality holds for $\bar{y}$. Without loss of generality, we can assume from now on that $s_1(y_{\tilde j}) - s_1(y^*) > s_2(y_{\tilde j}) - s_2(y^*)$. For $i \in \{1,2\}$, we define $\tilde{\lambda}_i$ as: 
     
     \[\tilde{\lambda}_i := \min \{\lambda \;|\; y_{\tilde j} \in \argmax_{y \in \calY} \big( s_i(y)+\varepsilon \lambda \bar Z^\top y \big)\}.\]
    
    We eventually get a contradiction by proving $\tilde \lambda_1 < \tilde \lambda_2$ with the following inequality.
     \begin{subequations}
         \begin{align}
             & s_2(y_{\tilde j}) - s_2(y^*) + \tilde \lambda_1 g(y_{\tilde j}), \\
             &< s_1(y_{\tilde j}) - s_1(y^*) + \tilde\lambda_1  g(y_{\tilde j}), && (\text{hypothesis right above}), \\
             &= s_1(y_{\tilde j-1}) - s_1(y^*) + \tilde\lambda_1 g(y_{\tilde j-1}), && \text{(Both $y_{\tilde j}$ and $y_{\tilde j-1}$ are optimal at the junction)}, \\
             &= s_2(y_{\tilde j-1})- s_2(y^*)  + \tilde\lambda_1 g(y_{\tilde j-1}), && \text{(by definition of $\tilde j$)}.
         \end{align}
     \end{subequations}
    Hence, for $\lambda = \tilde \lambda_1 + \eta$ with $\eta> 0$ sufficiently small, $y_{\tilde j}$ is the unique $\argmax$ of $s_1(y) + \lambda g(y)$
    and $y_{\tilde j-1} \neq y_{\tilde j}$ is the unique $\argmax$ of $s_2(y) + \lambda  g(y)$.
    \end{proof}

\propstrongconvexitysparseperturbation*
\begin{proof}[Proof of Proposition~\ref{prop:strongConvexitySparsePerturbation} (strong convexity of $\Omega_{\varepsilon,\Delta}$)]
    Let $(s_a, s_b) \in (\bbR^{\calY})^2$. By Theorem~\ref{prop:SparsePerturbation}, the gradient of $F_{\varepsilon, \Delta}$ is given by:
    \begin{align*}
        \nabla_s F_{\varepsilon, \Delta}(s) &= \bbE\left[\argmax_{q \in \Delta^{\calY}} (s + \varepsilon \bfZ)^\top q\right] 
        = \bbE\left[ \sum_{y \in \calY} e_y \oneindicator\left\{ y = \argmax_{y' \in \calY} (s(y') + \varepsilon \langle\bfZ|{y'}\rangle) \right\} \right],
    \end{align*}
    where $e_y$ denotes the standard basis vector in $\bbR^{\calY}$ corresponding to $y$. Note that the $\argmax$ is reduced to a singleton almost surely, as detailed in the proof of Theorem~\ref{prop:SparsePerturbation}. 
    Thus, we can bound the distance between the two gradients:
    \begin{align*}
        \|\nabla F_{\varepsilon, \Delta}(s_a) - \nabla F_{\varepsilon, \Delta}(s_b)\|_2
        &= \Big\| \bbE\Big[ \argmax_{y \in \calY} (s_a(y) + \varepsilon \langle\bfZ|y\rangle)  -  \argmax_{y \in \calY} (s_b(y) + \varepsilon \langle\bfZ|y\rangle) \Big] \Big\|_2 \\
        &\leq \bbE\Big[ \Big\| \argmax_{y \in \calY} (s_a(y) + \varepsilon \langle\bfZ|y\rangle)  - \argmax_{y \in \calY} (s_b(y) + \varepsilon \langle\bfZ|y\rangle) \Big\|_2 \Big] \\
        &\leq \sum_{(y_1, y_2) \in \calY^2} \|e_{y_1} - e_{y_2}\|_2 P_{y_1, y_2},
    \end{align*}

    where
    \begin{align*}
        P_{y_1, y_2} := \bbP\Big( &y_1 = \argmax_{y \in \calY} (s_a(y) + \varepsilon \langle\bfZ|y\rangle), y_2 = \argmax_{y \in \calY} (s_b(y) + \varepsilon \langle\bfZ|y\rangle) \Big).
    \end{align*}
    Since $e_{y_1}$ and $e_{y_2}$ are vertices of the standard simplex, for any $y_1 \neq y_2$, we have $\|e_{y_1} - e_{y_2}\|_2 = \sqrt{2}$.

   If the $\argmax$ shifts from $y_1$ under $s_a$ to $y_2$ under $s_b$, then the relative order of the scores of $y_1$ and $y_2$ must have swapped. In particular:
    \begin{align*}
        s_a(y_1) + \varepsilon \langle\bfZ|{y_1}\rangle &\geq s_a(y_2) + \varepsilon \langle\bfZ|{y_2}\rangle \implies \varepsilon(\langle\bfZ|{y_1}\rangle - \langle\bfZ|{y_2}\rangle) \geq s_a(y_2) - s_a(y_1) \\
        s_b(y_1) + \varepsilon \langle\bfZ|{y_1}\rangle &\leq s_b(y_2) + \varepsilon \langle\bfZ|{y_2}\rangle \implies \varepsilon(\langle\bfZ|{y_1}\rangle - \langle\bfZ|{y_2}\rangle) \leq s_b(y_2) - s_b(y_1)
    \end{align*}
    Therefore, the probability $P_{y_1, y_2}$ is upper-bounded by the probability that the random variable $\varepsilon(\langle\bfZ|{y_1}\rangle - \langle\bfZ|{y_2}\rangle)$ falls between $s_a(y_2) - s_a(y_1)$ and $s_b(y_2) - s_b(y_1)$. Up to reordering $a$ and $b$ to form a valid interval, we obtain:
    \begin{align*}
        P_{y_1, y_2} &\leq \bbP\left( \varepsilon(\langle\bfZ|{y_1}\rangle - \langle\bfZ|{y_2}\rangle) \text{ is strictly between } s_a(y_2) - s_a(y_1) \text{ and } s_b(y_2) - s_b(y_1) \right) \\
        &= \bbP\left( \frac{\langle\bfZ|{y_1}\rangle - \langle\bfZ|{y_2}\rangle}{\sqrt{2}} \in I_{y_1, y_2} \right),
    \end{align*}
    where $I_{y_1, y_2}$ is an interval of length:
    \begin{align*}
        \frac{1}{\varepsilon \sqrt{2}} |(s_b(y_2) - s_b(y_1)) - (s_a(y_2) - s_a(y_1))| &\leq \frac{2}{\varepsilon \sqrt{2}} \|s_a - s_b\|_{\infty} \\
        &= \frac{\sqrt{2}}{\varepsilon} \|s_a - s_b\|_{\infty} \leq \frac{\sqrt{2}}{\varepsilon} \|s_a - s_b\|_2.
    \end{align*}

    Since $\bfZ \sim \calN(0, I_d)$ is a standard Gaussian in $\bbR^d$ and $\langle\bfZ|y\rangle := y^\top\bfZ$, we have
    \[
        \frac{\langle\bfZ|{y_1}\rangle - \langle\bfZ|{y_2}\rangle}{\sqrt{2}} \sim \mathcal{N}\!\left(0, \frac{||y_1 - y_2||_2^2}{2}\right).
    \]
    Given a Gaussian $X \sim \calN(0,\sigma^2)$, we have $\bbP(X \in [\alpha, \beta]) \leq \frac{|\beta - \alpha|}{\sqrt{2\pi}\,\sigma}$ since the density is uniformly bounded by $1/(\sqrt{2\pi}\,\sigma)$.

    Applying this to our probability bound yields for $y_1 \neq y_2$:
    \begin{equation*}
        P_{y_1, y_2} \leq \frac{1}{\sqrt{2\pi}} \frac{\sqrt{2}}{||y_1-y_2||_2}\frac{\sqrt{2}}{\varepsilon} \|s_a - s_b\|_2.
    \end{equation*}

    Let $m_\calY := \min_{(y, y') \in \calY^2, y \neq y'}||y-y'||_2$ be the minimum distance between two distinct points in $\calY$.
    Substituting this back into the sum, which contains at most $|\calY|^2$ non-zero terms (since $P_{y,y}$ doesn't change relative distance but $\|e_y-e_y\|_2=0$, we just upper bound the sum generously by considering all $|\calY|^2$ pairs):
    \begin{align*}
        \|\nabla_s F_{\varepsilon, \Delta}(s_a) - \nabla_s F_{\varepsilon, \Delta}(s_b)\|_2 &\leq \sum_{(y_1, y_2) \in \calY^2} \sqrt{2} \left( \frac{1}{\sqrt{2\pi}} \frac{\sqrt{2}}{m_\calY} \frac{\sqrt{2}}{\varepsilon} \|s_a - s_b\|_2 \right) \\
        &\leq \frac{2|\calY|^2}{\varepsilon m_\calY \sqrt{\pi}} \|s_a - s_b\|_2 
    \end{align*}
    which gives the desired Lipschitz constant $L$.

    Finally, knowing that $F_{\varepsilon, \Delta}$ is a convex function with an $L$-Lipschitz continuous gradient, standard duality results in convex analysis dictate that its Fenchel conjugate $\Omega_{\varepsilon, \Delta}$ is $(1/L)$-strongly convex. This strong convexity is on $H_\Delta$ and with respect to $||\cdot||_2$. Therefore, $\Omega_{\varepsilon, \Delta}$ is strongly convex on $H_\Delta$ with respect to $||\cdot||_2$ with parameter:
    \(\mu = \frac{1}{L} = \frac{\varepsilon m_\calY \sqrt{\pi}}{2|\calY|^{2}}. \)

\end{proof}

    \propstructuredpredictionsparseperturbation*
\begin{proof}[Proof of Proposition~\ref{prop:structuredPredictionSparsePerturbation} (structured prediction regularizer with sparse perturbation)]
        By definition, $F_{\varepsilon, \calC}(\theta) = F_{\varepsilon, \Delta}(Y^\top \theta)$. Besides, since $\Omega_{\varepsilon, \calC}$ and $\Omega_{\varepsilon, \Delta}$ are proper convex lower-semicontinuous, they are bi-conjugate by Fenchel-Moreau theorem. Applying the computations of~\textcite[Corollary 15.28]{bauschkeConvexAnalysisMonotone2017} with $g = F_{\varepsilon, \Delta}$ and ${L : x \mapsto Y^\top x}$ leads to the result.
    \end{proof}

    \subsection{Proofs for Section~\ref{subsec:alternating_min_tractability}}\label{subsec:tractable_proofs}
    At this point, we now have all the elements needed to prove the two main propositions that underpin the tractability of updates~\eqref{eq:decomposition_w_dist}-\eqref{eq:coordination_w_dist}.
    
    \propcomputationsprimaldualdist*
    \begin{proof}[Proof of Proposition~\ref{prop:computations_primal_dual_dist}] \label{proof:computations_primal_dual_dist}
        To derive Equation~\eqref{eq:decomposition_w_dist} remark the following. First, $\calS_{\Omega_\Delta,N}$ is defined as the sum of~$S_{\Omega_\Delta}$, and since $\bar w^{(t)}$ is fixed, we obtain $N$ independent problems. Now, using the expression of~$S_{\Omega_\Delta}$, we see that~$q_i^{(t+1)}$ belongs to the minimizers of
        \[\Omega_{\Delta^{\calY(x_i)}}(\cdot) - \langle Y(x_i)^\top\varphi_{\bar w^{(t)}}(x_i) - \frac{1}{\kappa}\gamma_i | \cdot \rangle.\]
        Using Fenchel duality, we recognize~$\nabla \Omega_{\Delta^{\calY(x_i)}}^*\big(Y(x_i)^\top\varphi_{\bar w^{(t)}}(x_i) - \frac{1}{\kappa}\gamma_i\big)$. 
    
        For the dual update in Equation~\eqref{eq:coordination_w_dist}, using the expression of~$S_{\Omega_\Delta}$, and omitting the term that does not depend on~$w$, we can first recast Equation~\eqref{eq:coordination_product_w} as 
        \[\bar w^{(t+1)} \in \argmin_{w \in \calW} \frac{1}{N} \sum_{i=1}^N \calL_{\Omega_{\Delta(x_i)}}\big(Y(x_i)^\top \varphi_w(x_i); q_i^{(t+1)}\big). \]
        Now, we leverage the computations of Section~\ref{subsec:distribution_space}. In particular, based on the regularization function~$\Omega_{\Delta^{\calY(x)}}$ on distributions in~$\Delta^{\calY(x)}$, we define a regularization function~$\Omega_{\calC(x)}$ on the moment space~${\calC(x) = \conv(\calY(x))}$ as in Equation~\eqref{eq:omega_moment_from_distribution}. We then use Proposition~\ref{prop:structuredPredictionWithGeneralizedNegentropy}, more precisely Points~\ref{prop:objectivesEquality}-\ref{prop:FenchelDualsEquality}, to get Equation~\eqref{eq:coordination_w_dist}. 
    \end{proof}
    
    \propprimaldualperturbation*
    \begin{proof}[Proof of Proposition~\ref{prop:primal_dual_perturbation}]\label{proof:primal_dual_perturbation}
        For the primal update in Equation~\eqref{eq:primalUpdatePerturbation},
        we can reformulate Equation~\eqref{eq:decomposition_w_dist} in this perturbation setting
    \begin{subequations}
        \begin{align*}
            \mu_i^{(t+1)} &= Y(x_i)q_i^{(t+1)} = Y(x_i)\nabla F_{\varepsilon, \Delta(x_i)}\big(Y(x_i)^\top\varphi_{\bar w^{(t)}}(x_i) - \frac{1}{\kappa}\gamma_i \big),\\
            &= Y(x_i)\bbE_\bfZ\big[\argmax_{q_i \in \Delta^{\calY(x_i)}} \langle Y(x_i)^\top\varphi_{\bar w^{(t)}}(x_i) - \frac{1}{\kappa}\gamma_i + \varepsilon Y(x_i)^\top \bfZ| q_i \rangle \big],\\
            &= \bbE_\bfZ\big[Y(x_i)\argmin_{q_i \in \Delta^{\calY(x_i)}}\langle \big(\frac{1}{\kappa}c(x_i,y,\xi_i) - (\varphi_{\bar w^{(t)}}(x_i) + \varepsilon \bfZ)^\top y\big)_{y \in \calY(x_i)}| q_i \rangle \big],\\
            &= \bbE_\bfZ\big[\argmin_{y_i \in \calY(x_i)} c(x_i,y_i,\xi_i) - \kappa (\varphi_{\bar w^{(t)}}(x_i) + \varepsilon \bfZ)^\top y_i \big].
        \end{align*}
        \label{eq:decomposition_dist_pert}
    \end{subequations}
        In the computations above, we use Theorem~\ref{prop:SparsePerturbation} for the expression of the gradient of the function~$F_{\varepsilon, \Delta(x_i)}$, and the fact that the minimum of the linear optimization problem in~$q_i$ is attained (almost surely) at a vertex of the simplex~$\Delta^{\calY(x_i)}$, corresponding to a Dirac on a point~$y_i \in \calY(x_i)$. We see that the two constants~$\kappa$ and $\varepsilon$ play similar roles in this setting; a formal equivalence between tuning one parameter versus the other is established in Appendix~\ref{sec:kappa_eps_equivalence_appendix}. Up to a re-normalization of the statistical model~$\varphi_w$, we keep the hyper-parameter~$\varepsilon$ to tune the regularization scale.
    
        For the dual update in Equation~\eqref{eq:coordination_w_dist}, we simply use the expression of the Fenchel--Young loss in the perturbation setting and omit the terms that do not depend on~$w$.
    \end{proof}

\subsection{Proofs for Section~\ref{sec:loss_parameter_space} and consequences in Section~\ref{subsec:convergence}}\label{subsec:appendix_loss_parameter_space}

\propbaromegaproperties*
\begin{proof}[Proof of Proposition~\ref{prop:barOmegaProperties}]
Recall that \(H_i:=\aff(\calP_i)\), that \(V_i\) is its direction
space, and that \(\bar\calW=\sum_{i=1}^N A_iV_i\). We regard
\(\bar\calW\) as a Euclidean space with the inner product inherited
from \(\bbR^d\). Set
\[
\bar F(\bar w)
:=
\frac1N\sum_{i=1}^N
\Omega_{\calP_i}^*
\big(A_i^\top J_{\Pi_{\bar\calW}}\bar w\big),
\qquad
\Phi(\pi_\otimes)
:=
\frac1N\sum_{i=1}^N\Omega_{\calP_i}(\pi_i),
\]
and define \(B\pi_\otimes:=N^{-1}\sum_i\Pi_{\bar\calW}A_i\pi_i\).

For each \(i\), choose \(a_i\in H_i\) and define, for \(v\in V_i\), 
\(
\widetilde\Omega_i(v)
:=
\Omega_{\calP_i}(a_i+v).
\)
On \(V_i\),
\(\widetilde\Omega_i\) is closed, proper, and of Legendre type. Since its
domain \(\calP_i-a_i\) is compact, its conjugate is finite on all of
\(V_i\). 
Hence, by \textcite[Theorem~26.5]{Rockafellar+1970}, \(\widetilde\Omega_i^*\) is of Legendre type and, in particular, strictly convex on \(V_i\).

Moreover, for every \(s\in\bbR^{d_i}\),
\(
\Omega_{\calP_i}^*(s) = \langle s,a_i\rangle + \widetilde\Omega_i^*(\Pi_{V_i}s).
\)
Thus \(\Omega_{\calP_i}^*\) is finite and continuously differentiable on
\(\bbR^{d_i}\), affine along \(V_i^\perp\), and strictly convex along every
segment whose direction does not belong to \(V_i^\perp\).

Let \(\bar w_a\neq\bar w_b\) and set
\(u:=J_{\Pi_{\bar\calW}}(\bar w_a-\bar w_b)\).
If \(A_i^\top u\in V_i^\perp\) for every \(i\), then
\(\langle u,A_iv_i\rangle=0\) for every \(v_i\in V_i\). Hence
\(u\perp\sum_iA_iV_i=\bar\calW\). Since \(u\in\bar\calW\), this would
give \(u=0\), a contradiction. Thus at least one summand of \(\bar F\)
is strictly convex along \([\bar w_a,\bar w_b]\), while all the others
are convex. Therefore \(\bar F\) is strictly convex. Since it is
differentiable on its whole domain \(\bar\calW\), it is of Legendre
type. Furthermore, \(\bar F\) is closed, and hence
\(
\bar F=\bar F^{**}=\Omega_{\bar\calM}^*
\)
by \textcite[Theorem~12.2]{Rockafellar+1970}.

Now let \(p:=B\Phi\) be the image function
\[
p(\bar\nu)
:=
\inf\{\Phi(\pi_\otimes)\colon B\pi_\otimes=\bar\nu\}.
\]
Since \(\dom\Phi=\prod_i\calP_i\) is compact, \(p\) is proper and
lower semicontinuous, and its infimum is attained whenever it is
finite; it is convex by
\textcite[Theorem~5.7]{Rockafellar+1970}. Moreover,
\textcite[Theorem~16.3]{Rockafellar+1970} gives
\(p^*=\Phi^*\circ B^*\). Since
\(B^*\bar w=(N^{-1}A_i^\top J_{\Pi_{\bar\calW}}\bar w)_i\), the
scaling rule for conjugates yields
\[
p^*(\bar w)
=
\frac1N\sum_{i=1}^N
\Omega_{\calP_i}^*
\big(A_i^\top J_{\Pi_{\bar\calW}}\bar w\big)
=
\bar F(\bar w).
\]
Therefore, again by
\textcite[Theorem~12.2]{Rockafellar+1970},
\(\Omega_{\bar\calM}=\bar F^*=p^{**}=p\), which proves
\eqref{eq:expressionOfBarOmega}. It also gives
\[
\dom\Omega_{\bar\calM}
=
B\Big(\prod_{i=1}^N\dom\Omega_{\calP_i}\Big)
=
\bar\calM.
\]

Finally, let \(\bar\nu\in\relint(\bar\calM)\). By
\textcite[Theorem~26.5]{Rockafellar+1970},
\[
\bar w:=\nabla\Omega_{\bar\calM}(\bar\nu)
\quad\text{satisfies}\quad
\bar\nu=\nabla\bar F(\bar w).
\]
Define
\(
\pi_i
:=
\nabla\Omega_{\calP_i}^*
\big(A_i^\top J_{\Pi_{\bar\calW}}\bar w\big).
\)
Differentiating \(\bar F\) gives \(B\pi_\otimes=\bar\nu\).
Furthermore, Fenchel's equality
\parencite[Theorem~23.5]{Rockafellar+1970}, applied to each \(i\), gives
\[
\Phi(\pi_\otimes)
=
\langle\bar w,B\pi_\otimes\rangle-\bar F(\bar w)
=
\langle\bar w,\bar\nu\rangle-\bar F(\bar w)
=
\Omega_{\bar\calM}(\bar\nu).
\]
Thus \(\pi_\otimes\) attains the minimum in
\eqref{eq:expressionOfBarOmega}, with the claimed expression.
\end{proof}

\propfylandcalm*
\begin{proof}[Proof of Proposition~\ref{prop:FYLandCalM}]
    We first prove Equation~\eqref{eq:FYlandCalM_regsec}. 
    By definition, the parameter $w$ minimizes the empirical average of the Fenchel--Young losses. Expanding the loss, the objective to minimize is:
    $$ \frac{1}{N} \sum_{i=1}^N \calL_{\Omega_{\calP_i}}(A_i^\top w, \pi_i) = \frac{1}{N} \sum_{i=1}^N \Big( \Omega_{\calP_i}(\pi_i) + \Omega_{\calP_i}^*(A_i^\top w) - \langle A_i^\top w, \pi_i \rangle \Big). $$
    Because $\pi_i$ is fixed, minimizing this objective with respect to $w$ is equivalent to minimizing:
    $$ \frac{1}{N} \sum_{i=1}^N \Omega_{\calP_i}^*(A_i^\top w) - \Big\langle w, \frac{1}{N} \sum_{i=1}^N A_i \pi_i \Big\rangle. $$
    By introducing the aggregated moment $\nu = \frac{1}{N} \sum_{i=1}^N A_i \pi_i$, the unconstrained minimization problem for the full parameter $w$ is exactly the minimization of $\frac{1}{N} \sum_{i=1}^N \Omega_{\calP_i}^*(A_i^\top w) - \langle w, \nu \rangle$. 
    
    We restrict our attention to the identifiable parameter $\bar w = \Pi_{\bar \calW}(w)$ via the canonical injection $w = J_{\Pi_{\bar \calW}} \bar w$. The objective mapped to the identifiable space $\bar \calW$ becomes:
    $$ \underbrace{\frac{1}{N} \sum_{i=1}^N \Omega_{\calP_i}^*(A_i^\top J_{\Pi_{\bar \calW}} \bar w)}_{= \bar F(\bar w)} - \langle J_{\Pi_{\bar \calW}} \bar w, \nu \rangle. $$
    Using the adjoint property of the canonical injection $J_{\Pi_{\bar \calW}}$, we have $\langle J_{\Pi_{\bar \calW}} \bar w, \nu \rangle = \langle \bar w, \Pi_{\bar \calW}(\nu) \rangle = \langle \bar w, \bar \nu \rangle$. 
    Thus, $\bar w$ minimizes the strictly convex function $\bar F(\bar w) - \langle \bar w, \bar \nu \rangle$. 
    The first-order optimality condition yields:
    $$ \nabla \bar F(\bar w) - \bar \nu = 0 \implies \bar \nu = \nabla \bar F(\bar w). $$
    Because $\bar F$ is of Legendre type on the identifiable space, its gradient map is a bijection and we can invert it using the Fenchel conjugate $\Omega_{\bar \calM} = \bar F^*$. This gives $\bar w = \nabla \Omega_{\bar \calM}(\bar \nu)$, which is precisely $\Pi_{\bar \calW}(w) = \nabla \Omega_{\bar \calM}\big(\Pi_{\bar \calW}(\nu)\big)$, proving Equation~\eqref{eq:FYlandCalM_regsec}.

    We now prove Equation~\eqref{eq:crossJensenGapInCalM_regsec}.
    Let us evaluate the average Fenchel--Young loss on the identifiable space using $\bar w$:
    \begin{equation*}
    \begin{split}
        \frac{1}{N} \sum_{i=1}^N \calL_{\Omega_{\calP_i}}\big(A_i^\top J_{\Pi_{\bar \calW}}\bar w, \pi_i\big) &= \frac{1}{N} \sum_{i=1}^N \Omega_{\calP_i}(\pi_i) + \underbrace{\frac{1}{N} \sum_{i=1}^N \Omega_{\calP_i}^*(A_i^\top J_{\Pi_{\bar \calW}}\bar w)}_{= \bar F(\bar w)} \\
        &\quad - \Big\langle J_{\Pi_{\bar \calW}}\bar w, \frac{1}{N} \sum_{i=1}^N A_i \pi_i \Big\rangle.
    \end{split}
    \end{equation*}
    Substituting $\nu = \frac{1}{N} \sum_{i=1}^N A_i \pi_i$ and reusing the adjoint property $\langle J_{\Pi_{\bar \calW}}\bar w, \nu \rangle = \langle \bar w, \bar \nu \rangle$, the equation simplifies to:
    $$ \frac{1}{N} \sum_{i=1}^N \calL_{\Omega_{\calP_i}}\big(A_i^\top J_{\Pi_{\bar \calW}}\bar w, \pi_i\big) = \frac{1}{N} \sum_{i=1}^N \Omega_{\calP_i}(\pi_i) + \bar F(\bar w) - \langle \bar w, \bar \nu \rangle. $$
    From the first part of the proof, we established that $\bar w = \nabla \Omega_{\bar \calM}(\bar \nu)$ and $\bar \nu = \nabla \bar F(\bar w)$. Because $(\bar w, \bar \nu)$ form a conjugate pair linked by the gradient of the Legendre-type function $\bar F$, Fenchel's equality holds tightly:
    $$ \bar F(\bar w) + \Omega_{\bar \calM}(\bar \nu) = \langle \bar w, \bar \nu \rangle \implies \bar F(\bar w) - \langle \bar w, \bar \nu \rangle = - \Omega_{\bar \calM}(\bar \nu). $$
    Substituting this relation back into our expanded loss expression yields:
    $$ \frac{1}{N} \sum_{i=1}^N \calL_{\Omega_{\calP_i}}\big(A_i^\top J_{\Pi_{\bar \calW}}\bar w, \pi_i\big) = \frac{1}{N} \sum_{i=1}^N \Omega_{\calP_i}(\pi_i) - \Omega_{\bar \calM}(\bar \nu), $$
    which establishes Equation~\eqref{eq:crossJensenGapInCalM_regsec}, concluding the proof.
\end{proof}

We now specialize these results to the primal-dual setting of the paper, given in Section~\ref{subsec:convergence}.

\propcontextualregularization*
\begin{proof}[Proof of Proposition~\ref{prop:contextualRegularization}]
    The first four statements of the proposition follow directly as a corollary of Proposition~\ref{prop:barOmegaProperties}. 
    By setting the polytopes $\calP_i = \Delta^{\calY(x_i)}$, and the linear operators $A_i = \phi_i Y(x_i)$, the function $\bar{F}(\bar{w})$ defined in this section perfectly matches the aggregate dual regularizer $\bar{F}$ of Proposition~\ref{prop:barOmegaProperties}. 
    Since the restriction of $\Omega_{\Delta^{\calY(x_i)}}$ to its affine hull $H_{\Delta}$ is of Legendre type, it satisfies all structural requirements of Proposition~\ref{prop:barOmegaProperties}. 
    Consequently, $\bar{F}$ is of Legendre type, $\Omega_{\bar{\calM}} = \bar{F}^*$, the infimal convolution expression \eqref{eq:contextualRegularizationProp} holds, the domain is $\bar{\calM}$, and the minimum is attained at $q_i = \nabla \Omega_{\Delta^{\calY(x_i)}}^*(Y(x_i)^\top \phi_i^\top J_{\Pi_{\bar\calW}} \nabla \Omega_{\bar{\calM}}(\bar{\nu}))$.

    We now prove Equations~\eqref{eq:FYlandCalM} and~\eqref{eq:crossJensenGapInCalM}.
    These equations follow directly as a corollary of the Proposition~\ref{prop:FYLandCalM} using the same definition of $\calP_i$, $\Omega_{\calP_i}$, and $A_i$ as above. The unconstrained minimization of the coordination loss and the definition of the aggregated moment $\nu = \frac{1}{N} \sum_{i=1}^N \phi_i Y(x_i) q_i$ exactly match the premises of the proposition. Applying the proposition immediately yields the dual parameter mapping $\Pi_{\bar \calW}(w) = \nabla \Omega_{\bar \calM}\big(\Pi_{\bar \calW}(\nu)\big)$ of Equation~\eqref{eq:FYlandCalM} and the cross Jensen gap equality of Equation~\eqref{eq:crossJensenGapInCalM}, concluding the proof.
\end{proof}

\theoproximalpointoperator*
\begin{proof}[Proof of Theorem~\ref{theo:proximalPointOperator}]
    We proceed by induction. The base case $t=0$ holds by construction: both algorithms are initialized at $\bar \nu^{(0)} = \nabla \Omega_{\bar \calM}^*(\bar w^{(0)})$. Assume that at step $t$, $\bar \nu^{(t)} = \nabla \Omega_{\bar \calM}^*(\bar w^{(t)})$, where we denote the identifiable parameter as $\bar w^{(t)} = \Pi_{\bar \calW}(w^{(t)})$. Because $\Omega_{\bar \calM}$ is of Legendre type on the full-dimensional $\bar\calM$, we have $\bar w^{(t)} = \nabla \Omega_{\bar \calM}(\bar \nu^{(t)})$ and $D_{\Omega_{\bar \calM}}(\bar \nu \mid \bar \nu^{(t)}) = \calL_{\Omega_{\bar \calM}}(\bar w ^{(t)},
    \bar \nu)$.

    The update rule for the Bregman proximal algorithm is:
    $$ \bar \nu^{(t+1)} = \argmin_{\bar \nu \in \bar \calM}  f_\kappa(\bar \nu) + \kappa D_{\Omega_{\bar \calM}}\big(\bar \nu \mid \bar \nu^{(t)}\big)  = \argmin_{\bar \nu \in \bar \calM}  f_\kappa(\bar \nu) + \kappa \calL_{\Omega_{\bar \calM}}(\bar w ^{(t)},
    \bar \nu) . $$
    Let us now detail the minimization problem
    \begin{align*}
        &\min_{\bar \nu \in \bar \calM}  f_\kappa(\bar \nu) + \kappa \calL_{\Omega_{\bar \calM}}(\bar w ^{(t)},
        \bar \nu)\\
        &= \left|
            \begin{aligned}
                \min_{\bar \nu, \bar q} \frac{1}{N} \,&\sum_{i=1}^N \Big( \langle \gamma_i, q_i \rangle + \big(\kappa \Omega_{\Delta^{\calY(x_i)}}(q_i) + \kappa\overbrace{\Omega_{\Delta^{\calY(x_i)}}^*(Y_i^\top\phi_i^\top J_{\Pi_{\bar \calW}} \bar w^{(t)} )}^{\text{constant}} \\
                &\qquad - \kappa\langle Y_i^\top\phi_i^\top J_{\Pi_{\bar \calW}} \bar w^{(t)} | q_i \rangle \big) \Big) \\
                s.t.\colon\, & \frac{1}{N} \sum_{i=1}^N \Pi_{\bar \calW}\big(\phi_i Y(x_i) q_i\big) = \bar \nu 
            \end{aligned}    
            \right.
    \end{align*}
    where we have used the definition of $f_{\kappa}$ and $\Omega_{\bar \calM}^*(\bar w^{(t)})$ (Proposition~\ref{prop:barOmegaProperties}.1 and Equation~\eqref{eq:OmegaCalMstarDefinition}), brought the Fenchel--Young loss within the minimization in $q_{\otimes}$, merged the two minimizations in $\bar \nu$ and $q_{\otimes}$, and simplified $\kappa \Omega_{\bar \calM}(\bar \nu)$, which appears with a negative sign in $f_\kappa(\bar \nu)$ and a positive sign in the Fenchel--Young loss.
    Observe that since we minimize jointly in $q_{\otimes}$ and $\bar \nu$, we can just minimize in $q_{\otimes}$ without constraint and define $\bar \nu = \frac{1}{N} \sum_{i=1}^N \Pi_{\bar \calW}\big(\phi_i Y(x_i) q_i\big) $ a posteriori.
    Furthermore, the minimization decomposes by $i$, and we get
    $$ \begin{aligned}
        \tilde q_i &= \argmin_{q_i} \langle \gamma_i, q_i \rangle + \kappa\big( \Omega_{\Delta^{\calY(x_i)}}(q_i) - \langle Y_i^\top\phi_i^\top J_{\Pi_{\bar \calW}} \bar w^{(t)} | q_i \rangle \big) \\
        &= \argmin_{q_i \in \Delta^{\calY(x_i)}} \langle \gamma_i, q_i \rangle + \kappa\calL_{\Omega_{\Delta^{\calY(x_i)}}}(Y_i^\top\phi_i^\top J_{\Pi_{\bar \calW}} \bar w^{(t)},q_i) \\
        &= \argmin_{q_i \in \Delta^{\calY(x_i)}} \langle \gamma_i, q_i \rangle + \kappa \calL_{\Omega_{\Delta^{\calY(x_i)}}}(Y_i^\top\phi_i^\top w^{(t)},q_i)\\
        \bar \nu^{(t+1)} &= \frac{1}{N}\Pi_{\bar \calW} \sum_{i=1}^N \phi_iY_i\tilde q_i,\\
        \Pi_{\bar \calW}(\tilde w) &= \nabla \Omega_{\bar \calM }(\bar \nu^{(t+1)})
        \quad \text{where} \quad\tilde w \in \argmin \frac{1}{N}\sum_{i=1}^N \calL_{\Omega_{\Delta^{\calY(x_i)}}}(Y_i^\top\phi_i^\top w,\tilde q_i)
    \end{aligned} 
    $$
    where the first two minima are equal up to a constant by definition of Fenchel--Young losses, and the last two are again equal up to a constant by Proposition~\ref{prop:sub_dimensioal_domain_cvx_analysis}.2. The remaining equalities follow from Proposition~\ref{prop:contextualRegularization}. These are exactly the iterates of our alternating minimization, and we therefore obtain the induction hypothesis and the theorem.
\end{proof}
    
\propconvexityfkappa*
\begin{proof}[Proof of Proposition~\ref{prop:convexityfkappa}]
Set \(Aq_\otimes:=\bar\nu(q_\otimes)\) and define
\[
H(q_\otimes):=
\frac1N\sum_{i=1}^N\langle\gamma_i,q_i\rangle
+\kappa\left(
\frac1N\sum_{i=1}^N\Omega_{\Delta^{\calY(x_i)}}(q_i)
-\Omega_{\bar\calM}(Aq_\otimes)
\right)
+\bbI_{\Delta_\otimes}(q_\otimes).
\]
By assumption, the cross Jensen gap is convex; hence \(H\) is convex.
Moreover, by~\eqref{eq:definitionOfFkappa},
\(f_\kappa(\bar\nu)=\inf\{H(q_\otimes)\colon Aq_\otimes=\bar\nu\}\).
Thus \(f_\kappa\) is the image function of \(H\) under the linear map
\(A\), and is convex by \textcite[Theorem~5.7]{Rockafellar+1970}.
\end{proof}

\subsection{Proof of the surrogate error's bound}
\label{sec:bound_empirical_risk}

In this appendix, we prove Theorem~\ref{thm:bound_risk}. We first derive
an exact expression for the difference between the empirical cost and
the partial surrogate for one observation. Averaging this identity gives
the dataset-level bound, from which the comparison between the two
global minimizers follows.

\thmboundrisk*
\begin{proof}
We first consider a single observation and omit the dependence on \(x\)
and \(\xi\) to alleviate notation. Set
\(F:=\Omega_\Delta^*\) and \(s:=Y^\top\theta\).
As the partial surrogate~\eqref{eq:partial_surrogate} is obtained by
partially minimizing the surrogate~\eqref{eq:surrogate_risk_delta}, we have
\[
\begin{aligned}
\underline{\calS_{\Omega_\Delta}}(\theta)
&=
\min_{q\in\Delta^\calY}
\left\{
\langle\gamma\mid q\rangle
+
\kappa\big(
\Omega_\Delta(q)+F(s)-\langle s\mid q\rangle
\big)
\right\} \\
&=
\kappa F(s)
+
\kappa
\min_{q\in\Delta^\calY}
\left\{
\Omega_\Delta(q)-\langle s-\gamma/\kappa\mid q\rangle
\right\}
=
\kappa\Big(F(s)-F(s-\gamma/\kappa)\Big),
\end{aligned}
\]
where the last equality follows from the definition of the Fenchel
conjugate.

The policy associated with \(s\) is \(\nabla F(s)\), so its expected cost
is \(R_\Delta(\nabla F(s))=\langle\gamma\mid\nabla F(s)\rangle\)
(see~\eqref{eq:R_Delta}). Consequently,
\[
\begin{aligned}
R_\Delta(\nabla F(s))
-
\underline{\calS_{\Omega_\Delta}}(\theta)
&=
\kappa\left(
F(s-\gamma/\kappa)-F(s)
+
\langle\nabla F(s)\mid\gamma/\kappa\rangle
\right)
=
\kappa D_F(s-\gamma/\kappa\mid s).
\end{aligned}
\]
This quantity is nonnegative by convexity of \(F\). Since \(\nabla F\) is
\(1/L_x\)-Lipschitz-continuous, the descent lemma
(see \emph{e.g.}~\parencite[Lemma~3.1]{attouch_convergence_2013}), applied at
\(u=s-\gamma/\kappa\) and \(v=s\), gives
\(D_F(s-\gamma/\kappa\mid s)\leq\|\gamma/\kappa\|^2/(2L_x)\).
This proves~\eqref{eq:bound_risk_single}.

Applying this identity to each observation and averaging proves
\eqref{eq:bound_risk_sum}. In particular,
\(\underline{\calS_{\Omega_\Delta,N}}(\varphi_w)
\leq\calR_{\Omega_\Delta,N}(\varphi_w)\) for every \(w\in\calW\).
Finally, by optimality of \(w_{\calS}\) and the preceding one-sided
inequality,
\(
\underline{\calS_{\Omega_\Delta,N}}(\varphi_{w_{\calS}})
\leq
\underline{\calS_{\Omega_\Delta,N}}(\varphi_{w_{\calR}})
\leq
\calR_{\Omega_\Delta,N}(\varphi_{w_{\calR}}).
\)
It follows that
\[
\calR_{\Omega_\Delta,N}(\varphi_{w_{\calS}})
-
\calR_{\Omega_\Delta,N}(\varphi_{w_{\calR}})
\leq
\calR_{\Omega_\Delta,N}(\varphi_{w_{\calS}})
-
\underline{\calS_{\Omega_\Delta,N}}(\varphi_{w_{\calS}}),
\]
and~\eqref{eq:bound_empirical_risk} follows from
\eqref{eq:bound_risk_sum}.
\end{proof}

\subsection{Equivalence between tuning~$\kappa$ and tuning~$\varepsilon$ in the sparse perturbation case}\label{sec:kappa_eps_equivalence_appendix}

    We make precise the remark made in the proof of Proposition~\ref{prop:primal_dual_perturbation} (Appendix~\ref{subsec:tractable_proofs}) that the regularization weight~$\kappa$ and the perturbation scale~$\varepsilon$ ``play similar roles''. We show this in the non-contextual setting, where a single score vector~$\theta$ is shared by all scenarios: the two hyperparameters then act on the alternating scheme only through their product, so that fixing one to~$1$ and tuning the other explores exactly the same set of algorithm trajectories as tuning the other hyperparameter alone.

    \paragraph{Setting.}
    We place ourselves in the non-contextual alternating minimization scheme, with feasible set~$\calY$, costs $c_i(\cdot) := c(\cdot,\xi_i)$, and the sparse perturbation regularizer $\Omega_{\varepsilon,\calC} := F_{\varepsilon,\calC}^*$ shared by every scenario. Fix $\kappa>0$ and $\varepsilon>0$, and let $\big(\theta^{(t)}(\kappa,\varepsilon), \mu_i^{(t)}(\kappa,\varepsilon)\big)_{i\in[N],\,t\ge0}$ denote the iterates of this scheme, started from $\theta^{(0)}(\kappa,\varepsilon)=0$.

    \begin{proposition}\label{prop:kappa_eps_equivalence}
        For every $\lambda>0$, $i \in [N]$, and $t \geq 0$,
        \[ \mu_i^{(t)}(\kappa,\varepsilon) = \mu_i^{(t)}\Big(\frac{\kappa}{\lambda},\ \lambda\varepsilon\Big), \qquad \theta^{(t)}(\kappa,\varepsilon) = \frac{1}{\lambda}\,\theta^{(t)}\Big(\frac{\kappa}{\lambda},\ \lambda\varepsilon\Big). \]
        Consequently, the whole trajectory of the alternating scheme -- and hence the learned policies -- depends on $(\kappa,\varepsilon)$ only through the product~$\kappa\varepsilon$. In particular, fixing $\kappa=1$ and tuning~$\varepsilon$ explores exactly the same family of trajectories as fixing $\varepsilon=1$ and tuning~$\kappa$.
    \end{proposition}

    \begin{proof}
        \emph{Step 1: invariance of the primal update.} Fix $i\in[N]$, $t\ge0$, and let $\theta \in \bbR^d$. For every $\lambda>0$ and every realization of~$\bfZ$,
        \[
            c_i(y) - \kappa(\theta + \varepsilon \bfZ)^\top y \ = \ c_i(y) - \frac{\kappa}{\lambda}\big(\lambda\theta + \lambda\varepsilon \bfZ\big)^\top y, \qquad \forall y \in \calY,
        \]
        the two sides are \emph{identical} as functions of~$y$, so their $\argmin$ over $y \in \calY$, and thus its expectation under~$\bfZ$, coincide. Writing $\mu_i^{(t+1)}(\kappa,\varepsilon,\theta)$ for the corresponding primal update, we get
        \begin{equation}\label{eq:primal_invariance}
            \mu_i^{(t+1)}(\kappa,\varepsilon,\theta) \ = \ \mu_i^{(t+1)}\Big(\frac{\kappa}{\lambda},\ \lambda\varepsilon,\ \lambda\theta\Big), \qquad \forall \lambda>0.
        \end{equation}

        \emph{Step 2: scaling law of $F_{\varepsilon,\calC}$ and $\Omega_{\varepsilon,\calC}$.} By definition~\eqref{eq:perturbation_moment}, for every $\theta\in\bbR^d$ and $\varepsilon>0$,
        \[
            F_{\varepsilon,\calC}(\theta) = \bbE\big[\max_{y \in \calY}(\theta+\varepsilon \bfZ)^\top y\big] = \varepsilon\, \bbE\big[\max_{y \in \calY}(\theta/\varepsilon+\bfZ)^\top y\big] = \varepsilon\, F_{1,\calC}(\theta/\varepsilon).
        \]
        Taking Fenchel conjugates on both sides (using that conjugation of $\theta \mapsto \varepsilon g(\theta/\varepsilon)$ yields $\varepsilon g^*$ for any $g$),
        \[
            \Omega_{\varepsilon,\calC} = \varepsilon\,\Omega_{1,\calC}, \qquad \text{hence} \qquad \nabla\Omega_{\varepsilon,\calC |_V} = \varepsilon\,\nabla\Omega_{1,\calC|_V}.
        \]

        \emph{Step 3: dual update.} The coordination step reads, with $\bar\mu^{(t+1)} := \frac1N\sum_{i=1}^N \mu_i^{(t+1)}$,
        \[
            \theta^{(t+1)} \in \argmin_{\theta \in \bbR^d} \frac1N\sum_{i=1}^N\calL_{\Omega_{\varepsilon,\calC}}\big(\theta; \mu_i^{(t+1)}\big) = \frac1N\sum_{i=1}^N\Omega_{\varepsilon,\calC}\big(\mu_i^{(t+1)}\big) + F_{\varepsilon,\calC}(\theta) - \langle \theta \,|\, \bar\mu^{(t+1)}\rangle,
        \]
        using $\Omega_{\varepsilon,\calC}^* = F_{\varepsilon,\calC}$. Write $\theta = \theta_V + \theta_{V^\perp}$ with $\theta_V = \Pi_V(\theta)$. By Point~2 of Theorem~\ref{prop:SparsePerturbation}, for any fixed $y_0 \in \calC$,
        \[
            F_{\varepsilon,\calC}(\theta) = \langle y_0 \mid \theta_{V^\perp}\rangle + F_{\varepsilon,\calC}(\theta_V).
        \]
        Since $\bar\mu^{(t+1)} \in \calC \subset H = y_0 + V$, its $V^\perp$-component equals $y_0$, so $\langle \theta \mid \bar\mu^{(t+1)}\rangle = \langle \theta_V \mid \bar\mu^{(t+1)}\rangle + \langle \theta_{V^\perp} \mid y_0\rangle$. The two $\theta_{V^\perp}$-terms cancel in the objective, which reduces to minimizing $F_{\varepsilon,\calC}(\theta_V) - \langle \theta_V \mid \bar\mu^{(t+1)}\rangle$ over $\theta_V \in V$ alone. Since $F_{\varepsilon,\calC}|_V$ is Legendre-type (Theorem~\ref{prop:SparsePerturbation}, Point~4), the minimizer in $V$ is unique and given by gradient inversion; taking the canonical representative $\theta^{(t+1)} \in V$ and using Step~2,
        \[
            \theta^{(t+1)} = \nabla\Omega_{\varepsilon,\calC|_V}\big(\bar\mu^{(t+1)}\big) = \varepsilon\, \nabla\Omega_{1,\calC|_V}\big(\bar\mu^{(t+1)}\big) \in V.
        \]

        \emph{Step 4: induction.} We show by induction on~$t$ that $\mu_i^{(t)}(\kappa,\varepsilon) = \mu_i^{(t)}(1,\kappa\varepsilon)$ for every~$i$, and $\theta^{(t)}(\kappa,\varepsilon) = \theta^{(t)}(1,\kappa\varepsilon)/\kappa$. The base case $t=0$ holds since both trajectories start at~$0 \in V$. Note that Step~3 maps every iterate to $V$, so we consider $\theta^{(t)} \in V$ for all $t \geq 0$. Assume it holds at~$t$. By~\eqref{eq:primal_invariance} with $\lambda=\kappa$ and $\theta = \theta^{(t)}(\kappa,\varepsilon) = \theta^{(t)}(1,\kappa\varepsilon)/\kappa$,
        \[
            \mu_i^{(t+1)}(\kappa,\varepsilon) = \mu_i^{(t+1)}\Big(\kappa,\varepsilon,\ \tfrac{1}{\kappa}\theta^{(t)}(1,\kappa\varepsilon)\Big) = \mu_i^{(t+1)}\Big(1,\ \kappa\varepsilon,\ \theta^{(t)}(1,\kappa\varepsilon)\Big) = \mu_i^{(t+1)}(1,\kappa\varepsilon), \qquad \forall i\in[N].
        \]
        Averaging over~$i$ and applying Step~3 at $(\kappa,\varepsilon)$ and at $(1,\kappa\varepsilon)$ -- both evaluated, by the previous display, at the same $\bar\mu^{(t+1)}$ --,
        \[
            \theta^{(t+1)}(\kappa,\varepsilon) = \varepsilon\,\nabla\Omega_{1,\calC|_V}\big(\bar\mu^{(t+1)}\big), \qquad \theta^{(t+1)}(1,\kappa\varepsilon) = \kappa\varepsilon\,\nabla\Omega_{1,\calC|_V}\big(\bar\mu^{(t+1)}\big),
        \]
        so the two right-hand sides differ exactly by the factor~$\kappa$, \ie $\theta^{(t+1)}(\kappa,\varepsilon) = \theta^{(t+1)}(1,\kappa\varepsilon)/\kappa$, which closes the induction.

        Finally, for arbitrary $\lambda>0$, both $(\kappa,\varepsilon)$ and $(\kappa/\lambda,\lambda\varepsilon)$ have the same product $\kappa\varepsilon$, so the induction above (applied once to each pair against the common reference $(1,\kappa\varepsilon)$) yields $\mu_i^{(t)}(\kappa,\varepsilon)=\mu_i^{(t)}(1,\kappa\varepsilon)=\mu_i^{(t)}(\kappa/\lambda,\lambda\varepsilon)$, and similarly $\theta^{(t)}(\kappa,\varepsilon) = \theta^{(t)}(1,\kappa\varepsilon)/\kappa = \theta^{(t)}(\kappa/\lambda,\lambda\varepsilon)/\lambda$, which is the claimed identity.
    \end{proof}

    In particular, any point $(\kappa,\varepsilon)$ on the hyperbola $\kappa\varepsilon=c$ yields the same sequence of policies as the point~$(1,c)$, so restricting the search to $\kappa=1$ and tuning only~$\varepsilon$ entails no loss of generality on the set of policies reachable by the algorithm.

\section{Proof of convergence of the alternating minimization scheme}
\label{sec:convergenceProof}
This appendix starts by proving the convergence of Bregman PPA under some generic assumptions on the function minimized $f$ and the Bregman function $h$ in Section~\ref{sec:ConvergenceOfBregmanPPA}.
Section~\ref{sec:proofthm:convergence:speed} deduces from these results the convergence of our alternating minimization algorithm (Theorem~\ref{thm:convergence:speed}). 
Section~\ref{subsec:proposed-regularizers-satisfy-assumptions} finally shows that our regularizations satisfy the conditions of Theorem~\ref{thm:convergence:speed}, and are in particular real analytic (Proposition~\ref{prop:regsatisfyconv}).

\subsection{A generic Bregman proximal-point primal convergence result}
\label{sec:ConvergenceOfBregmanPPA}

Let \(\calU\subset\bbR^n\) be a non-empty compact convex set. Fix
\(u^{(0)}\in\operatorname{relint}(\calU)\), set
\(E:=\operatorname{aff}(\calU)-u^{(0)}\), and identify
\(\operatorname{aff}(\calU)\) with the Euclidean space \(E\) by translation.
For notational simplicity, we still write \(\calU\) for the translated set.
All interiors, boundaries, gradients, Hessians, and subdifferentials below are
understood in this relative geometry.

Let \(f:E\to\bbR\cup\{+\infty\}\) be proper and lower
semicontinuous, with \(\dom f=\calU\). Fix \(\kappa>0\), and consider a
sequence \((u^{(t)})_{t\ge0}\) and its dual sequence \((v^{(t)})_{t\ge0}\)
defined by
\begin{equation}
    \label{eq:generic-bregman-ppa}
    u^{(t+1)} \in \argmin_{u \in \calU}
    \{ f(u) + \kappa D_h(u \mid u^{(t)}) \},
    \qquad
    v^{(t)} := \nabla h(u^{(t)}).
\end{equation}

We assume:
\begin{itemize}
    \item \textbf{(H1) Non-degenerate local geometry:}
    \(h:E\to\bbR\cup\{+\infty\}\) is a closed proper convex Legendre-type
    function such that
    \(\operatorname{int}(\dom h)=\operatorname{relint}(\calU)\),
    \(\overline{\dom h}=\calU\), and \(\dom h^*=E\). Moreover, \(h^*\) is
    \(C^2\) on \(E\) and \(\nabla^2h^*(v)\succ0\) for every \(v\in E\).
    In particular, \(\nabla h^*\) maps \(E\) into
    \(\operatorname{relint}(\calU)\), and \(\nabla h\) is its \(C^1\) inverse
    on \(\operatorname{relint}(\calU)\).

    \item \textbf{(H2) Lipschitz dual map:}
    \(\nabla h^*\) is \(L\)-Lipschitz continuous. In particular,
\(h\) is \(1/L\)-strongly convex, hence
\(D_h(u\mid u')\ge 1/(2L)\|u-u'\|^2\), for $u,u'\in\operatorname{relint}(\calU)$.

    \item \textbf{(H3) Real analytic objective:}
    \(f\) is real analytic on \(\operatorname{relint}(\calU)\).

    \item \textbf{(H4) Interior proximal trajectory:}
    The sequence \((u^{(t)})_{t\ge0}\) stays in
    \(\operatorname{relint}(\calU)\).
\end{itemize}
Throughout, $\partial f$ denotes the limiting subdifferential
\parencite[Def.~8.3]{rockafellar1998variational}.

\begin{theorem}[Generic Bregman PPA convergence dichotomy]
\label{thm:generic_primal_convergence}
Assume \textnormal{(H1)}--\textnormal{(H4)}, and let \(\omega\) be the
accumulation set of \((u^{(t)})_{t\ge0}\). Then either
\(\omega\cap\operatorname{relint}(\calU)\neq\emptyset\), in which case
\((u^{(t)})\) converges with finite length to some
\(\tilde u\in\operatorname{relint}(\calU)\) with
\(0\in\partial f(\tilde u)\), and
\(f(u^{(t)})-f(\tilde u)=\mathcal O(1/t)\); or
\(\omega\subseteq\operatorname{rbd}(\calU)\), in which case
\(\|v^{(t)}\|\to+\infty\). In the first case, moreover,
\(v^{(t)}\to\nabla h(\tilde u)\) with finite length.
\end{theorem}

\paragraph{Proof strategy.}
The proof follows the Attouch--Bolte--Svaiter abstract convergence strategy
\parencite[Thm.~2.9]{attouch_convergence_2013}. 
Recall that \(f\) has the \emph{Kurdyka-\L ojasiewicz (KL)} property at \(\tilde u\) if there exist a
neighborhood \(U\) of \(\tilde u\), \(\eta>0\), and a concave \(C^1\)
desingularizer \(\varphi\), with \(\varphi(0)=0\) and \(\varphi'>0\), such
that
\(\varphi'(f(u)-f(\tilde u))\operatorname{dist}(0,\partial f(u))\ge1\)
whenever \(u\in U\) and \(f(\tilde u)<f(u)<f(\tilde u)+\eta\).
Attouch--Bolte--Svaiter \parencite{attouch_convergence_2013} assume that a function $f$ and sequence $(u^{(t)})_{t\geq 0}$ follows, for some $a,b > 0$ and every $t > 0$, a \emph{sufficient decrease (SD)} property $f(u^{(t)}) - f(u^{(t+1)}) \geq a ||u^{(t+1)} - u^{(t)}||^2$, a \emph{relative-error estimate (RE)} $\exists \xi^{(t+1)} \in \partial f(u^{(t+1)})$ with $||\xi^{(t+1)}|| \leq b ||u^{(t+1)} - u^{(t)}||$ and (KL). Combining (SD), (RE) and (KL) leads to finite length convergence toward a critical point.
In our setting, (SD) is global,
whereas (RE) is only local: it holds on compact subsets of
\(\operatorname{relint}(\calU)\), where \(\nabla h\) is Lipschitz. Thus, the theorem cannot be directly applied from the outset; instead, we first
localize the tail near an interior accumulation point, and repeat the standard KL capture argument:
starting from an iterate sufficiently close to \(\tilde u\), the KL estimate,
(SD), and local (RE) imply that the whole tail remains in the same interior
neighborhood and has finite length. Once this localization is obtained, the
usual KL convergence and value-rate arguments apply. The boundary alternative is
handled separately: if all accumulation points lie on \(\operatorname{rbd}(\calU)\),
then boundedness of the dual sequence would produce an interior accumulation
point through the map \(\nabla h^*\), a contradiction.

\begin{proof}[Proof of Theorem~\ref{thm:generic_primal_convergence}]
Set \(\Delta_{t+1}:=u^{(t+1)}-u^{(t)}\). Since \(f\) is lower
semicontinuous and finite on the compact set \(\calU\), it is bounded below.
Using \(u^{(t)}\) as a feasible point in \eqref{eq:generic-bregman-ppa} and
(H2), we obtain, with \(a:=\kappa/(2L)\),
\[
    f(u^{(t)})-f(u^{(t+1)})
    \ge \kappa D_h(u^{(t+1)}\mid u^{(t)})
    \ge a\|\Delta_{t+1}\|^2 .
    \tag{SD}
    \label{eq:generic_SD}
\]

Hence \(f(u^{(t)})\downarrow f_*:=\lim_t f(u^{(t)})\),
\(\sum_t\|\Delta_{t+1}\|^2<+\infty\), and \(\|\Delta_{t+1}\|\to0\).

Since \(u^{(t+1)}\in\operatorname{relint}(\calU)\), the first-order
optimality condition for \eqref{eq:generic-bregman-ppa}, together with the
exact sum rule (\parencite[Ex.~8.8]{rockafellar1998variational})
for the limiting subdifferential, gives
\[
    \xi^{(t+1)}
    :=
    \kappa\big(\nabla h(u^{(t)})-\nabla h(u^{(t+1)})\big)
    \in \partial f(u^{(t+1)}).
    \tag{RE0}
    \label{eq:generic_RE0}
\]
Moreover, by (H1), \(\nabla h\) is 
Lipschitz on every compact subset of
\(\operatorname{relint}(\calU)\).

Assume that there exists an accumulation point \(\tilde u\) of \((u^{(t)})\) in
\(\operatorname{relint}(\calU)\). 
Then by continuity of \(f\)
on \(\operatorname{relint}(\calU)\), \(f_*=f(\tilde u)\). Let \(\rho>0\) be
such that \(\bar B(\tilde u,2\rho)\subset\operatorname{relint}(\calU)\), and
let \(b\) be a Lipschitz constant of \(\kappa\nabla h\) on this ball. Whenever
\(u^{(t)},u^{(t+1)}\in\bar B(\tilde u,2\rho)\), \eqref{eq:generic_RE0}
therefore gives
\[
    \operatorname{dist}(0,\partial f(u^{(t+1)}))
    \le b\|\Delta_{t+1}\|.
    \tag{RE}
    \label{eq:generic_RE}
\]

By (H3), \(f\) is real analytic near
\(\tilde u\in\operatorname{relint}(\calU)\), hence it satisfies the KL
inequality there. Choose a KL neighborhood \(U\), a level \(\eta>0\), and a
desingularizer \(\varphi\). Shrinking \(\rho>0\) if necessary, assume
\(\bar B(\tilde u,2\rho)\subset U\cap\operatorname{relint}(\calU)\).

If \(f(u^{(t_0)})=f(\tilde u)\) for some \(t_0\), then
\eqref{eq:generic_SD} makes the sequence stationary from \(t_0\) onward, and
the capture conclusion is immediate. We therefore assume below that
\(f(u^{(t)})>f(\tilde u)\) for every \(t\).

Since \(\tilde u\) is an accumulation point, there is a subsequence
\(u^{(t_j)}\to\tilde u\). 
Along this subsequence,
\(\|\Delta_{t_j+1}\|\to0\),
\(f(u^{(t_j)})\to f(\tilde u)\), and
\(\varphi(f(u^{(t_j)})-f(\tilde u))\to0\). 
Hence we may choose
\(t_1=t_j\) large enough so that \(u^{(t_1)}\in B(\tilde u,\rho)\),
\(u^{(t_1+1)}\in B(\tilde u,2\rho)\),
\(0<f(u^{(t_1)})-f(\tilde u)<\eta\), and
\begin{equation}
    \|u^{(t_1)}-\tilde u\|
    +2\|\Delta_{t_1+1}\|
    +\frac ba\varphi(f(u^{(t_1)})-f(\tilde u))
    <2\rho .
    \tag{C}
    \label{eq:generic_capture_condition}
\end{equation}

We now prove the KL capture. Suppose that
\(u^{(t)},u^{(t+1)}\in\bar B(\tilde u,2\rho)\). Since
\(f(u^{(t+1)})>f(\tilde u)\), the KL inequality applies at \(u^{(t+1)}\).
Moreover, \(\|\Delta_{t+1}\|>0\): otherwise \eqref{eq:generic_RE} would give
\(\operatorname{dist}(0,\partial f(u^{(t+1)}))=0\), contradicting the KL
inequality because \(f(u^{(t+1)})>f(\tilde u)\).
Combining KL with \eqref{eq:generic_RE}, the concavity of \(\varphi\), and
the global sufficient decrease \eqref{eq:generic_SD} gives
\[
    \|\Delta_{t+2}\|^2
    \le
    \frac ba\|\Delta_{t+1}\|
    \big(
        \varphi(f(u^{(t+1)})-f(\tilde u))
        -
        \varphi(f(u^{(t+2)})-f(\tilde u))
    \big).
\]
The AM--GM inequality then yields
\[
    2\|\Delta_{t+2}\|
    \le
    \|\Delta_{t+1}\|
    +\frac ba
    \big(
        \varphi(f(u^{(t+1)})-f(\tilde u))
        -
        \varphi(f(u^{(t+2)})-f(\tilde u))
    \big).
    \tag{K}
    \label{eq:generic_KL_step}
\]

Let
\[
    T^*:=\sup\{T\ge t_1+1:
    u^{(s)}\in\bar B(\tilde u,2\rho)
    \text{ for all }t_1\le s\le T\}.
\]
Then \(T^*\ge t_1+1\). If \(T^*<+\infty\), summing
\eqref{eq:generic_KL_step} for \(t=t_1,\ldots,T^*-1\) gives
\[
    \sum_{s=t_1+1}^{T^*+1}\|\Delta_s\|
    \le
    2\|\Delta_{t_1+1}\|
    +\frac ba\varphi(f(u^{(t_1+1)})-f(\tilde u))
    \le
    2\|\Delta_{t_1+1}\|
    +\frac ba\varphi(f(u^{(t_1)})-f(\tilde u)).
\]
Therefore, by \eqref{eq:generic_capture_condition},
\[
    \|u^{(T^*+1)}-\tilde u\|
    \le
    \|u^{(t_1)}-\tilde u\|
    +
    \sum_{s=t_1+1}^{T^*+1}\|\Delta_s\|
    <2\rho,
\]
contradicting the maximality of \(T^*\). Hence \(T^*=+\infty\), the whole
tail remains in \(\bar B(\tilde u,2\rho)\), and as \(T^*=\infty\) in
the same estimate gives 
\(
\sum_{t\ge t_1}\|u^{(t+1)}-u^{(t)}\|<+\infty\).

Thus \(u^{(t)}\) is Cauchy and converges to \(\tilde u\), as a subsequence was already shown to converge to \(\tilde u\). Since
\(\xi^{(t+1)}\to0\), \(u^{(t+1)}\to\tilde u\), and
\(f(u^{(t+1)})\to f(\tilde u)\), the closedness of the limiting-subdifferential \parencite[Prop.~8.7]{rockafellar1998variational}
graph yields \(0\in\partial f(\tilde u)\).

It remains to record the value rate. If \(f(u^{(t_0)})=f(\tilde u)\) for some
\(t_0\), then \eqref{eq:generic_SD} makes the sequence stationary from \(t_0\)
onward. Otherwise, after the KL capture, the tail satisfies the standard
sufficient-decrease and relative-error assumptions in a fixed KL neighborhood.
Since \(f\) is analytic near \(\tilde u\), the KL inequality can be taken in
power form; equivalently, for some \(\theta\in[\tfrac12,1)\) and \(c>0\),
\(
    \operatorname{dist}(0,\partial f(u))
    \ge c\big(f(u)-f(\tilde u)\big)^\theta
\)
near \(\tilde u\). Applying the standard KL rate estimate
\parencite[Thm.~4]{frankel2015splitting} to
\(r_t:=f(u^{(t)})-f(\tilde u)\) gives a linear rate when
\(\theta=\tfrac12\), and
\(
    r_t=\mathcal O\!\left(t^{-1/(2\theta-1)}\right)
\)
when \(\theta>\tfrac12\). In particular, \(r_t=\mathcal O(1/t)\).

We now prove the boundary alternative. If
\(\omega\subseteq\operatorname{rbd}(\calU)\) and \((v^{(t)})\) had a bounded
subsequence, then, up to extraction, \(v^{(t_j)}\to\bar v\in E\). By (H1),
\(u^{(t_j)}=\nabla h^*(v^{(t_j)})\to\nabla h^*(\bar v)\in
\operatorname{relint}(\calU)\), contradicting
\(\omega\subseteq\operatorname{rbd}(\calU)\). Hence
\(\|v^{(t)}\|\to+\infty\).

Finally, in the capture case, the tail of \((u^{(t)})\) lies in a compact
subset of \(\operatorname{relint}(\calU)\), where \(\nabla h\) is Lipschitz.
Thus \(v^{(t)}=\nabla h(u^{(t)})\to\nabla h(\tilde u)\), and finite length of
\((u^{(t)})\) transfers to finite length of \((v^{(t)})\).
\end{proof}

\subsection{Proof of Theorem~\ref{thm:convergence:speed}}
\label{sec:proofthm:convergence:speed}

\thmconvergence*

To prove Theorem~\ref{thm:convergence:speed}, we apply Theorem~\ref{thm:generic_primal_convergence} with $\mathcal{U}=\bar{\mathcal{M}}$, the identifiable moment $\bar{\nu}$ a primal variable $u$, the identifiable parameter $\bar{w}$ dual variable $v$, the aggregate regularizer $\Omega_{\bar{\mathcal{M}}}$ as Bregman potential $h$, and $f_\kappa$ as $f$. 
By Theorem~\ref{theo:proximalPointOperator}, the sequence $\bar{\nu}^{(t)}$ generated by Algorithm~\ref{eq:alternating_product} matches the primal trajectory of the Bregman proximal point algorithm on $f_\kappa$, while the dual sequence matches $\bar{w}^{(t)}$. To apply Theorem~\ref{thm:generic_primal_convergence}, we must check that the generic conditions (H1)-(H4) hold for $\Omega_{\bar{\mathcal{M}}}$ and $f_\kappa$ under the assumptions of Theorem~\ref{thm:convergence:speed}. Moreover, we show that the dual sequence inherits the convergence properties of the primal one.
For every $i\in[N]$ write
$\Delta_i:=\Delta^{\calY(x_i)}$, $Y_i:=Y(x_i)$,
$\Omega_i:=\Omega_{\Delta_i}$, $\gamma_i:=\gamma(x_i,\xi_i)$, and
$A_i:=\phi_iY_i$, so that $Y_i^\top\phi_i^\top w=A_i^\top w$. 

\paragraph{Step 1: Verification of (H1), (H2) and (H4).} 

We first note that \(f_\kappa\), defined in
\eqref{eq:definitionOfFkappa}, satisfies the standing assumptions of
Theorem~\ref{thm:generic_primal_convergence}.

By~\eqref{eq:Gkappa}, \(G_\kappa\) is the infimal projection of a
lower-semicontinuous function over the compact set \(\Delta_\otimes\)
under a continuous linear constraint; it is therefore finite and lower
semicontinuous on \(\bar{\calM}\).
Moreover, Proposition~\ref{prop:barOmegaProperties} shows that
\(\Omega_{\bar{\calM}}\) is a lower-semicontinuous convex function with
domain equal to the polytope \(\bar{\calM}\); it is therefore continuous
relative to \(\bar{\calM}\) by~\parencite[Theorem~10.2]{Rockafellar+1970}.
Hence \(f_\kappa=G_\kappa-\kappa\Omega_{\bar{\calM}}\) is a proper
lower-semicontinuous function with domain \(\bar{\calM}\).

From Proposition~\ref{prop:barOmegaProperties}, $\dom(\Omega_{\bar \calM}) = \bar \calM$, and the Fenchel conjugate of our Bregman potential is $\Omega_{\bar{\mathcal{M}}}^*(\bar{w}) = \bar{F}(\bar{w}) = \frac{1}{N} \sum_{i=1}^N \Omega_{\calC_i}^*(\phi_i^\top J_{\Pi_{\bar\calW}} \bar{w})$. Because each $\Omega_{\calC_i}^*$ is twice continuously differentiable by specific assumption ($A_1$), their finite sum after linear precomposition, \(\bar F\), is also twice continuously differentiable. 
We now verify that $\nabla^2 \bar{F}(\bar{w}) \succ 0$ on $\bar{\calW}$. For any $u \in \bar{\calW}$, we have:
\[
    u^\top \nabla^2 \bar{F}(\bar{w}) u = \frac{1}{N} \sum_{i=1}^N (\phi_i^\top u)^\top \nabla^2 \Omega_{\calC_i}^*(\phi_i^\top J_{\Pi_{\bar\calW}} \bar{w}) (\phi_i^\top u) \ge 0.
\]
By assumption ($A_1$), $\nabla^2 \Omega_{\calC_i}^* \succ 0$ in the direction of $V_i$. Thus, the sum equals zero if and only if $\phi_i^\top u \in V_i^\perp$ for all $i \in [N]$. If $u \in \bar{\calW}$ satisfies this equality, then for any $v = \frac{1}{N}\sum_{i=1}^N \phi_i \theta_i \in \bar{\calW}$ (with $\theta_i \in V_i$), we have $\langle u, v \rangle = \frac{1}{N}\sum_{i=1}^N \langle \phi_i^\top u, \theta_i \rangle = 0$. This implies $u \in \bar{\calW} \cap \bar{\calW}^\perp = \{0\}$. Therefore, $\nabla^2 \bar{F} \succ 0$ on $\bar{\calW}$, which gives (H1). 
Furthermore, ($A_2$) gives that each $\nabla \Omega_i^*$ is Lipschitz continuous. Because \(\nabla\bar F\) is a finite sum of linear compositions of these maps, $\nabla \Omega_{\bar{\calM}}^* = \nabla \bar{F}$ is also Lipschitz continuous on $\bar{\calW}$, giving (H2). (H4) follows directly from Theorem~\ref{theo:proximalPointOperator}, where $\bar \nu^{(t)} = \nabla\Omega^*_{\bar\calM}(\bar w^{(t)}) \in \textrm{rel int}(\bar \calM)$ for all $t \geq 0$.

\paragraph{Step 2: Verification of (H3).}
We must show that $f_\kappa(\bar{\nu}) = G_\kappa(\bar{\nu}) - \kappa \Omega_{\bar{\calM}}(\bar{\nu})$ is real analytic on $\operatorname{relint}(\bar{\calM})$. 

We introduce the shifted dual potential
\begin{equation}\label{eq:shifted_dual_potential}
    \tilde{\bar{F}}(\bar{w}) := \frac{1}{N}\sum_{i=1}^N \Omega_i^*(A_i^\top J_{\Pi_{\bar\calW}} \bar{w} - \gamma_i/\kappa).
\end{equation}
Assumption ($A_3$) ensures that the maps $\theta \mapsto \Omega_i^*(Y_i^\top \theta - c)$ are real analytic for any constant shift $c$. Precomposing these with the linear mapping $\bar{w} \mapsto \phi_i^\top J_{\Pi_{\bar\calW}} \bar{w}$ preserves analyticity, meaning both $\bar{F}$ and $\tilde{\bar{F}}$ are real analytic on $\bar{\calW}$.

By definition of $f_\kappa$ in~\eqref{eq:definitionOfFkappa} and $ G_\kappa$ in~\eqref{eq:Gkappa}, we have $f_\kappa = G_\kappa - \kappa \Omega_{\bar{\calM}}$. Using Fenchel duality,
 \begin{align*}
    G_\kappa^*(\lambda) 
    = \sup_{q_\otimes \in \Delta_\otimes} \frac{1}{N} \sum_{i=1}^N \Big[ \langle A_i^\top \lambda, q_i \rangle - \langle \gamma_i, q_i \rangle - \kappa \Omega_i(q_i) \Big] 
    = \frac{\kappa}{N} \sum_{i=1}^N \Omega_i^*\Big(\frac{A_i^\top \lambda - \gamma_i}{\kappa}\Big).
 \end{align*}
Evaluating this at $\lambda = \kappa \bar{w}$ yields $G_\kappa^*(\kappa \bar{w}) = \kappa \tilde{\bar{F}}(\bar{w})$. Since $\tilde{\bar{F}}$ is real analytic, so is $G_\kappa^*$. As verified in Step 1, $\nabla^2 \bar{F}$ (and analogously by the shifted part of ($A_1$), $\nabla^2 \tilde{\bar{F}}$) are strictly positive definite, meaning their gradients have everywhere-nonsingular Jacobians.
By the analytic inverse function theorem
\parencite[I.B.7]{gunning2022analytic}, their respective conjugates $\Omega_{\bar{\calM}} = \bar{F}^*$ and $G_\kappa = (G_\kappa^*)^*$ are real analytic on $\operatorname{relint}(\bar{\calM})$. Consequently, their linear combination $f_\kappa = G_\kappa - \kappa \Omega_{\bar{\calM}}$ is real analytic, which gives (H3).

\paragraph{Step 3: Transfer of the primal guarantees to the dual iterates.}
We start with a lemma that transfers the convergence guarantees of the primal sequence to the dual one.
\begin{lem}[Dual identities and value transfer]\label{lem:value_transfer}
For every \(\lambda\),
\[
G_\kappa^*(\lambda)
=
\frac{\kappa}{N}\sum_{i=1}^N
\Omega_i^*\left(\frac{A_i^\top J\lambda-\gamma_i}{\kappa}\right)
=
\kappa\tilde{\bar F}(\lambda/\kappa).
\]
Moreover, for every \(t\ge0\),
\(
f_\kappa(\bar\nu^{(t+1)})
\le
\underline{S_{\Delta,N}}(\bar w^{(t)})
\le
f_\kappa(\bar\nu^{(t)}).
\)
Consequently, if
\(\bar\nu^{(t)}\to\bar\nu^*\in\operatorname{relint}(\bar{\calM})\) and
\(\bar w^*=\nabla\Omega_{\bar{\calM}}(\bar\nu^*)\), then
\(\underline{S_{\Delta,N}}(\bar w^*)=f_\kappa(\bar\nu^*)\), and any value rate
for \(f_\kappa(\bar\nu^{(t)})-f_\kappa(\bar\nu^*)\) transfers to
\(\underline{S_{\Delta,N}}(\bar w^{(t)})-\underline{S_{\Delta,N}}(\bar w^*)\).
\end{lem}
\begin{proof}
The formula for \(G_\kappa^*\) follows by eliminating the affine image variable
in~\eqref{eq:Gkappa}: 
\(G_\kappa^*(\lambda)=\sup_{q_\otimes\in\Delta_\otimes}
\tfrac1N\sum_i\{\langle A_i^\top J\lambda-\gamma_i,q_i\rangle
-\kappa\Omega_i(q_i)\}\), and the product structure of
\(\Delta_\otimes\) gives the stated sum of conjugates.
Using \(f_\kappa=G_\kappa-\kappa\Omega_{\bar{\calM}}\), Fenchel duality, and
the identity \(G_\kappa^*(\kappa\bar w)=\kappa\tilde{\bar F}(\bar w)\), we get
\[
\min_{\bar\nu\in\bar{\calM}}
\{f_\kappa(\bar\nu)+\kappa\calL_{\Omega_{\bar{\calM}}}(\bar w^{(t)},\bar\nu)\}
=
\underline{S_{\Delta,N}}(\bar w^{(t)}).
\]
Evaluating the minimum at
\(\bar\nu^{(t)}=\nabla\Omega_{\bar{\calM}}^*(\bar w^{(t)})\) gives the upper
bound, since the Fenchel--Young loss vanishes there. Moreover,
\(\calL_{\Omega_{\bar{\calM}}}(\bar w^{(t)},\bar\nu)
=D_{\Omega_{\bar{\calM}}}(\bar\nu\mid\bar\nu^{(t)})\), so the minimum is the
proximal objective in~\eqref{eq:proximal_point_f_kappa}, attained at
\(\bar\nu^{(t+1)}\). Positivity of the Bregman divergence gives the lower
bound. Since \(f_\kappa(\bar\nu^{(t)})\) decreases to \(f_\kappa(\bar\nu^*)\), the
sandwich implies
\(
\underline{S_{\Delta,N}}(\bar w^{(t)})\to f_\kappa(\bar\nu^*).
\)
Moreover,
\(\bar w^{(t)}=\nabla\Omega_{\bar{\calM}}(\bar\nu^{(t)})\to\bar w^*\), and
\(\underline{S_{\Delta,N}}\) is continuous. Hence
\(\underline{S_{\Delta,N}}(\bar w^*)=f_\kappa(\bar\nu^*)\). Finally,
\(
0\le
\underline{S_{\Delta,N}}(\bar w^{(t)})-\underline{S_{\Delta,N}}(\bar w^*)
\le
f_\kappa(\bar\nu^{(t)})-f_\kappa(\bar\nu^*),
\)
so any upper rate for the primal value gap transfers to the dual value gap.
\end{proof}

In the escape case, Theorem~\ref{thm:generic_primal_convergence} gives
\(\omega\subseteq\operatorname{rbd}(\bar{\calM})\) and
\(\|\bar w^{(t)}\|\to+\infty\). Since \(\bar{\calM}\) is compact and
\(\operatorname{rbd}(\bar{\calM})\) is closed, this is equivalent to
\(
\operatorname{dist}\big(\bar\nu^{(t)},\operatorname{rbd}(\bar{\calM})\big)
\to0.
\)

We now consider the confined case. The generic theorem gives
\(\bar\nu^{(t)}\to\bar\nu^*\in\operatorname{relint}(\bar{\calM})\) with finite
length. Since
\(\bar w^{(t)}=\nabla\Omega_{\bar{\calM}}(\bar\nu^{(t)})\), and
\(\nabla\Omega_{\bar{\calM}}\) is Lipschitz on a compact neighborhood of
\(\bar\nu^*\), the sequence \((\bar w^{(t)})\) also converges to
\(\bar w^*:=\nabla\Omega_{\bar{\calM}}(\bar\nu^*)\) with finite length.

By \eqref{eq:definitionOfFkappa} and~\eqref{eq:Gkappa},
\(f_\kappa=G_\kappa-\kappa\Omega_{\bar{\calM}}\). Since
\(\bar\nu^*\in\operatorname{relint}(\bar{\calM})\), the exact subdifferential
sum rule gives
\(0\in\partial f_\kappa(\bar\nu^*)\) iff
\(\kappa\bar w^*\in\partial G_\kappa(\bar\nu^*)\). By Fenchel duality and
differentiability of \(G_\kappa^*\), this is equivalent to
\(\bar\nu^*=\nabla G_\kappa^*(\kappa\bar w^*)\). Moreover,
Proposition~\ref{prop:barOmegaProperties} gives
\(\bar\nu^*=\nabla\Omega_{\bar{\calM}}^*(\bar w^*)=\nabla\bar F(\bar w^*)\),
while Lemma~\ref{lem:value_transfer} gives
\(\nabla G_\kappa^*(\kappa\bar w^*)=\nabla\tilde{\bar F}(\bar w^*)\). Hence
\(\nabla\bar F(\bar w^*)=\nabla\tilde{\bar F}(\bar w^*)\), which is exactly
\(\nabla\underline{S_{\Delta,N}}(\bar w^*)=0\).

\qed

\subsection{Proposed regularizers satisfy (A1)--(A3)}\label{subsec:proposed-regularizers-satisfy-assumptions}

We now show that our two main examples of regularization fall into the convergence framework above, thus proving Proposition~\ref{prop:regsatisfyconv}. 
The sparse-perturbation case requires the following two lemmas, which are of independent interest.
The only nonstandard point is real analyticity of the Gaussian-smoothed support function.

\begin{lem}[Gaussian non-degeneracy]
\label{lem:condition:a1:sparse}
$\nabla^2F_{\varepsilon,\calC}\succ0$ over $V$, the direction of
$\operatorname{aff}(\calC)$. More generally, for every continuous $c:\calC\to\bbR$, the shifted perturbed max $F_{\varepsilon,\calC,c}(\theta):=\bbE[\max_{y\in\calC}\{(\theta+\varepsilon\bfZ)^\top y-c(y)\}]$, where $\bfZ\sim \calN(O,I_d)$, has $\nabla^2F_{\varepsilon,\calC,c}\succ0$ over $V$.
\end{lem}
\begin{proof}
By \textcite[Prop.~3.1]{berthetLearningDifferentiablePertubed2020} (adapting
\textcite[Lemma~1.5]{abernethyPerturbationTechniquesOnline2016}),
$\nabla^2F_{\varepsilon,\calC}(\theta)=\frac1\varepsilon\bbE[y^*(\theta+\varepsilon\bfZ)\bfZ^\top]$
with $y^*(\theta)\in\argmax_{y\in\calC}\theta^\top y$ (any fixed measurable
selection; the tie set where the argmax is not a singleton is Lebesgue-null).
For $v\in V\setminus\{0\}$, $\|v\|=1$, decomposing $\bfZ=\bfr v+\bfZ_\perp$
with $\bfr\sim\calN(0,1)$ independent of $\bfZ_\perp$ reduces
$v^\top\nabla^2F_{\varepsilon,\calC}(\theta)v$ to
$\frac1\varepsilon\bbE_{\bfZ_\perp}\bbE_\bfr[g(\bfr)\bfr]$ with
$g(r):=v^\top y^*(\theta+\varepsilon\bfZ_\perp+\varepsilon rv)$ non-decreasing
in $r$ by monotonicity of the subdifferential of the support function
$F(\theta):=\max_{y\in\calC}\theta^\top y$
\parencite[Thm.~24.8]{Rockafellar+1970}; since $\bbE[\bfr]=0$ this makes
$\bbE_\bfr[g(\bfr)\bfr]\ge0$ pointwise in $\bfZ_\perp$, strictly so unless
$g$ is a.e.\ constant, which is excluded since $v\ne0$ and $V$ has
dimension $\ge1$ give distinct limits of $g$ at $\pm\infty$. The same argument applies to $F_{\varepsilon,\calC,c}$ by replacing the support function with the convex function $\theta\mapsto\max_{y\in\calC}\{\theta^\top y-c(y)\}$; the subdifferential is still monotone, and the linear term $\varepsilon rv^\top y$ dominates the bounded shift $c(y)$ as $r\to\pm\infty$.
\end{proof}

\begin{proposition}[Real analyticity of the perturbed max]
\label{prop:realanalytic}
Let $\varepsilon>0$, let $\calC\subset\bbR^d$ be non-empty compact, and let
$c:\calC\to\bbR$ be continuous. Then the shifted perturbed max, with  $\bfZ\sim \calN(O,I_d)$,
\[
    F_{\varepsilon,\calC,c}(\theta)
    :=
    \bbE\big[\max_{y\in\calC}\{(\theta+\varepsilon\bfZ)^\top y-c(y)\}\big]
\]
is real analytic on $\bbR^d$. In particular, taking $c\equiv0$,
$F_{\varepsilon,\calC}$ is real analytic on $\bbR^d$.
\end{proposition}

\begin{proof}
Define, 
for $\theta\in\bbC^d$,
\[
\tilde F_{\varepsilon,\calC}(\theta)
=
\frac1{(2\pi)^{d/2}\varepsilon^d}
\int_{\bbR^d}
F(u)
\exp\!\left(-\frac{(u-\theta)^\top(u-\theta)}{2\varepsilon^2}\right)
\,du, 
\]
with $F(u):=\max_{y\in\calC}\{u^\top y-c(y)\}$.
Note that, for $\theta\in\bbR^d$, $\tilde F_{\varepsilon,\calC}(\theta)=F_{\varepsilon,\calC,c}(\theta)$,
by the change of
variable $u=\theta+\varepsilon z$.

We first establish a compact-uniform domination bound. Since $\calC$ is
compact and $c$ continuous, $|F(u)|\le R\|u\|+\|c\|_\infty\le C_0(1+\|u\|)$
with $R:=\max_{y\in\calC}\|y\|<+\infty$, $\|c\|_\infty:=\max_{y\in\calC}|c(y)|<+\infty$,
and $C_0:=\max\{R,\|c\|_\infty\}$.
Let $K\subset\bbC^d$ be compact and write $\theta=\alpha+i\beta$, with
$\alpha,\beta\in\bbR^d$. On $K$, there exist constants
$A_K,B_K<+\infty$ such that $\|\alpha\|\le A_K$ and $\|\beta\|\le B_K$ for
all $\theta=\alpha+i\beta\in K$. 
By bilinearity, and since $u-\alpha,\beta\in\bbR^d$,
$(u-\theta)^\top(u-\theta)=\|u-\alpha\|^2-\|\beta\|^2
-2i(u-\alpha)^\top\beta .$
Hence
\[
\left|
\exp\!\left(-\frac{(u-\theta)^\top(u-\theta)}{2\varepsilon^2}\right)
\right|
=
\exp\!\left(\frac{\|\beta\|^2}{2\varepsilon^2}\right)
\exp\!\left(-\frac{\|u-\alpha\|^2}{2\varepsilon^2}\right).
\]
Moreover,
\(
\|u-\alpha\|^2
=
\|u\|^2-2u^\top\alpha+\|\alpha\|^2
\ge
\frac12\|u\|^2-\|\alpha\|^2
\ge
\frac12\|u\|^2-A_K^2.
\)
Therefore, uniformly for $\theta\in K$,
\(
\left|
F(u)
\exp\!\left(-\frac{(u-\theta)^\top(u-\theta)}{2\varepsilon^2}\right)
\right|
\le
C_K(1+\|u\|)
\exp\!\left(-\frac{\|u\|^2}{4\varepsilon^2}\right)
\)
for some finite constant $C_K$. The right-hand side is integrable on
$\bbR^d$. Hence $\tilde F_{\varepsilon,\calC}$ is well defined and continuous
on $\bbC^d$ by dominated convergence.

Fix $u\in\bbR^d$. Note that the map
\(
\theta\mapsto
\exp\!\left(-\frac{(u-\theta)^\top(u-\theta)}{2\varepsilon^2}\right)
\)
is entire on $\bbC^d$.
Thus, for $j\in[d]$, we have
\[
\frac{\partial}{\partial \theta_j}
\exp\!\left(-\frac{(u-\theta)^\top(u-\theta)}{2\varepsilon^2}\right)
=
\frac{u_j-\theta_j}{\varepsilon^2}
\exp\!\left(-\frac{(u-\theta)^\top(u-\theta)}{2\varepsilon^2}\right).
\]
On the same compact set $K$, let
$M_K:=\sup_{\theta\in K}\|\theta\|<+\infty$. Since
\(
|u_j-\theta_j|
\le
\|u\|+\|\theta\|
\le
\|u\|+M_K,
\)
the differentiated integrand is dominated by
\(
C_K'(1+\|u\|)(\|u\|+M_K)
\exp\!\left(-\frac{\|u\|^2}{4\varepsilon^2}\right),
\)
which is integrable on $\bbR^d$. Differentiation under the integral sign is
therefore justified, and
\[
\frac{\partial \tilde F_{\varepsilon,\calC}}{\partial \theta_j}(\theta)
=
\frac1{(2\pi)^{d/2}\varepsilon^d}
\int_{\bbR^d}
F(u)
\frac{u_j-\theta_j}{\varepsilon^2}
\exp\!\left(-\frac{(u-\theta)^\top(u-\theta)}{2\varepsilon^2}\right)
\,du.
\]
By Mattner's theorem on complex differentiation under the integral
\parencite{mattner2001complex}, applied coordinatewise, 
$\tilde F_{\varepsilon,\calC}$ is separately holomorphic. Since the compact-uniform
domination above also gives joint continuity, Osgood's lemma \parencite[I.A.2]{gunning2022analytic} 
yields joint
holomorphy on $\bbC^d$.

Consequently, $\tilde F_{\varepsilon,\calC}$ admits a convergent complex
Taylor expansion around every point of $\bbC^d$. Restricting this expansion
around each real point $\theta_0\in\bbR^d$ to real increments gives a
convergent real power-series expansion of $F_{\varepsilon,\calC,c}$ around
$\theta_0$. Hence $F_{\varepsilon,\calC,c}$ is real analytic on $\bbR^d$.

Finally, this yields (A\(_3\)) for the sparse perturbation: since
$\Omega^*_{\Delta}(s)=\bbE[\max_{y\in\calY}\{s_y+\varepsilon(Y^\top\bfZ)_y\}]$,
the composed map satisfies
$\Omega^*_{\Delta}(Y^\top\theta-c)=\bbE[\max_{y\in\calY}\{(\theta+\varepsilon\bfZ)^\top y-c_y\}]=F_{\varepsilon,\calY,c}(\theta)$,
real analytic for every fixed $c\in\bbR^{\calY}$ by the above.
\end{proof}

We can now prove Proposition~\ref{prop:regsatisfyconv}.

\regularizationconv*
\begin{proof}
\emph{Negentropy.} $\Omega_i^*$ is the log-sum-exp function; its gradient,
the softmax map, has Jacobian equal to a categorical covariance matrix,
hence operator norm at most $1$, so the softmax map is $1$-Lipschitz,
giving (A\(_2\)). The same Hessian is positive semidefinite with kernel
$\operatorname{span}(\one)$; since
$\Omega^*_{\calC_i}=\Omega^*_{\Delta_i}(Y_i^\top\cdot)$, for any direction
$v$ we get
$v^\top\nabla^2\Omega^*_{\calC_i}(\theta)v
=(Y_i^\top v)^\top\nabla^2\Omega^*_{\Delta_i}(Y_i^\top\theta)(Y_i^\top v)$,
which vanishes if and only if $Y_i^\top v\in\operatorname{span}(\one)$,
i.e., $\langle v,y-y'\rangle=0$ for all $y,y'\in\calY(x_i)$, i.e.,
$v\in V_i^\perp$. Hence $\nabla^2\Omega^*_{\calC_i}\succ0$ on $V_i$, which
is (A\(_1\)). The same kernel computation with $Y_i^\top\theta-c$ in place of $Y_i^\top\theta$ gives the shifted part of (A\(_1\)). Log-sum-exp is real analytic on
$\bbR^{\calY(x_i)}$, so for every $c$ the composed map
$\theta\mapsto\operatorname{logsumexp}(Y_i^\top\theta-c)$ is real analytic
(affine precomposition), which is (A\(_3\)).

\emph{Sparse perturbation.} (A\(_2\)) is
Proposition~\ref{prop:strongConvexitySparsePerturbation}; (A\(_1\)) is the
Gaussian non-degeneracy Lemma~\ref{lem:condition:a1:sparse} above including its shifted statement, and from Proposition~\ref{prop:realanalytic}, which gives
the required \(C^2\)-regularity; for
(A\(_3\)), the generalized Proposition~\ref{prop:realanalytic} gives, for
every $c\in\bbR^{\calY(x_i)}$, that
$\Omega^*_{\Delta_i}(Y_i^\top\theta-c)=F_{\varepsilon,\calY(x_i),c}(\theta)$
is real analytic.
\end{proof}

\section{Numerical experiments}\label{sec:appendix_numerical_exp}
 \subsection{Two-stage minimum weight spanning tree: problem and implementation details}\label{sec:appendix_spanning_tree_details}

    \paragraph{Problem formulation.}
    Let $G = (V, E)$ be an undirected graph, and let $\bfxi \in \Xi$ be an exogenous noise. For each edge $e \in E$ and scenario $\xi \in \Xi$, we denote by $c_e$ the scenario-independent first-stage cost of building $e$, and by $d_{e}(\xi)$ the scenario-dependent second-stage cost. The contextual stochastic two-stage minimum weight spanning tree problem is:

    \begin{equation}\label{eq:first_stage_min_weight_spanning_tree_md}
        \min_{\pi\in \calH}  \mathbb{E}_{\bfx, \bfy \sim \pi(\cdot | \bfx)} \bigg[\sum_{e \in E} c_e \bfy_e + \mathbb{E}_{\bfxi | \bfx} \big[ Q\big(\bfy; \bfxi\big)\big] \bigg],
    \end{equation}

    where the hypothesis class~$\calH$ is defined as
    \begin{align*}
        \calH &:=\{\pi \, | \, \pi(\cdot | x) \in \Delta^{\calY}\}, \quad \text{where} \\
        \calY &:= \{y \in \{0,1\}^{E} \, | \, \forall Y, \emptyset \subsetneq Y \subseteq V,  \sum\limits_{e \in E(Y)}y_e \leq |Y| - 1\}.
    \end{align*}
    The combinatorial space~$\calY$ corresponds to the first-stage solutions (forests on $G$). The second-stage recourse function $Q$ is defined as:

    \begin{subequations}
    \label{eq:second_stage_min_weight_spanning_tree_md}
    \begin{align}
    Q\big(y; \xi\big) := \min_z \quad & \sum_{e \in E}  d_{e}(\xi)z_e, \\
    \text{s.t.} \quad & \sum_{e \in E} y_e + z_{e} = |V| - 1,\label{eq:mtspt_first_tree_constraint_md}\\
    & \sum\limits_{e \in E(Y)}y_e + z_{e} \leq |Y| - 1, \quad  & \forall Y, \emptyset \subsetneq Y \subseteq V, \label{eq:mtspt_second_tree_constraint_md} \\
    & z_{e} \in \{0, 1\}, & \quad  \forall e \in E.\label{eq:mtspt_integer_second_stage_md}
    \end{align}
    \end{subequations}

    In problem~\eqref{eq:first_stage_min_weight_spanning_tree_md}, we look for \enquote{good} first-stage forests to be completed (in the second stage) into spanning trees, given a context realization $x \in \calX$.

    \begin{remark}
        In practice, the structure of the graph (and thus $\calY(x)$) can depend on the variable~$x$; we intentionally simplify the notation above.
    \end{remark}

    \paragraph{Policy model and algorithm.}

    \begin{figure}[h]
        \centering
        \includegraphics[width=0.9\textwidth]{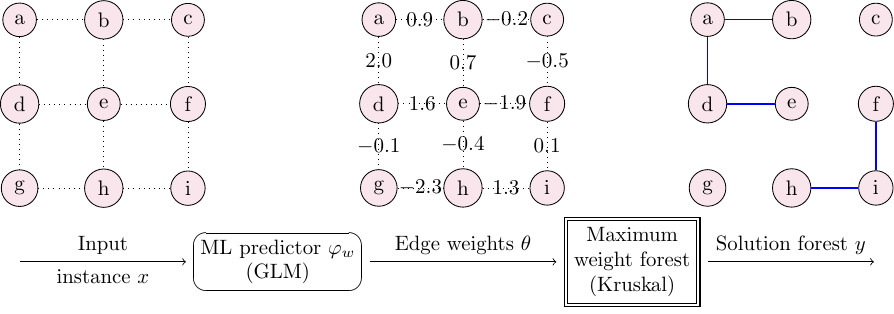}
        \caption{Two-stage minimum spanning tree: neural network pipeline.}\label{fig:two-stage-spanning-tree-pipeline_md}
    \end{figure}

    Each edge~$e$ of an instance (context)~$x$ is encoded by a feature vector~$\phi(x, e)$ detailed in the appendix to \textcite{dalleLearningCombinatorialOptimization2022}. The feature matrix is given as input to a generalized linear model (GLM)~$\varphi_w$, which predicts edge weights~$\theta_e$, illustrated in Figure~\ref{fig:two-stage-spanning-tree-pipeline_md}. We use the predicted edge weights~$\theta$ as the objective of a maximum weight forest problem; this problem is solved by Kruskal's algorithm, which plays the role of \texttt{oracle} in Algorithm~\ref{algo:primal_dual}.

    After tuning hyperparameters, we run $\texttt{nb\_iterations}=50$ outer iterations with perturbation scale $\varepsilon=10^{-4}$, and estimate expectations with $\texttt{nb\_samples}=20$ perturbation samples of $\bfZ$. The coordination step is trained for $\texttt{nb\_epochs}=30$ epochs with Adam initialized at $\texttt{lr\_init}=10^{-5}$.

\subsection{Contextual assortment: instance parameters and implementation details}
\label{sec:appendix_assortment_details}

\paragraph{Instance parameters.}
Each product feature vector has $d_p=3$ coordinates, consisting of price and two non-price features, and customer contexts have dimension $d_c=5$.
We report results over $5$ instance seeds and $5$ data seeds.
Each instance seed fixes the product catalog and preference matrix $B\in\mathbb{R}^{d_p\times d_c}$: product features are drawn uniformly on $[1,10]$, the first coordinate is interpreted as price, the price row of $B$ is fixed to $-0.7$, and the remaining entries are drawn uniformly on $[0,1]$.
For each data seed, we draw $100$ training customers, $100$ validation customers and evaluate on $1000$ independent test customers, with customer-context coordinates drawn from $\mathcal{N}(0,1)$.

\paragraph{Policy model and hyperparameters.}
The covariates consist of customer-context features and product features augmented by quantile-based summaries of the assortment instance.
The predictive model is a learned bilinear score: product-side and customer-side covariates are each mapped through a single \emph{tanh} layer into latent embeddings, whose interaction is parameterized by a learned bilinear form and passed through a \emph{softplus} link. 
Unless otherwise stated, we use $\kappa=1$, perturbation scale $\varepsilon=1$, $\texttt{nb\_samples}=10$, $\texttt{epochs}=30$, and learning rate $10^{-4}$.
In the scaling experiments, we report checkpoints at $T=50$ and $T=100$; validation-selected policies are denoted by $\bar{\pi}^T$.

\paragraph{Transfer experiment.}
For Table~\ref{tab:contextual_assortment_transfer}, the source and target catalogs both contain $10$ products.
We vary the share of target-catalog products that were unseen during training.
For SAA and kNN-SAA, prescriptions are restricted to old products that remain available in the target catalog.
The learned policies $\pi_{w^{(1)}}$ and $\bar{\pi}^{50}$ are evaluated on the full target catalog.

  \begin{table}[!ht]
  \centering
  \begin{tabular}{lcccccc}
    \toprule
    \textbf{Method} & \textbf{0\%} & \textbf{20\%} & \textbf{40\%} & \textbf{60\%} & \textbf{80\%} & \textbf{100\%} \\
    \midrule
    SAA & 7.14 & 7.03 & 7.03 & 6.18 & 5.41 & 0.00 \\
    kNN-10 & 7.27 & \textbf{7.35} & \textbf{7.14} & 6.24 & 5.40 & 0.00 \\
    $\pi_{w^{(1)}}$ & 4.54 & 4.56 & 3.85 & 4.20 & 3.77 & 3.63 \\
    $\bar{\pi}^{50}$ & \textbf{7.42} & 6.69 & 6.03 & \textbf{6.50} & \textbf{6.27} & \textbf{5.77} \\
    \bottomrule
  \end{tabular}
  \caption{Transfer benchmark as the share of unseen target-catalog products increases. Entries are mean test values pooled over $25$ instances.}
  \label{tab:contextual_assortment_transfer}
\end{table}
\end{appendices}
\end{document}